\date{}
\title{\bfseries\papertitle}
\author[1]{Aadirupa Saha$^*$}
\author[2]{Shubham Gupta$^*$}
\affil[1]{Microsoft Research, New York City, US}
\affil[2]{IBM Research, Orsay, France.}
\theoremstyle{plain}
\newtheorem{theorem}{Theorem}[section]
\newtheorem{lemma}[theorem]{Lemma}
\theoremstyle{definition}
\theoremstyle{remark}
\newtheorem{remark}[theorem]{Remark}
\newcommand{\N}{{\mathbb N}}
\renewcommand{\P}{{\mathbf P}}
\newcommand{\1}{{\mathbf 1}}
\newcommand{\cA}{{\mathcal A}}
\newcommand{\cI}{{\mathcal I}}
\newcommand{\cT}{{\mathcal T}}
\newcommand{\tO}{\tilde{O}}
\newcommand{\nstdb}{{\texttt{NSt-DB}}}
\newcommand{\sv}{\textrm{Switching-Variation}}
\newcommand{\cv}{\textrm{Continuous-Variation}}
\newcommand{\p}{{\mathbf p}}
\newcommand{\sm}{\setminus}
\DeclareMathOperator*{\argmax}{argmax}
\def \papertitle{{Optimal and Efficient Dynamic Regret Algorithms for \\ Non-Stationary Dueling Bandits}}
\newcommand{\sreg}{\textrm{SR}}
\newcommand{\dreg}{\textrm{DR}}
\newcommand{\dbreg}{\textrm{DBR}}
\newcommand{\algp}{\texttt{DEX3.P}}
\newcommand{\algs}{\texttt{DEX3.S}}
\newcommand{\algdexp}{\texttt{Borda-DEX3.S}}
\newcommand{\rand}{\texttt{RAND}}
\newcommand{\rex}{\texttt{REX3}}
\newcommand{\multiline}[1]{%
  \begin{tabularx}{\dimexpr\linewidth-\ALG@thistlm}[t]{@{}X@{}}
    #1
  \end{tabularx}
}
\newcommand{\bfone}[1]{\mathbf{1}\left\lbrace #1 \right\rbrace}
\newcommand{\bfp}{\mathbf{p}}
\newcommand{\bfP}{\mathbf{P}}
\newcommand{\rmE}[1]{\mathrm{E}\left[ #1 \right]}
\newcommand{\rmP}[1]{\mathrm{P}\left( #1 \right)}
\newcommand{\calE}{\mathcal{E}}
\newcommand{\calF}{\mathcal{F}}
\newcommand{\calH}{\mathcal{H}}
\newcommand{\calN}{\mathcal{N}}
\newcommand{\calV}{\mathcal{V}}
\newcommand{\bbI}{\mathbb{I}}
\newcommand{\bbN}{\mathbb{N}}
\newcommand{\bbR}{\mathbb{R}}
\newcommand{\bmpi}{\bm{\pi}}
\newcommand{\bmmu}{\bm{\mu}}
\newcommand{\abs}[1]{\left\vert #1 \right\vert}
\newcommand{\rmTV}[2]{\mathrm{TV}\left( #1, #2 \right)}
\newcommand{\rmKL}[2]{\mathrm{KL}\left( #1, #2 \right)}
\begin{document}

\maketitle

\def\thefootnote{*}\footnotetext{Equal contribution.}

\begin{abstract}
We study the problem of \emph{dynamic regret minimization} in $K$-armed Dueling Bandits under non-stationary or time-varying preferences. This is an online learning setup where the agent chooses a pair of items at each round and observes only a relative binary `win-loss' feedback for this pair sampled from an underlying preference matrix at that round. 
We first study the problem of static-regret minimization for adversarial preference sequences and design an efficient algorithm with $\tO(\sqrt{KT})$ regret bound.
We next use similar algorithmic ideas to propose an efficient and provably optimal algorithm for dynamic-regret minimization under two notions of non-stationarities.
In particular, we show $\tO(\sqrt{SKT})$ and $\tO({V_T^{1/3}K^{1/3}T^{2/3}})$ dynamic-regret guarantees, respectively, with $S$ being the total number of `effective-switches' in the underlying preference relations and $V_T$ being a measure of `continuous-variation' non-stationarity. These rates are provably optimal as justified with matching lower bound guarantees.
Moreover, our proposed algorithms are flexible as they can be easily `blackboxed'  to yield dynamic regret guarantees for other notions of dueling bandits regret, including condorcet regret, best-response bounds, and Borda regret.
The complexity of these problems have not been studied prior to this work despite the practicality of non-stationary environments.
Extensive simulations corroborate our results.
\end{abstract}

\vspace{-20pt}

\section{Introduction}
\label{sec:intro}

The problem of \emph{Dueling-Bandits} has gained much attention in the machine learning community \citep{Yue+12,Zoghi+14RCS,Zoghi+15}. This is an online learning framework that generalizes the standard multi-armed bandit (MAB) setting  \citep{Auer+02,TS12,CsabaNotes18,Audibert+10,Kalyanakrishnan+12} by querying relative preference feedback of actively chosen item pairs instead of an absolute feedback for a single item. More formally, in dueling bandits, the learning proceeds in rounds: At each round, the learner selects a pair of arms and observes stochastic preference feedback of the winner of a comparison (duel) between the selected arms. The objective of the learner is to minimize the regret with respect to a (or a set of) `best' arm(s) in hindsight.   Towards this, several algorithms have been proposed \citep{Ailon+14,Zoghi+14RUCB,Komiyama+15,Adv_DB}. Due to the inherent exploration-vs-exploitation tradeoff of the learning framework and several advantages of preference feedback \citep{Busa14survey,Yue+09}, many real-world applications can be modeled as dueling bandits, including movie recommendations, retail management, search engine optimization, and job scheduling, etc.

While most existing works have studied the dueling bandit problem in a stochastic setting where the underlying preferences are assumed to be fixed over time, it is often unjustified in reality. Specifically, depending on the time, season, demographics, occasions, etc., the underlying preferences may often change over time. E.g., a recommendation system must consider that a user's preferred list of items (videos) depends on factors such as the time of the day and seasonal needs. Similarly, the relevance of a search query changes with the location and market influence.
It is therefore more reasonable to assume  adversarial feedback models for dueling bandits where, unlike the stochastic setting, the underlying preference matrices can vary over time. This naturally leads to the following question.

\textbf{Q.1:} \emph{Can we design an efficient algorithm that is competitive under adversarial changes in the preference model?}

Unfortunately, except for some handful of attempts \cite{Ailon+14, Adv_DB}, which are restricted to a very special class of linear-score based preference structures (see details in Sec. \ref{sec:rel}), none of the existing dueling bandits algorithms can guarantee an order optimal $O(\sqrt{T})$ regret rate against any fixed benchmark (competitor) for any arbitrary adversarial sequence of preference matrices.

More importantly, drawing inspiration from the necessity of analyzing non-stationary preference models, a more ambitious and practical regret objective is to evaluate the performance of learning algorithms in terms of how quickly they adapt to a new environment, i.e., where they are actually compared against a dynamic benchmark based on the underlying preferences instead of a single fixed benchmark considered above (see Q.1). This problem is often studied as dynamic regret minimization in standard multiarmed bandits (MAB) literature \citep{besbes+14,besbes+15,auer+02N,luo+18,luo+19,wei21,zhu1,zhu2}, where the learner gets to observe the absolute reward upon pulling an arm. The next, and perhaps a daunting, question to ask is

\textbf{Q.2:} \emph{How can we design efficient and optimal dynamic regret algorithms for the dueling bandits setup?}

Surprisingly, despite dueling bandits being a significantly well studied extension of MAB, existing literature still lacks efficient algorithms for dynamic regret minimization in this framework. To the best of our knowledge, \emph{we are the first to extend the stochastic $K$-armed dueling bandit problem to non-stationary preferences} (a.k.a. \nstdb).
\begin{figure}[h]
	\begin{center}
	\resizebox{0.6\linewidth}{!}{
		\begin{tabular}{l}
			\toprule
			\textbf{Parameters. }Item set: $[K]$ \\
			\hline
			\textbf{For} $t = 1, 2, \ldots, T$, the learner: \\
		~ Chooses $(k_{+1, t}, k_{-1, t}) \in [K]\times [K]$  \\
		~ Observes $o_t := \bbI\{k_{+1, t} \succ k_{-1, t}\} \sim \text{Ber}(P_t(k_{+1, t}, k_{-1, t}))$  \\
		~ Incurs $r_t := \frac{1}{2} \left(P_t(i_t^*, k_{+1, t}) + P_t(i_t^*, k_{-1, t}) - 1\right)$;
	$i_t^* \in [K]$\\
			\bottomrule
		\end{tabular}}
	\end{center}
	\vspace{-10pt}
  \caption{Setting of Non-Stationary DB (\nstdb)} 
  \label{fig:setting}
\end{figure}

\textbf{Problem definition (informal).}
Let $[K]$ be the item (action) set and $\P_t \in [0, 1]^{K \times K}$ denote the underlying preference matrix at time $t$. The learner is allowed to choose two actions $\{k_{+1, t}, k_{-1, t}\} \subseteq [K]$ at time $t$, upon which it observes an outcome $o_t \sim \text{Ber}(P_t(k_{+1, t}, k_{-1, t}))$ and incurs a regret $r_t := \frac{1}{2} \left(P_t(i_t^*, k_{+1, t}) + P_t(i_t^*, k_{-1, t}) - 1\right)$ w.r.t. any `good arm' $i_t^* \in [K]$ (see Fig. \ref{fig:setting}). The goal is to provide optimal dynamic-regret algorithms under different `measures of non-stationarity' (see details in Sec. \ref{sec:prob}).

\textbf{Contributions.} Our specific contributions are:

\begin{itemize}[nosep]

\item As motivated in Q.1, we first study the `static dueling bandit regret' (see Sec. \ref{sec:prob} for definition) w.r.t. any fixed arm $i^* \in [K]$ (or more generally any distribution $\bmpi^* \in \Delta_K$) for adversarially chosen preferences. Our algorithm (Alg. \ref{alg:3p}) guarantees an $\tO(\sqrt{KT})$ high probability regret bound (Thm. \ref{theorem:3p_bound}, Rem. \ref{rem:exp3p}, Sec. \ref{section:3p}). 

This is one of our primary contributions that shows how  techniques from standard EXP3 algorithm \citep{auer+02N} can be borrowed to establish high-probability $\tO(\sqrt{KT})$ regret bounds for dueling bandits against any fixed benchmark. Sec. \ref{section:3p_analysis} details the main ideas behind the analysis. The result is not only useful on its own but also serves as the basic building block for our subsequent non-stationary algorithms.

\item Towards addressing the \nstdb\, problem (see Q.2), our second (and main) contribution lies in analyzing \emph{Non-Stationary Dueling Bandits}. We design an algorithm (Alg. \ref{alg:3s}) with high probability dynamic-regret guarantees for two measures of non-stationarities: `\sv' $S$ and `\cv' $V_T$ (Definitions in Sec. \ref{sec:prob}). More precisely, the upper bound on the dynamic regret of our proposed algorithm are shown to be $O(\sqrt{SKT})$ and $O({V_T^{1/3}K^{1/3}T^{2/3}})$, respectively (see Thm. \ref{theorem:3s_bound}, \ref{theorem:3s_boundcv}, Sec. \ref{section:3s}).

\item The regret optimality of our algorithms is justified by proving matching lower bound guarantees for dynamic-regret under both the notions of non-stationarities (Thm. \ref{theorem:lower_bound_switch}, \ref{theorem:lower_bound_continuous}, Sec. \ref{section:lower_bound}).

\item We further show the versatility of the proposed algorithm, which can be modularized and applied as a blackbox recipe to more general forms of dynamic regrets, including one that we analyzed above as well as Borda score based dynamic regret (Sec. \ref{sec:ub_borda}).

\item Finally, our theoretical analysis is corroborated with detailed empirical evaluations to compare the performance of the proposed techniques over state-of-the-art dueling bandit algorithms (Sec. \ref{section:experiments}).
\end{itemize}

\subsection{Related Works}
\label{sec:rel}

The problem of regret minimization in stochastic multi-armed bandits (MAB) is very well-studied in the online learning literature \citep{auer+02N,CsabaNotes18,Audibert+10} where the learner pulls a single arm per round and sees a noisy sample of the absolute reward feedback for this arm. A well-motivated generalization of MAB framework is extending it to non-stationary environments where the underlying reward model changes with time \citep{besbes+15,auer+02N} and the goal of the learner is to minimize the \emph{dynamic regret} w.r.t. the `best-performing' benchmark (arm) at each round. This problem is well-studied in the online learning community for the standard $K$-item MAB \citep{besbes+14,besbes+15,auer+02N}, and the non-stationary contextual bandits \citep{luo+19,wei21} under different non-stationarities \citep{luo+18}.

On the other hand, the relative feedback variants of the stochastic MAB problem have seen a widespread resurgence in the form of Dueling Bandits over the last two decades. Several algorithms have been proposed to address regret minimization in dueling bandits for different notions of `best performing' arms depending on the underlying preference models, e.g. Condorcet, Borda, and Copeland winner \citep{Yue+12,Busa_pl,Zoghi+14RCS,Zoghi+15,DTS,CDB,SG19,SG20}. Recent works have also extended the pairwise preference feedback model to include subset-wise preferences \citep{Sui+17,Brost+16,SGrank18,SGwin18,Ren+18}.

\citet{Ailon+14} first studied the dueling bandit problem for adversarial setup and introduced the famous sparring EXP3 idea (without regret guarantees) which was later used in many follow-up works for proving regret guarantees on dueling bandits in adversarial environments \cite{Adv_DB,ADB,Sui+17}. However, unlike us, \emph{they do not consider the dynamic regret objective} w.r.t. a time-varying benchmark $i_t^*$.  
The settings in \citet{Ailon+14} and \cite{Adv_DB} are restricted to only utility based preferences where each item has a scalar utility at each time. This entails a complete ordering between the items at each step, which only covers a small subclass of the general $[K]\times [K]$ preference matrices. Additionally, \citet{Adv_DB} assume that the feedback includes not only the winner but also the difference in the utilities between the winning and losing item, which is more similar to MAB feedback and far from our $0/1$ one bit preference feedback. \citet{ADB} did consider the dueling bandit setup for general adversarial preferences, but they measure (static) regret in terms of \emph{Borda-scores} for which the fundamental regret performance limit is shown to be $\Omega(K^{1/3}T^{2/3})$. This measure of regret is very different from our preference-based regret objective (see Sec. \ref{sec:prob}). \citet{dudik+15} adapted sparring-EXP3 to the contextual dueling bandit setup where the learner competes against the best policy from a given policy class $\Pi$. Their proposed sparring-EXP4 algorithm is shown to give a $\tO(\sqrt{KT \log |\Pi|})$ rate, however the algorithm is computationally intractable for arbitrary context sets.
\emph{To the best of our knowledge, we are the first to design efficient and optimal dynamic regret algorithms for `non-stationary dueling bandits'.}

\section{Problem Formulation}
\label{sec:prob}

\textbf{Notations. } Let $[n] := \{1,2, \ldots n\}$ for any $n \in \N$. 
Given a set $\cA$, for any $a,b \in \cA$, we denote by $a \succ b$ the event that $a$ is preferred over $b$. $\Delta_{K}:= \{\p \in [0,1]^K \mid \sum_{i = 1}^K p_i = 1, p_{i} \ge 0, \forall i \in [K]\}$ denotes the $K$-simplex.

\textbf{Setting.}     
We assume a decision space of $K$ arms denoted by $\cA:= [K]$. At each round $t$, the task of the learner is to select a pair of actions $(k_{+1, t}, k_{-1, t}) \in [K]\times [K]$, upon which a preference feedback $o_t \sim \text{Ber}(P_t(k_{+1, t}, k_{-1, t}))$ is revealed to the learner according to the underlying preference matrix $\P_t \in [0,1]^{K \times K}$ (chosen adversarially), such that the probability of $k_{+1, t}$ being preferred over $k_{-1, t}$ is given by $\rmP{o_t = 1} \coloneqq \rmP{k_{+1, t} \succ k_{-1, t}} = P_t(k_{+1, t},k_{-1, t})$, and hence $\rmP{o_t = 0} \coloneqq \rmP{k_{-1, t} \succ k_{+1, t}} = 1 - P_t(k_{+1, t},k_{-1, t})$.

\textbf{Measures of Non-Stationarities.} We measure the dynamic regret w.r.t. the following notions of non-stationarity:

\begin{enumerate}
\item \textbf{\sv} $S:= \sum_{t=2}^T \1[\P_t \not= \P_{t-1}]$.
\item \textbf{\cv}: $$V_T:= \sum_{t=2}^T \max_{(a,b) \in [K]\times [K]}|P_t(a,b) - P_{t-1}(a,b)|.$$
\end{enumerate}

\textbf{Objective (Static and Dynamic Regret).} Assuming the learner selects the duel $(k_{+1, t},k_{-1, t})$ at round $t$, one can measure its performance w.r.t. a single fixed arm $i^* \in [K]$\footnote{Note that this is equivalent to maximizing the expected regret w.r.t. any fixed distribution $\bmpi^* \in \Delta_K$, i.e. when $i^* \sim \bmpi^*$. This is because the regret objective is linear in the entries of $\bmpi^*$, so the maximizer $\bmpi^*$ is always one hot.} in hindsight by calculating the Static Regret:

\small
\begin{align}
	\label{eq:sreg}
	\sreg_T := \max_{i^* \in [K]} \sum_{t=1}^T \frac{1}{2} \left( P_t(i^*, k_{+1, t}) + P_t(i^*, k_{-1, t}) -1 \right). 
\end{align}
\normalsize

We further study a \emph{stronger notion} of \emph{dynamic regret} with respect to a time varying benchmark sequence $j^T = (j_1, j_2, \dots, j_T)$, defined as:

\small
\begin{align}
	\label{eq:dreg}
	\dreg_T(j^T) := \sum_{t=1}^T \frac{1}{2}\left( P_t(j_t, k_{+1, t}) + P_t(j_t, k_{-1, t}) -1 \right). 
\end{align}
\normalsize
Let $r_t(j) = \frac{1}{2} (P_t(j, k_{+1, t}) + P_t(j, k_{-1, t}) -1)$. The learner competes against the `best-response' of $(k_{+1, t}, k_{-1, t})$ at time $t$ if $j_t = j_t^* = \argmax_{j \in [K]} r_t(j)$. On the other hand, it is easy to see that if we set $j_t^* = j^* = \argmax_{j \in [K]} \sum_{t = 1}^T r_t(j)$, we have $\dreg_T(j^T) = 	\sreg_T$. We now proceed to describe our algorithms for Static and Dynamic regret minimization.

\begin{algorithm}
  \caption{\algp{}}
  \label{alg:3p}
  \begin{algorithmic}[1]
    \STATE \textbf{Input:} $\eta > 0$, $\gamma, \beta \in (0, 1)$
    \STATE Set $W_{+1, 1}(k) = W_{-1, 1}(k) = 1$ for all $k \in [K]$
    \FOR {$t = 1, \dots, T$}
      \FOR {$k \in [K]$ and $i \in \{+1, -1\}$}
        \STATE $p_{i, t}(k) = (1 - \gamma) \frac{W_{i, t}(k)}{\sum_{j=1}^K W_{i, t}(j)} + \frac{\gamma}{K}$
      \ENDFOR
      \STATE For $i \in \{+1, -1\}$, sample $k_{i, t}$ from the distribution $(p_{i, t}(1), \dots, p_{i, t}(K))$
      \STATE Observe preference feedback $o_t(k_{+1, t}, k_{-1, t})$
      \STATE Compute $\hat{g}_{i, t}(k)$ for $i \in \{+1, -1\}$ and $k \in [K]$ using \eqref{eq:hat_g}
      \FOR {$k \in [K]$ and $i \in \{+1, -1\}$}
        \STATE $W_{i, t + 1}(k) = W_{i, t}(k) e^{\eta \hat{g}_{i, t}(k)}$
      \ENDFOR
    \ENDFOR
  \end{algorithmic}
  \end{algorithm}

\section{\algp{} and Static Regret}
\label{section:3p}

This section describes our base algorithm \algp{} and a high-probability \emph{Static Regret} upper bound on its regret with respect to any fixed benchmark $i^* \in [K]$ (see $\sreg_T$, Sec. \ref{sec:prob}).


\subsection{\algp{} Algorithm}
\label{section:3p_algorithm}

Algorithm \ref{alg:3p} presents the pseudocode for \algp{}. It pits two players against each other, each parallely running its own copy of the standard EXP3 algorithm \citep{auer+02N}. This idea was introduced as the sparring technique by \citet{Ailon+14}, but without any regret guarantees (see Sec. \ref{sec:rel}). We refer to these players as the \textit{row player} and the \textit{column player}, and denote them by $+1$ and $-1$, respectively. 

As in EXP3, these players use weights, $\{W_{+1, t}(k)\}_{k=1}^K$ and $\{W_{-1, t}(k)\}_{k=1}^K$, to compute distributions over $K$ arms, $\bfp_{+1, t} = (p_{+1, t}(1), \dots, p_{+1, t}(K))$ and $\bfp_{-1, t} = (p_{-1, t}(1), \dots, p_{-1, t}(K))$ (Line 5), from which their actions are sampled (Line 7). We use $k_{+1, t}$ and $k_{-1, t}$ to denote the arms chosen by the row and column player at time $t$. The players then update their weights using the observed preference feedback $o_t(k_{+1, t}, k_{-1, t})$ that identifies the winner of a duel between the chosen arms.

The row player considers $r_{+1, t} = o_t(k_{+1, t}, k_{-1, t})$ as its reward at time $t$ whereas the column player uses $r_{-1, t} = 1 - o_t(k_{+1, t}, k_{-1, t})$. The weight update for each player is similar to EXP3 (Line 11) and uses the following expression as an importance-weighted estimate of the rewards. For $i \in \{+1, -1\}$,
\begin{equation}
  \label{eq:hat_g}
  \hat{g}_{i, t}(k) = \begin{cases}
    (r_{i, t} + \beta) / p_{i, t}(k) & \text{if } k = k_{i, t} \\
    \beta / p_{i, t}(k) & \text{otherwise.}
  \end{cases}
\end{equation}
The quantity $\beta \in (0, 1)$ in the expression above prevents $\hat{g}_{i, t}(k)$ from becoming too large when $p_{i, t}(k)$'s are small. This is crucial for obtaining a high-probability regret guarantee (see proof of Lemma \ref{lemma:3p_bound_row_player} in Appendix \ref{appendix:3p_bound_row_player_proof}).

As both players try to maximize their cumulative rewards, they start putting a higher weight on arms that are more likely to win against the choice made by the opponent. An equilibrium is reached when both players choose the optimal arm. The quantity $\gamma \in (0, 1)$ in Line 5 ensures a minimum exploration of all arms and encourages the players to reach the optimal equilibrium strategy. Next, we establish a high-probability static regret guarantee for \algp{}.


\subsection{Analysis: Static Regret Guarantee}
\label{section:3p_analysis}

Given the actions $\{k_{+1, t}, k_{-1, t}\}_{t=1}^T$ taken by the players, the total regret with respect to fixed action $j \in [K]$ is defined as:
\begin{equation*}
    \sreg_T(j) = \sum_{t=1}^T \frac{P_t(j, k_{+1, t}) + P_t(j, k_{-1, t}) - 1}{2}.
\end{equation*}

 The \emph{central idea} in our analysis is to decompose the total regret into the regret incurred by the row ($\sreg_T^{+1}(j)$) and the column ($\sreg_T^{-1}(j)$) player, defined as
\begin{align*}
    \sreg_T^{+1}(j) &= \sum_{t=1}^T P_t(j, k_{-1, t}) - P_t(k_{+1, t}, k_{-1, t}) \\
    \sreg_T^{-1}(j) &= \sum_{t=1}^T P_t(j, k_{+1, t}) - P_t(k_{-1, t}, k_{+1, t}).
\end{align*}
It is easy to see that $\sreg_T(j) = \frac{1}{2} \left( \sreg_T^{+1}(j) + \sreg_T^{-1}(j) \right)$. The next two lemmas establish high-probability upper bounds on $\sreg_T^{+1} \coloneqq \max_{j \in [K]} \sreg_T^{+1}(j)$ and $\sreg_T^{-1} \coloneqq \max_{j \in [K]} \sreg_T^{-1}(j)$.

\begin{restatable}{lemma}{algprowplayer}
  \label{lemma:3p_bound_row_player}
  Let $\eta = \frac{1}{2}\sqrt{\frac{\ln K}{KT}}$, $\beta = \sqrt{\frac{\ln K}{KT}}$, and $\gamma = 2\eta K \geq (1 + \beta) \eta K$. With probability at least $1 - \delta$, 
  $$\sreg_T^{+1} \coloneqq \max_{j\in [K]} \sreg_T^{+1}(j) = O(\sqrt{KT} \ln (K/\delta)).$$
\end{restatable}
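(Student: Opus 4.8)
The plan is to observe that the row player's update rule is precisely the EXP3.P algorithm of \citet{auer+02N} run against an adaptively generated gain sequence, and then to (i)~establish an EXP3.P-type high-probability regret bound for the realized rewards and (ii)~translate that bound into one on the preference quantity $\sreg_T^{+1}(j)$. For the reduction, fix the filtration $\mathcal{F}_{t-1}$ generated by everything before round $t$ (so $\bfp_{+1,t},\bfp_{-1,t}$ are $\mathcal{F}_{t-1}$-measurable), and couple the feedback: given $\mathcal{F}_{t-1}$ and the column player's independent draw $k_{-1,t}$, let nature draw $x_t(i)\sim\mathrm{Ber}\!\left(P_t(i,k_{-1,t})\right)$ for every $i\in[K]$ and let the learner observe $o_t=x_t(k_{+1,t})=r_{+1,t}$. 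Under this coupling, Algorithm~\ref{alg:3p} restricted to the row player (weight update in Line~11 with estimator \eqref{eq:hat_g}, mixing in Line~5) is exactly EXP3.P on the gain vectors $x_t\in[0,1]^K$, which are chosen adaptively but independently of $k_{+1,t}$; hence it suffices to bound $G_{\max}-\sum_{t}x_t(k_{+1,t})$ where $G_{\max}=\max_{j}\sum_{t}x_t(j)$.

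For step~(i) I would re-run the exponential-weights potential argument. Writing $q_t(i)=W_{+1,t}(i)/\sum_{\ell}W_{+1,t}(\ell)$ and using $e^{x}\le 1+x+x^{2}$ (legitimate since $\gamma\ge(1+\beta)\eta K$ forces $\eta\,\hat{g}_{+1,t}(i)\le 1$ via $\hat{g}_{+1,t}(i)\le(1+\beta)K/\gamma$), telescoping $\ln\!\big(\sum_{\ell}W_{+1,t}(\ell)\big)$ and lower-bounding by the $j$-th term yields
\[
  \sum_{t}\hat{g}_{+1,t}(j)\ \le\ \frac{\ln K}{\eta}\ +\ \frac{1}{1-\gamma}\sum_{t}\big(r_{+1,t}+\beta K\big)\ +\ \eta\sum_{t}\sum_{i}q_t(i)\,\hat{g}_{+1,t}(i)^{2}.
\]
Three pieces must then be controlled in high probability, and each is what the additive bias $\beta$ buys: (a)~$\hat{g}_{+1,t}(j)$ does not under-estimate $\sum_t x_t(j)$, i.e.\ $\sum_t x_t(j)\le\sum_t\hat{g}_{+1,t}(j)+\beta^{-1}\ln(K/\delta)$ for all $j$ simultaneously; (b)~the second-order term is $O(\eta KT)$, using $\sum_i p_{+1,t}(i)\hat{g}_{+1,t}(i)^2\le(1+\beta)\sum_i\hat{g}_{+1,t}(i)$ together with a high-probability bound $\sum_t\sum_i\hat{g}_{+1,t}(i)=O(KT)+\beta^{-1}\ln(1/\delta)$; and (c)~$\sum_t r_{+1,t}$ stays within $O(\sqrt{T\ln(1/\delta)})$ of $\sum_t P_t(k_{+1,t},k_{-1,t})$ (plain Azuma--Hoeffding). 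For (a) and (b) the workhorse is a Freedman/Bennett-type MGF inequality: for a martingale difference $Z_t$ with $Z_t\le 1$ and conditional second moment bounded by $c/p_{+1,t}(\cdot)$, one has $\E\!\left[\exp\!\big(\lambda Z_t-\lambda^{2}c/p_{+1,t}(\cdot)\big)\mid\mathcal{F}_{t-1}\right]\le 1$ whenever $\lambda\le 1$; iterating with $\lambda=\beta$ converts the random variance contribution into $\beta\sum_t p_{+1,t}(\cdot)^{-1}$, which is harmless because the calibration $\gamma=2\eta K=\beta K$ makes $\beta\sum_i p_{+1,t}(i)^{-1}\le\beta K^2/\gamma=K$ deterministically. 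Substituting $\eta=\tfrac12\sqrt{\ln K/(KT)}$, $\beta=\sqrt{\ln K/(KT)}$, $\gamma=\sqrt{K\ln K/T}$ collapses $\ln K/\eta$, $\gamma T$, $\eta KT$ and every $\beta^{-1}\ln(K/\delta)$ term to $O(\sqrt{KT}\ln(K/\delta))$, giving $G_{\max}-\sum_t x_t(k_{+1,t})=O(\sqrt{KT}\ln(K/\delta))$.

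For step~(ii), telescope for each $j$:
\[
  \sreg_T^{+1}(j)=\underbrace{\sum_{t}\!\big(P_t(j,k_{-1,t})-x_t(j)\big)}_{A_1(j)}\ +\ \underbrace{\sum_{t}\!\big(x_t(j)-x_t(k_{+1,t})\big)}_{\le\,G_{\max}-\sum_t x_t(k_{+1,t})}\ +\ \underbrace{\sum_{t}\!\big(x_t(k_{+1,t})-P_t(k_{+1,t},k_{-1,t})\big)}_{A_2}.
\]
Both $A_1(j)$ and $A_2$ are sums of bounded (in $[-1,1]$) martingale differences, so Azuma--Hoeffding---with a union bound over the $K$ candidate $j$'s for $A_1$---gives $|A_1(j)|,|A_2|=O(\sqrt{T\ln(K/\delta)})$; combining with step~(i) and maximizing over $j$ yields $\sreg_T^{+1}=\max_j\sreg_T^{+1}(j)=O(\sqrt{KT}\ln(K/\delta))$ after rescaling $\delta$ across the $O(1)$ high-probability events.

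I expect the main obstacle to be item~(b): controlling the second-order term $\eta\sum_t\sum_i q_t(i)\hat{g}_{+1,t}(i)^2$ with high probability rather than merely in expectation. A direct Freedman bound on $\sum_t\sum_i\hat{g}_{+1,t}(i)$ has per-step conditional variance of order $K^2/\gamma\approx K/\beta$, which injects a spurious $(KT)^{3/4}$ term; the remedy is to apply the biased-estimator concentration (again with $\lambda=\beta$) to the scalar $\sum_i\hat{g}_{+1,t}(i)=r_{+1,t}/p_{+1,t}(k_{+1,t})+\beta\sum_i p_{+1,t}(i)^{-1}$ itself, using $r_{+1,t}^2=r_{+1,t}$ and the exact tuning $\gamma=\beta K$ so the residual bias accumulation is only $O(KT)$. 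Threading the constants of $\eta,\beta,\gamma$ so that all four error terms are simultaneously $O(\sqrt{KT}\,\mathrm{polylog})$ is the part that requires care.
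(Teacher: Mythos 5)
Your proposal is correct and follows essentially the same route as the paper: the row player is treated as EXP3.P against an adaptively chosen gain sequence, the potential (log-sum-of-weights) argument gives the first display you wrote, the bias $\beta$ is used exactly as in the paper's Lemma~\ref{lemma:g_hat_upper_bounds_p_row_player} to get the simultaneous lower bound $\sum_t \hat{g}_{+1,t}(j) \ge \sum_t P_t(j,k_{-1,t}) - \beta^{-1}\ln(2K/\delta)$, and Azuma--Hoeffding handles $\sum_t o_t$ versus $\sum_t P_t(k_{+1,t},k_{-1,t})$. The one place you diverge is item~(b), which you flag as the main obstacle: the paper does not need any concentration for the second-order term. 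It uses the deterministic chain $\sum_k p_{+1,t}(k)\hat{g}_{+1,t}(k)^2 \le (1+\beta)\sum_k \hat{g}_{+1,t}(k) \le (1+\beta) K \max_{k}\sum_t \hat{g}_{+1,t}(k)$, and the hypothesis $\gamma \ge \eta(1+\beta)K$ then lets this term be absorbed into the left-hand side, yielding $(1-2\gamma)\max_j\sum_t\hat{g}_{+1,t}(j) \le \beta KT + \sum_t o_t + \ln K/\eta$ with no extra probabilistic event. Your Freedman-with-$\lambda=\beta$ workaround does go through with the stated tuning (the residual is $O(\eta\beta^{-1}\ln(1/\delta)) = O(\ln(1/\delta))$), but it costs an additional high-probability event and the delicate variance accounting you describe; the absorption trick is the intended shortcut. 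Your extra coupling layer $x_t(i)$ and the resulting $A_1(j)$ term are harmless but also unnecessary, since the concentration in (a) can target $\sum_t P_t(j,k_{-1,t})$ directly by conditioning on $k_{-1,t}$.
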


\begin{restatable}{lemma}{algpcolumnplayer}
  \label{lemma:3p_bound_column_player}
  Let $\eta$, $\beta$, and $\gamma$ be set as in Lemma \ref{lemma:3p_bound_row_player}. With probability at least $1 - \delta$, 
  $$\sreg_T^{-1} \coloneqq \max_{j\in [K]} \sreg_T^{-1}(j) = O(\sqrt{KT} \ln (K/\delta))$$
\end{restatable}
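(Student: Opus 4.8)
The plan is to exploit the near-symmetry between the row and column players. The column player runs exactly the same EXP3-with-implicit-exploration update as the row player, with the sole difference that its reward at round $t$ is $r_{-1,t} = 1 - o_t(k_{+1,t},k_{-1,t})$ rather than $r_{+1,t} = o_t(k_{+1,t},k_{-1,t})$. In particular, $r_{-1,t} \in \{0,1\}$ is still a Bernoulli-valued reward bounded in $[0,1]$, so structurally the setting is identical. Therefore, the first step is to observe that the entire argument in the proof of Lemma~\ref{lemma:3p_bound_row_player} — the potential-function telescoping for the exponential weights, the control of the second-moment term using $\gamma \geq (1+\beta)\eta K$, and the high-probability step that replaces importance-weighted reward estimates $\hat g_{-1,t}$ by true expected rewards via a Freedman/Bernstein-type concentration inequality (this is where the bias term $\beta$ is needed) — goes through verbatim with $+1$ replaced by $-1$. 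This yields, with probability at least $1-\delta$, for every fixed $j \in [K]$,
\begin{equation*}
  \sum_{t=1}^T \E_{k \sim \bfp_{-1,t}}\!\left[ r_{-1,t}(k) \mid k_{+1,t}\right] \;\geq\; \sum_{t=1}^T r_{-1,t}(j) \;-\; O\!\left(\sqrt{KT}\,\ln(K/\delta)\right),
\end{equation*}
where $r_{-1,t}(k)$ denotes the reward the column player would have received by playing $k$ against $k_{+1,t}$, namely $1 - P_t(k_{+1,t}, k)$ in expectation (with $r_{-1,t} = r_{-1,t}(k_{-1,t})$ realized).

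The second step is to translate this EXP3 guarantee into the claimed bound on $\sreg_T^{-1}(j) = \sum_{t=1}^T \big(P_t(j,k_{+1,t}) - P_t(k_{-1,t},k_{+1,t})\big)$. In expectation, the column player's reward from playing arm $k$ against $k_{+1,t}$ is $1 - P_t(k_{+1,t},k) = P_t(k, k_{+1,t})$, using the standard preference-matrix symmetry $P_t(a,b) + P_t(b,a) = 1$. Hence $\E[r_{-1,t}(j)] = P_t(j,k_{+1,t})$ and $\E[\sum_t r_{-1,t}] = \sum_t P_t(k_{-1,t},k_{+1,t})$, so the left-hand side of the EXP3 inequality is exactly $\sum_t P_t(k_{-1,t},k_{+1,t})$ in expectation and the right-hand comparator term is $\sum_t P_t(j,k_{+1,t})$. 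A further concentration step (Azuma–Hoeffding on the martingale difference between realized outcomes $o_t$ and their conditional means, which is already part of the bound-conversion machinery in Lemma~\ref{lemma:3p_bound_row_player}) upgrades these expectations to high-probability statements, giving $\sreg_T^{-1}(j) = O(\sqrt{KT}\ln(K/\delta))$ for each fixed $j$. Taking a union bound over the $K$ choices of $j$ (absorbing the extra $\ln K$ into the logarithmic factor) gives the claimed bound on $\sreg_T^{-1} = \max_{j} \sreg_T^{-1}(j)$.

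The only genuine subtlety — and the one place where the proof is not a literal copy of Lemma~\ref{lemma:3p_bound_row_player} — is bookkeeping the roles of the two players: the column player's distribution $\bfp_{-1,t}$ is fed $k_{+1,t}$ as an exogenously drawn "adversary move," and one must check that $k_{+1,t}$ is conditionally independent of the column player's internal randomness given the history, so that the adversarial-EXP3 template applies with $k_{+1,t}$ playing the role of the adversary's choice. Since $k_{+1,t}$ and $k_{-1,t}$ are drawn independently given $\cF_{t-1}$ (Line 7 of Algorithm~\ref{alg:3p}), this holds, and the adversary faced by the column player is oblivious-looking from the column player's filtration. I expect this conditional-independence check to be the main (though minor) obstacle; everything else is a symmetric reuse of the already-established row-player analysis. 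Consequently the proof can be stated concisely as: "By symmetry of the roles of the two players and the identity $P_t(k, k_{+1,t}) = 1 - P_t(k_{+1,t},k)$, the proof of Lemma~\ref{lemma:3p_bound_row_player} applies mutatis mutandis to the column player, yielding the stated bound."
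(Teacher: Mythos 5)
Your proposal is correct and matches the paper's approach: the paper itself omits this proof, stating only that Lemmas \ref{lemma:3p_bound_row_player} and \ref{lemma:3p_bound_column_player} ``use similar arguments,'' which is precisely the symmetric reuse you describe. Your additional checks --- the identity $P_t(k,k_{+1,t}) = 1 - P_t(k_{+1,t},k)$ and the conditional independence of $k_{+1,t}$ from the column player's randomness given the history --- are exactly the (routine) points that make the mutatis mutandis argument go through.
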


The following regret bound on $\max_{j \in [K]} \sreg_T(j)$ is now a straightforward consequence of the two lemmas above.

\begin{restatable}[Static-Regret of \algp]{theorem}{algpanalysis}
  \label{theorem:3p_bound}
Set $\eta$, $\beta$, $\gamma$ as in Lem. \ref{lemma:3p_bound_row_player}. Then with probability at least $1 - 2\delta$
  $$\max_{j \in [K]} \sreg_T(j)[\algp{}] = O\left(\sqrt{KT} \ln \left( \frac{K}{\delta} \right) \right)$$
\end{restatable}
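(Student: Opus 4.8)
The plan is to obtain the theorem as an immediate consequence of the two preceding lemmas, combined through the regret decomposition already recorded in Section~\ref{section:3p_analysis} and a union bound. First I would recall the exact identity $\sreg_T(j) = \frac{1}{2}\big(\sreg_T^{+1}(j) + \sreg_T^{-1}(j)\big)$ valid for every fixed competitor $j \in [K]$; this follows by summing the definitions of $\sreg_T^{+1}(j)$ and $\sreg_T^{-1}(j)$ and using the skew-symmetry $P_t(a,b) + P_t(b,a) = 1$ of the preference matrices, so that the cross terms $P_t(k_{+1,t},k_{-1,t}) + P_t(k_{-1,t},k_{+1,t})$ collapse to $1$ at each round $t$.

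Next I would pass to the worst-case competitor via subadditivity of the maximum. For every $j$, $\sreg_T(j) \le \frac{1}{2}\big(\max_{j'}\sreg_T^{+1}(j') + \max_{j'}\sreg_T^{-1}(j')\big) = \frac{1}{2}\big(\sreg_T^{+1} + \sreg_T^{-1}\big)$, and taking the maximum over $j$ on the left gives $\max_{j \in [K]}\sreg_T(j) \le \frac{1}{2}\big(\sreg_T^{+1} + \sreg_T^{-1}\big)$. This step is the reason the per-player lemmas are stated uniformly over competitors in the first place.

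I would then invoke Lemma~\ref{lemma:3p_bound_row_player} and Lemma~\ref{lemma:3p_bound_column_player}, each of which holds with probability at least $1 - \delta$ under the stated choice of $\eta, \beta, \gamma$, and apply a union bound so that both events hold simultaneously with probability at least $1 - 2\delta$. On that joint event $\sreg_T^{+1} = O(\sqrt{KT}\ln(K/\delta))$ and $\sreg_T^{-1} = O(\sqrt{KT}\ln(K/\delta))$, whence $\max_{j \in [K]}\sreg_T(j) \le \frac{1}{2}\big(O(\sqrt{KT}\ln(K/\delta)) + O(\sqrt{KT}\ln(K/\delta))\big) = O(\sqrt{KT}\ln(K/\delta))$, which is the claimed bound.

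There is no real obstacle at this level: the only points requiring care are the bookkeeping of the two failure probabilities (hence the $2\delta$ rather than $\delta$ in the statement) and the subadditivity-of-max step. The substantive work — the EXP3-style weight/potential analysis together with the additive $\beta$-correction in \eqref{eq:hat_g} that upgrades the in-expectation EXP3 guarantee to a high-probability one, and the control of the corresponding martingale deviation terms — is entirely contained in Lemmas~\ref{lemma:3p_bound_row_player} and~\ref{lemma:3p_bound_column_player}, so the present theorem follows essentially immediately once those are established.
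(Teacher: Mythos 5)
Your proposal is correct and matches the paper's own argument: the paper explicitly presents Theorem~\ref{theorem:3p_bound} as a straightforward consequence of Lemmas~\ref{lemma:3p_bound_row_player} and~\ref{lemma:3p_bound_column_player} via the decomposition $\sreg_T(j) = \frac{1}{2}\left(\sreg_T^{+1}(j) + \sreg_T^{-1}(j)\right)$ and a union bound, exactly as you describe. Your additional care about the skew-symmetry $P_t(a,b)+P_t(b,a)=1$ and the subadditivity-of-max step is sound and fills in the (trivial) details the paper leaves implicit.
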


Lemmas \ref{lemma:3p_bound_row_player} and \ref{lemma:3p_bound_column_player} have been proved in Appendix \ref{appendix:3p_analysis}. \emph{From the perspective of each agent, the actions of other agents, and hence their own rewards, are adaptively chosen by an adversary.} With this insight, one can borrow techniques from the analysis of the standard EXP3 algorithm to establish the high-probability regret bound in Thm. \ref{theorem:3s_bound}.

\begin{remark}
\label{rem:exp3p}
  The regret guarantee above has been written in terms of any fixed (pure) action $j \in [K]$ in hindsight. However, the adversary can also play any fixed distribution over actions $\bmpi^* \in \Delta_K$, and the same regret guarantee will be valid against $\bmpi^*$. Consequently, this implies that our algorithm achieves an $\tO(\sqrt{KT})$ regret w.r.t. the Nash-Equilibrium (a.k.a the Von-Neumann winner \footnote{Given a preference matrix $\P$, a Von-Neumann winner of $\P \in [0,1]^{K \times K}$ is defined to be a probability distribution $\pi \in \Delta_{K}$, such that $\sum_{i=1}^{n} \pi(i)P(i,b) \ge 1/2, ~~\forall b \in [K]$. (see Eqn. $1$, \citep{dudik+15})}) of the aggregated matrix $\frac{1}{T} \sum_{t =1}^T \P_t$, or even against the best response $\bmpi^*:= \arg\max_{\pi \in \Delta_K}\sreg_T(\pi)[\algp]$.
\end{remark}

With the base algorithm in place, we now modify \algp{} to obtain \algs{} and derive high-probability dynamic regret guarantees for this algorithm.


\begin{algorithm}[t]
  \caption{\algs{}}
  \label{alg:3s}
  \begin{algorithmic}[1]
      \STATE \textbf{Input:} $\eta > 0$, $\alpha, \gamma, \beta \in (0, 1)$
      \STATE Set $W_{+1, 1}(k) = W_{-1, 1}(k) = 1$ for all $k \in [K]$
      \FOR {$t = 1, \dots, T$}
        \FOR {$k \in [K]$ and $i \in \{+1, -1\}$}
          \STATE $p_{i, t}(k) = (1 - \gamma) \frac{W_{i, t}(k)}{\sum_{j=1}^K W_{i, t}(j)} + \frac{\gamma}{K}$
        \ENDFOR
        \STATE For $i \in \{+1, -1\}$, sample $k_{i, t}$ from the distribution $(p_{i, t}(1), \dots, p_{i, t}(K))$
        \STATE Observe preference feedback $o_t(k_{+1, t}, k_{-1, t})$
        \STATE Compute $\hat{g}_{i, t}(k)$ for $i \in \{+1, -1\}$ and $k \in [K]$ using \eqref{eq:hat_g}
        \FOR {$k \in [K]$ and $i \in \{+1, -1\}$}
          \STATE \small{$W_{i, t + 1}(k) = W_{i, t}(k) e^{\eta \hat{g}_{i, t}(k)} + e \alpha \sum_{j=1}^K W_{i, t}(j)$}
        \ENDFOR
      \ENDFOR
\end{algorithmic}
\end{algorithm}

\section{\algs{} and Dynamic Regret}
\label{section:3s}

Competing against a fixed action in hindsight is an interesting but modest goal. In this section, we consider the dynamic regret objective  $\dreg_T(j^T)$ defined with respect to \emph{any sequence of actions} $j^T = (j_1, \dots, j_T)$ (see Section \ref{sec:prob}). It is clear that $\dreg_T(j^T)$ can be $O(T)$ in the worst case unless one imposes further assumptions. Sections \ref{section:switch_nonstationarity} and \ref{section:continuous_nonstationarity} respectively assume the `\sv' and `\cv' model of non-stationarity (see Section \ref{sec:prob}) to derive sub-linear dynamic regret guarantees.

To achieve these bounds, we propose \algs{}, a variant of \algp{}. This algorithm uses the same procedure as \algp{}, except at Line 11, where it instead uses the following weight update rule:
\begin{equation*}
  W_{i, t + 1}(k) = W_{i, t}(k) e^{\eta \hat{g}_{i, t}(k)} + e \alpha W_{i, t}.
\end{equation*}
Here, $\alpha \in (0, 1)$ is a user-specified hyper-parameter and $W_{i, t} \coloneqq \sum_{j=1}^K W_{i, t}(j)$. Algorithm \ref{alg:3s} presents this procedure. We next move to the analysis of \algs{} under the two non-stationarity models.


\subsection{Dynamic Regret Analysis of \algs{} under \sv{} $S$}
\label{section:switch_nonstationarity}

As before, we write $\dreg_T(j^T)$ as $\dreg_T(j^T) = \frac{1}{2}(\dreg_T^{+1}(j^T) + \dreg_T^{-1}(j^T))$, where $\dreg_T^{+1}(j^T)$ and $\dreg_T^{-1}(j^T)$ are the dynamic regret incurred by the row and column player with respect to the sequence $j^T$:
\begin{align*}
  \dreg_T^{+1}(j^T) &= \sum_{t=1}^T P_t(j_t, k_{-1, t}) - P_t(k_{+1, t}, k_{-1, t}) \\
  \dreg_T^{-1}(j^T) &= \sum_{t=1}^T P_t(j_t, k_{+1, t}) - P_t(k_{-1, t}, k_{+1, t}).
\end{align*}
We assume that the sequence $j^T$ has at most $S$ switches, i.e., $1 + \abs{\lbrace 1 \leq \ell < T: j_\ell \neq j_{\ell + 1} \rbrace} \leq S$. The result below bounds $\dreg_T(j^T)$ under two conditions, first where $S$ is known to the algorithm apriori and second where it is unknown.

\begin{restatable}[\algs{} regret under \sv{}]{theorem}{algsanalysis}
  \label{theorem:3s_bound}
  Let $j^T = (j_1, \dots, j_T)$ be any arbitrary sequence of actions with $S$ switches. Then the following bounds hold for \algs{} with probability at least $1 - 2\delta$ when $\alpha = \frac{1}{T}$ and $\gamma = 2 \eta K$:
 
 (1) $\dreg_T(j^T)[\algs] = O \left( \sqrt{SKT} \ln \frac{KT}{\delta} \right)$, and

  (2) $\dreg_T(j^T)[\algs] = O \left( S \sqrt{KT} \ln \frac{KT}{\delta} \right)$, 
  by respectively setting $\beta = \eta = \sqrt{\frac{S}{KT}}$, and $\beta = \eta = \frac{1}{\sqrt{KT}}$.
\end{restatable}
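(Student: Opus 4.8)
The approach mirrors the static analysis of Section~\ref{section:3p_analysis}: decompose the dynamic regret into a row‑player and a column‑player term, bound each by a \emph{single‑player} high‑probability tracking‑regret bound, and take a union bound. The only genuinely new ingredient over Lemma~\ref{lemma:3p_bound_row_player} is that the EXP3‑style potential argument must be upgraded to an EXP3.S‑style (``fixed‑share'') argument, matching the additive mixing term $e\alpha W_{i,t}$ in the \algs{} weight update.

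\textbf{Step 1: reduction to two single‑player tracking problems.} As in Section~\ref{section:switch_nonstationarity}, write $\dreg_T(j^T)=\tfrac{1}{2}\big(\dreg_T^{+1}(j^T)+\dreg_T^{-1}(j^T)\big)$. From the row player's point of view, the column player together with the adversarially chosen $\{\P_t\}$ forms an adaptive adversary producing rewards $r_{+1,t}=o_t(k_{+1,t},k_{-1,t})\in[0,1]$ whose conditional mean for arm $k$ is $g_{+1,t}(k):=P_t(k,k_{-1,t})$; thus $\dreg_T^{+1}(j^T)=\sum_t\big(g_{+1,t}(j_t)-g_{+1,t}(k_{+1,t})\big)$, and $\sum_t g_{+1,t}(k_{+1,t})$ differs from the realized $\sum_t r_{+1,t}$ by a bounded martingale sum (Azuma: $O(\sqrt{T\log(1/\delta)})$ with probability $1-\delta$). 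So $\dreg_T^{+1}(j^T)$ is exactly the tracking regret of the Line‑11 update of \algs{}, run against a $[0,1]$‑bounded adaptive adversary, relative to the comparator sequence $j^T$ with at most $S$ constant segments; the column player is symmetric with $g_{-1,t}(k):=P_t(k,k_{+1,t})=1-P_t(k_{+1,t},k)$.

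\textbf{Step 2: single‑player fixed‑share high‑probability tracking bound (the crux).} For one player let $W_t:=\sum_k W_t(k)$ and telescope $\ln(W_{T+1}/W_1)$. For the \emph{upper bound}, the choice $\gamma=2\eta K\ge(1+\beta)\eta K$ guarantees $\eta\hat g_{i,t}(k)\le\eta(1+\beta)K/\gamma\le 1$, so $e^x\le 1+x+x^2$ applies; using $\sum_k p_{i,t}(k)\hat g_{i,t}(k)=r_{i,t}+K\beta$ and $W_t(k)/W_t\le p_{i,t}(k)/(1-\gamma)$ one obtains $\ln(W_{T+1}/W_1)\le\tfrac{\eta}{1-\gamma}\sum_t r_{i,t}+(\text{per-round variance terms})+(\text{a lower-order mixing contribution}\propto\alpha T)$. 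For the \emph{lower bound}, partition $[T]$ into the $\le S$ segments on which $j^T$ is constant, with value $a_b$ on segment $b$: within a segment $W_{t+1}(a_b)\ge W_t(a_b)e^{\eta\hat g_{i,t}(a_b)}$ since the mixing term is nonnegative, and at each segment boundary the mixing term gives $W_{t_b}(a_b)\ge\Omega(\alpha)\,W_{t_b-1}\ge\Omega(\alpha)\,W_{t_b-1}(a_{b-1})$; chaining across segments yields $\ln(W_{T+1}/W_1)\ge\eta\sum_t\hat g_{i,t}(j_t)-O(S\log(K/\alpha))$. Equating the two bounds, dividing by $\eta$, and then invoking the EXP3.P concentration from the proof of Lemma~\ref{lemma:3p_bound_row_player} (applied uniformly over arms and over the subintervals induced by $j^T$) to pass from $\sum_t\hat g_{i,t}(j_t)$ to $\sum_t g_{i,t}(j_t)$, we get, with probability $\ge 1-\delta$,
$$\dreg_T^{i}(j^T)=O\!\Big(\eta KT+\beta KT+\frac{S\log(K/\alpha)}{\eta}+\frac{S\log(KT/\delta)}{\beta}+\sqrt{T\log\tfrac{1}{\delta}}\,\Big).$$
Setting $\alpha=1/T$ and $\beta=\eta=\sqrt{S/(KT)}$ makes every term $O(\sqrt{SKT}\log(KT/\delta))$ (using $1\le S\le T$), giving part~(1); setting instead $\beta=\eta=1/\sqrt{KT}$ — which requires no knowledge of $S$ — makes $S\log(K/\alpha)/\eta=\Theta(S\sqrt{KT}\log(KT))$ the dominant term, giving part~(2).

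\textbf{Step 3 and main obstacle.} Applying Step~2 to the row and column players and taking a union bound (probability $\ge 1-2\delta$), then using $\dreg_T(j^T)=\tfrac{1}{2}(\dreg_T^{+1}(j^T)+\dreg_T^{-1}(j^T))$, finishes the proof (the rate is unchanged by the averaging). The real work is entirely in Step~2: one must run the fixed‑share potential argument and the EXP3.P high‑probability ($\beta$‑bias) argument \emph{simultaneously} — keeping $\eta\hat g_{i,t}(k)\le 1$ via $\gamma\ge(1+\beta)\eta K$, controlling the additive mixing term so it stays lower‑order in the upper bound while yielding a per‑switch price of only $O(\log(K/\alpha))$ in the lower bound, and making the martingale concentration uniform enough (over arms and subintervals) to cover an arbitrary $S$‑segment comparator — all while bookkeeping the $K$, $\eta^{-1}$, $\beta^{-1}$, and $\log(KT/\delta)$ factors so that the two stated parameter choices produce precisely the claimed $\sqrt{SKT}$ and $S\sqrt{KT}$ rates.
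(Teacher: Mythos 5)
Your proposal follows essentially the same route as the paper: the identical row/column decomposition, the same fixed-share potential argument (upper-bounding $\ln(W_{i,t+1}/W_{i,t})$ via $e^x\le 1+x+x^2$ under $\gamma=2\eta K\ge(1+\beta)\eta K$, and lower-bounding the weight of the segment comparator through the $e\alpha W_{i,t}$ mixing term to pay only $\ln(1/\alpha)$ per switch), the same two concentration steps (Azuma for the realized rewards and the $\beta$-biased estimator bound of Lemma \ref{lemma:g_hat_upper_bounds_p_row_player} for the comparator), and the same parameter tuning; this is precisely Lemmas \ref{lemma:3s_row_player_bound} and \ref{lemma:3s_column_player_bound} in Appendix \ref{appendix:3s_analysis_switch}. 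The proposal is correct and matches the paper's proof in both structure and detail.
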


In the first case above, $S$ is known apriori to the algorithm and is used for setting $\beta$ and $\eta$. If $S$ is unknown, the regret grows linearly with $S$.

\begin{remark}
  \label{remark:optimal_jT}
  Theorem \ref{theorem:3s_bound} is valid for any sequence $j^T$ with $S$ switches. In particular, $j^T$ can be constructed as follows. \textbf{(i)} Divide the time horizon $T$ into $S$ sub-intervals $[T_1, \dots, T_2)$, $[T_2, \dots, T_3)$, \dots, $[T_S, \dots, T_{S + 1})$, where $T_1 = 1$ and $T_{S + 1} = T + 1$. \textbf{(ii)} Assume $P_{t}(i, j) = P^s(i, j)$ for all $t \in [T_s, T_{s + 1})$, $s \in [S]$, and $i, j \in [K]$. \textbf{(iii)} Set $j_{T_s} = j_{T_s + 1} = \dots = j_{T_{s + 1} - 1} = j^s$, where
  \begin{equation*}
    j^s = \argmax_{j \in [K]} \sum_{t = T_s}^{T_{s + 1} - 1} \frac{P_s(j, k_{+1, t}) + P_s(j, k_{-1, t}) - 1}{2}.
  \end{equation*}
  In other words, if the winner probabilities remain constant within sub-intervals $[T_s, T_{s + 1})$, then $j_{T_s}, \dots, j_{T_{s + 1} - 1}$ can be set to the best action within this sub-interval in hindsight for each $s \in [S]$. Theorem \ref{theorem:3s_bound} will be valid for such a sequence $j^T$. Therefore, if preferences $\bfP_1, \dots, \bfP_T$ have \sv{} $S$, then, with probability at least $1 - 2\delta$,
  \begin{align*}
    \dreg_T(j^1,\ldots,j^S)[\algs] = \begin{cases}
        O \left(  \sqrt{SKT} \ln \frac{KT}{\delta} \right),\\
        O \left( S\sqrt{KT} \ln \frac{KT}{\delta} \right), 
     \end{cases}
 \end{align*}
 for known and unknown $S$ respectively.
\end{remark}

\begin{remark}
\label{rem:exp3s_pi}
  As in \algp{}, a similar regret guarantee as Theorem \ref{theorem:3s_bound} would still hold for \algs{} even when the adversary plays a fixed distribution over actions in each of the sub-intervals $[T_s, T_{s + 1})$ instead of playing a single fixed action, as explained in Rem. \ref{rem:exp3p}.
\end{remark}

The proof of Theorem \ref{theorem:3s_bound} uses the same regret decomposition strategy as was used in Theorem \ref{theorem:3p_bound}. Lemmas \ref{lemma:3s_row_player_bound} and \ref{lemma:3s_column_player_bound} in Appendix \ref{appendix:3s_analysis_switch} establish high-probability upper bounds on $\dreg_T^{+1}(j^T)$ and $\dreg_T^{-1}(j^T)$. Theorem \ref{theorem:3s_bound} then follows by combining the two lemmas.


\subsection{Dynamic Regret Analysis of \algs{} under \cv{} $V_T$}
\label{section:continuous_nonstationarity}

One can decompose the dynamic regret along the same lines as in Section \ref{section:switch_nonstationarity} to establish a dynamic regret guarantee for \algs{} under the \cv{} model of non-stationarity (Sec. \ref{sec:prob}).

\begin{restatable}[\algs\, regret under \cv]{theorem}{cvalgsanalysis}
\label{theorem:3s_boundcv}
Consider any preference sequence $\bfP_1, \ldots, \bfP_T$ with \cv\, $V_T$. Then, setting $\alpha, \gamma$ same as defined in Thm. \ref{theorem:3s_bound} and $\beta = \eta = \frac{V_T^{1/3}}{4K^{2/3}T^{1/3}}$, for any time-variant benchmark $j^T=(j_1,\ldots,j_T)$, with probability at least $1-\delta$, the dynamic regret bound of \algs{} satisfies $\dreg_T(j^T)[\algs] = O\Big( \big(V_T^{1/3}K^{1/3}T^{2/3} + 4K^{2/3}T^{1/3}V_T^{-1/3} \big)\ln \frac{KT}{\delta}\Big)$.
\end{restatable}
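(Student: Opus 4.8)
The plan is to reduce the continuous‑variation regime to the switching regime of Theorem~\ref{theorem:3s_bound} via a \textbf{blocking (epoching) argument}. As in the proofs of Theorems~\ref{theorem:3p_bound} and~\ref{theorem:3s_bound}, I would first split the dynamic regret as $\dreg_T(j^T)=\tfrac12\big(\dreg_T^{+1}(j^T)+\dreg_T^{-1}(j^T)\big)$ and bound the row and column parts separately; by symmetry it is enough to treat $\dreg_T^{+1}(j^T)=\sum_{t=1}^T\big(P_t(j_t,k_{-1,t})-P_t(k_{+1,t},k_{-1,t})\big)$. Partition $[T]$ into $L=\lceil T/\Delta\rceil$ consecutive blocks $B_1,\dots,B_L$ of length $\Delta$ (to be optimized), let $s_\ell$ be the first round of $B_\ell$, and set $V_\ell:=\sum_{t\in B_\ell,\,t>s_\ell}\max_{a,b}|P_t(a,b)-P_{t-1}(a,b)|$, so that $\sum_{\ell}V_\ell\le V_T$ and, by telescoping, $\max_{a,b}|P_t(a,b)-P_{s_\ell}(a,b)|\le V_\ell$ for every $t\in B_\ell$.

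The key step is a \textbf{quasi‑stationary approximation inside each block}: replacing every $P_t$ by $P_{s_\ell}$ throughout $B_\ell$ perturbs each of the (at most) two $[0,1]$‑valued preference entries appearing per round by at most $V_\ell$, so this substitution changes $\dreg_T^{+1}$ by at most $2\sum_\ell|B_\ell|V_\ell\le 2\Delta V_T$. After the substitution the preferences are piecewise constant with $L$ pieces, so — exactly as in Remark~\ref{remark:optimal_jT} — one can compare against the within‑block best fixed arm $j^{(\ell)}:=\argmax_{j}\sum_{t\in B_\ell}P_{s_\ell}(j,k_{-1,t})$, and the comparator sequence $(j^{(\ell)})_{\ell=1}^{L}$ has at most $L$ switches. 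Applying the row‑player bound behind Theorem~\ref{theorem:3s_bound} (Lemma~\ref{lemma:3s_row_player_bound}) with $S=L$, $\alpha=1/T$, and $\beta=\eta$, and doing the same for the column player, gives
\[
\dreg_T(j^T)\;=\;O\!\left(\Delta V_T\;+\;\frac{L}{\eta}\ln\frac{KT}{\delta}\;+\;\eta KT\;+\;\frac1\eta\ln\frac1\delta\right),\qquad L=\lceil T/\Delta\rceil .
\]

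It then remains to \textbf{optimize $\eta$ and $\Delta$}. Taking $\eta=\beta=\tfrac{V_T^{1/3}}{4K^{2/3}T^{1/3}}$ makes $\eta KT=\tfrac14 V_T^{1/3}K^{1/3}T^{2/3}$ and $\tfrac1\eta\ln\tfrac1\delta=4K^{2/3}T^{1/3}V_T^{-1/3}\ln\tfrac1\delta$, which are precisely the two terms in the claimed bound. Choosing the block length $\Delta=\Theta\big(K^{1/3}T^{2/3}V_T^{-2/3}\big)$ balances $\Delta V_T=\Theta(V_T^{1/3}K^{1/3}T^{2/3})$ against $\tfrac{L}{\eta}=\tfrac{T}{\Delta\eta}=\Theta(V_T^{1/3}K^{1/3}T^{2/3})$; collecting the four terms yields $\dreg_T(j^T)[\algs]=O\big((V_T^{1/3}K^{1/3}T^{2/3}+K^{2/3}T^{1/3}V_T^{-1/3})\ln\tfrac{KT}{\delta}\big)$. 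This uses $1\le\Delta\le T$, i.e.\ $V_T\gtrsim\sqrt{K/T}$; for smaller $V_T$ one simply takes $\Delta=T$, and since then $\Delta V_T=TV_T\le V_T^{1/3}K^{1/3}T^{2/3}$ the same bound still holds (though in that regime it is no stronger than the static $\tilde O(\sqrt{KT})$ guarantee of Theorem~\ref{theorem:3p_bound}).

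I expect the \textbf{main difficulty} to be the within‑block benchmark handling rather than the arithmetic. Because the row player's effective per‑round reward vector $P_t(\cdot,k_{-1,t})$ varies with $t$ — through both $P_t$ and the opponent's action $k_{-1,t}$ — one cannot bound $\sum_{t\in B_\ell}P_t(j_t,k_{-1,t})$ by $\max_j\sum_{t\in B_\ell}P_t(j,k_{-1,t})$ for a fully arbitrary benchmark; the reduction goes through precisely because the quasi‑stationary substitution $P_t\!\to\!P_{s_\ell}$ makes the problem piecewise‑stationary, letting us invoke the $L$‑switch bound against a \emph{fixed arm per block} (the Remark~\ref{remark:optimal_jT} comparator), so the guarantee is really against the best $L$‑piecewise‑constant competitor induced by the preference sequence. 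A secondary point is that the optimal $\Delta$ depends on $V_T$: if $V_T$ is not known in advance one would either accept a worse exponent (as in Theorem~\ref{theorem:3s_bound}(2)) or wrap \algs{} in a standard doubling / Bandit‑over‑Bandit meta‑procedure, which I would set up but not analyze in detail here.
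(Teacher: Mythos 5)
Your proposal follows essentially the same route as the paper's proof (Lemma \ref{lemma:3scv_row_player_bound}): partition $[T]$ into blocks of length $\Delta$, pay $O(\Delta V_T)$ for a quasi-stationary approximation that reduces the time-varying benchmark to a fixed representative arm per block, invoke the per-block bound underlying the switching-variation analysis with $S=\lceil T/\Delta\rceil$, and optimize $\Delta=\Theta\bigl(K^{1/3}T^{2/3}V_T^{-2/3}\bigr)$ and $\eta=\beta=\tfrac{V_T^{1/3}}{4K^{2/3}T^{1/3}}$ to the same values. The caveat you flag about fully arbitrary benchmarks is also implicitly present in the paper's argument, which likewise bounds the regret via a fixed per-block comparator.
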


The proof of Theorem \ref{theorem:3s_boundcv} relies on the observation that for any sub-interval $\cI \subseteq [T]$ with limited \cv\, budget, the performance difference between any fixed benchmark $j^* \in [K]$ vs a time-varying benchmark (say a sequence of arms $\{j_\tau^*\}_{\tau \in \cI}$) is bounded. Hence, one can divide the full time horizon $[T]$ into small sub-intervals $\cI_1, \ldots, \cI_S$, each of length say $\Delta$ (a tuning parameter to be chosen appropriately), thus $S = \lceil T/\Delta\rceil$, and apply the results from Thm. \ref{theorem:3s_bound} to bound the learner's regret within each sub-interval against a fixed benchmark. The bound in Thm. \ref{theorem:3s_boundcv} now follows by combining both these factors and by properly adjusting $\Delta$ along with the algorithm parameters $\alpha, \beta, \eta$, and $\gamma$. The complete proof is given in Appendix \ref{app:cont_nst}.

\begin{remark}
Theorems \ref{theorem:3s_bound} and \ref{theorem:3s_boundcv}, together with Theorems \ref{theorem:lower_bound_switch} and \ref{theorem:lower_bound_continuous} respectively, immediately justify the optimality of \algs{} and tightness of our regret analysis in terms of Switching and Continuous Variation. 
\end{remark}

\begin{remark}
\label{rem:high_prob_to_expected_regret}
The regret guarantees in Theorems \ref{theorem:3p_bound}, \ref{theorem:3s_bound}, and \ref{theorem:3s_boundcv} hold with high probability. The corresponding expected regret guarantees of the same order can be derived by setting $\delta = 1/T$. For example, against any sequence $j^T$ with \sv\, $S$, Thm. \ref{theorem:3s_bound} gives $\sreg_T[\algp{}] = O(\sqrt{KT}\ln KT)$ with probability $1 - \delta$, and $\sreg_T[\algp{}] \le T$ with probability $\delta$. Thus choosing $\delta = \frac{1}{T}$ one may conclude that $\rmE{\sreg_T[\algp{}]} = O(\sqrt{KT}\ln KT)$.
\end{remark}

\section{Lower Bound}
\label{section:lower_bound}

In this section, we derive a lower bound on the expected dynamic regret of any algorithm for the \nstdb{} problem. The high-level idea is to construct a class of problem instances $\calV$ and argue that there exists an instance $\nu \in \calV$ and a sequence $j^T = (j_1, \dots, j_T)$ for which $\rmE{\dreg_T(j^T)}$ is sufficiently large (see Theorems \ref{theorem:lower_bound_switch} and \ref{theorem:lower_bound_continuous}) for any algorithm. $\calV$ is defined such that for all instances $\nu \in \calV$:
\begin{enumerate}[leftmargin=*]
  \item The time horizon $[T]$ is divided into $S$ sub-intervals $[T_1, \dots, T_2)$, \dots, $[T_S, \dots, T_{S + 1})$, where $T_1 = 1$ and $T_{S + 1} = T + 1$. For all $s \in [S]$ and $i, j \in [K]$, the winner probabilities satisfy
  \begin{equation*}
    t_1, t_2 \in [T_s, T_{s + 1}) \Rightarrow P_{t_1}(i, j) = P_{t_2}(i, j).
  \end{equation*}
  \item For all $s \in [S]$, an arm $j^s \neq 1$ is chosen uniformly at random as the \textit{condorcet winner} \footnote{Any preference matrix $\P \in [0,1]^{K \times K}$ is said to have a condorcer winner $i^* \in [K]$, if there exists an arm $i^* \in [K]$ s.t. $P(i^*, j) > 0.5$ for all $j \in [K]\sm \{i^*\}$ \citep{Busa14survey}.}, i.e., $P_t(j^s, j) > 0.5$ for all $j \neq j_s$ and for all $t \in [T_s, T_{s + 1})$.
  \item With $\epsilon > 0$ as a small constant, for all $s \in [S]$, $t \in [T_s, T_{s + 1})$, and $i < j$, the winner probabilities are:
\end{enumerate}
\begin{equation}
    \label{eq:env_specification}
    P_t(i, j) = \begin{cases}
        \frac{1}{2} + \epsilon & \text{ if } i = j^s \lor \left( i = 1 \land j \neq j^s \right), \\
        \frac{1}{2} - \epsilon & \text{ if } j \in \lbrace 1, j^s \rbrace, \\
        \frac{1}{2} & \text{ otherwise.}
    \end{cases}
\end{equation}
In other words, an instance $\nu \in \calV$ is created by choosing a condorcet winner $j^s \in [K] \backslash \lbrace 1 \rbrace$ for each sub-interval $[T_s, T_{s + 1})$. In sub-interval $[T_s, T_{s + 1})$, arm $j^s$ beats every arm with probability $0.5 + \epsilon$, arm $1$ beats every arm except $j^s$ with probability $0.5 + \epsilon$, and all other arms beat each other with equal probability.

For an instance $\nu \in \calV$, define the sequence of actions $j^T(\nu)$ such that $j_t(\nu) = j^s$ for all $t \in [T_s, T_{s + 1})$. Here, $j_t(\nu)$ is the $t^{th}$ element of $j^T(\nu)$ and $j^s$ is the condorcet winner in $\nu$ during the sub-interval $[T_s, T_{s + 1})$. The results below establish lower bounds on $\rmE{\dreg_T(j^T(\nu))}$ under \sv{} and \cv{} model of non-stationarity.

\begin{restatable}[Dynamic Regret Lower Bound for \sv]{theorem}{lowerboundswitch}
  \label{theorem:lower_bound_switch}
  Assume that $K \geq 3$ and $\epsilon \in (0, 1/4)$ in \eqref{eq:env_specification}. With $\calV$ and $j^T(\nu)$ defined above, there exists an instance $\nu \in \calV$ such that: 
  $$\rmE{\dreg_T(j^T(\nu))} = \Omega(\sqrt{SKT}),$$ for any non-stationary dueling bandits algorithm.
\end{restatable}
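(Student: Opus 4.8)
The plan is to reduce the $S$-switch lower bound to $S$ essentially independent copies of a single-episode stochastic dueling-bandit lower bound and then concatenate them. Take all sub-intervals $[T_s,T_{s+1})$ to have equal length $L:=T/S$ and, as in the construction, draw each Condorcet winner $j^s$ independently and uniformly from $\{2,\dots,K\}$; write $\nu$ for the resulting random instance and $\E_\nu$ for the average over $(j^1,\dots,j^S)$. Under \eqref{eq:env_specification}, for $t\in[T_s,T_{s+1})$ one has $r_t(j_t(\nu))=\frac{\epsilon}{2}\big(\1[k_{+1,t}\neq j^s]+\1[k_{-1,t}\neq j^s]\big)\geq 0$, so it suffices to lower bound $\E_\nu\E[\dreg_T(j^T(\nu))]=\sum_{s=1}^S \E_\nu\E\big[\sum_{t\in[T_s,T_{s+1})} r_t(j^s)\big]$ term by term; a bound of $\Omega(\sqrt{SKT})$ on this average certifies the existence of a single $\nu\in\calV$ with the stated guarantee.

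Fix a sub-interval $s$ and condition on the interaction history $\calH_s$ up to time $T_s$. Because $j^s$ is drawn independently of $\calH_s$ and the preference matrices on $[T_s,T_{s+1})$ depend only on $j^s$, the learner faces --- conditionally on $\calH_s$ --- a fresh stationary instance, so it is enough to establish the single-episode claim: for $L$ rounds, $K\geq 3$ arms, and $j^*\sim\mathrm{Unif}\{2,\dots,K\}$ with preferences given by \eqref{eq:env_specification}, any (history-dependent) strategy has $\E_{j^*}\E[\sum_{t=1}^L r_t(j^*)]=\Omega(\sqrt{KL})$, uniformly in the strategy's initialization. Writing $N_j:=\sum_{t=1}^L(\1[k_{+1,t}=j]+\1[k_{-1,t}=j])$, the episode regret equals $\epsilon L-\frac{\epsilon}{2}\,\E_{j^*}\E_{\nu_{j^*}}[N_{j^*}]$, so I only need $\E_{j^*}\E_{\nu_{j^*}}[N_{j^*}]$ to stay bounded away from $2L$. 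Introduce the auxiliary instance $\nu_0$ in which arm $1$ is the Condorcet winner and arms $2,\dots,K$ are mutually tied. Since $\sum_{j=1}^K N_j=2L$, one gets $\E_{j^*}\E_{\nu_0}[N_{j^*}]=\frac{1}{K-1}\sum_{j=2}^{K}\E_{\nu_0}[N_j]\leq \frac{2L}{K-1}\leq L$; moreover $\nu_0$ and $\nu_{j^*}$ differ only on duels involving arm $j^*$ (the edge $\{1,j^*\}$ moves by $2\epsilon$, the edges $\{k,j^*\}$ with $k\notin\{1,j^*\}$ move by $\epsilon$, and all other duels are identical, hence non-informative), so the divergence decomposition yields $\rmKL{\nu_0}{\nu_{j^*}}\leq O(\epsilon^2)\,\E_{\nu_0}[N_{j^*}]$ (each of the at most $N_{j^*}$ informative duels contributes $O(\epsilon^2)$ because all relevant Bernoulli parameters lie in $[\tfrac14,\tfrac34]$). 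Combining Pinsker's inequality, $N_{j^*}\leq 2L$, and Jensen's inequality then gives $\E_{j^*}\E_{\nu_{j^*}}[N_{j^*}]\leq \frac{2L}{K-1}+O(1)\,\epsilon L\sqrt{L/(K-1)}$. Choosing $\epsilon=c_0\sqrt{(K-1)/L}$ with $c_0$ a small absolute constant bounds the right-hand side by $\frac{3L}{2}$, so the episode regret is at least $\epsilon L-\frac{\epsilon}{2}\cdot\frac{3L}{2}=\frac{\epsilon L}{4}=\Omega(\sqrt{KL})$.

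Concatenating the $S$ episodes gives $\E_\nu\E[\dreg_T(j^T(\nu))]\geq S\cdot\Omega(\sqrt{KL})=\Omega\big(S\sqrt{K(T/S)}\big)=\Omega(\sqrt{SKT})$, proving the theorem for some $\nu\in\calV$. The choice of $\epsilon$ requires $\epsilon=c_0\sqrt{(K-1)S/T}<1/4$, i.e. $T=\Omega(KS)$; since $\dreg_T\leq T/2$ always, this is exactly the regime in which an $\Omega(\sqrt{SKT})$ bound is meaningful, so no generality is lost.

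I expect the main obstacle to be the single-episode argument, where two points need care. First, one must verify that the per-duel KL contribution is genuinely $O(\epsilon^2)$ uniformly for $\epsilon\in(0,1/4)$ and that duels not touching $j^*$ carry zero information, so that the change of measure is localized to the (at most $N_{j^*}$) informative rounds --- this is what makes $\rmKL{\nu_0}{\nu_{j^*}}$ controllable by $\E_{\nu_0}[N_{j^*}]$, which in turn is small on average because the arms $2,\dots,K$ cannot all be over-played under $\nu_0$. Second, the conditioning on $\calH_s$ must be handled rigorously: information cannot accumulate across sub-intervals precisely because each $j^s$ is re-randomized independently, so the single-episode bound applies verbatim inside each block with the block's starting history treated as an arbitrary (adversarial) initialization of the strategy. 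The block concatenation and the optimization over $\epsilon$ are then routine.
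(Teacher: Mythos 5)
Your proposal is correct and follows essentially the same route as the paper's proof in Appendix C.1: the same randomized instance class with reference measure given by arm $1$ as Condorcet winner, the same Pinsker-plus-chain-rule change of measure with KL localized to the duels involving $j^s$ (your aggregate count $N_{j^*}$ is exactly the paper's $2\bar N^s_{j^s}+N^s_{1j^s}+N^s_{j^s}$), the same Cauchy--Schwarz/Jensen averaging over the random winner, and the same per-episode concatenation with $\epsilon\asymp\sqrt{(K-1)S/T}$. Your explicit handling of the conditioning on the pre-episode history and of the regime $T=\Omega(KS)$ needed for $\epsilon<1/4$ is if anything slightly more careful than the paper's.
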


Given a change budget $V_T$ and the number of switches $S$, $\epsilon$ can be set to ensure that the total change in $\bfP_t$'s across $T$ steps is at most $V_T$ (see Appendix \ref{appendix:lower_bound_continuous}). Using this value of $\epsilon$ yields the following result:

\begin{restatable}[Dynamic Regret Lower Bound for \cv]{theorem}{lowerboundcontinuous}
  \label{theorem:lower_bound_continuous}
  Assume that $K \geq 3$. With $\calV$ and $j^T(\nu)$ defined above, there exists an instance $\nu \in \calV$ such that: 
  $$\rmE{\dreg_T(j^T(\nu))} = \Omega((KV_T)^{1/3} T^{2/3}),$$ for any non-stationary dueling bandits algorithm.
\end{restatable}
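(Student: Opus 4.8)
The plan is to reduce the continuous-variation lower bound to the switching-variation lower bound of Theorem~\ref{theorem:lower_bound_switch} by choosing the number of switches $S$ and the gap parameter $\epsilon$ appropriately as functions of $V_T$, $K$, and $T$. First I would recall that in the instance class $\calV$, the preference matrices only change at the $S-1$ switch points $T_2,\dots,T_S$, and at each such point the entries $P_t(i,j)$ move by at most $2\epsilon$ (a coordinate switches between $\tfrac12+\epsilon$, $\tfrac12$, and $\tfrac12-\epsilon$). Hence the continuous variation of any instance $\nu \in \calV$ satisfies $V_T(\nu) \le 2\epsilon(S-1) \le 2\epsilon S$. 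So if I want the constructed instances to respect a variation budget $V_T$, it suffices to enforce $2\epsilon S \le V_T$, i.e. take $\epsilon = \Theta(V_T/S)$; I also need $\epsilon < 1/4$ (as required by Theorem~\ref{theorem:lower_bound_switch}) and $S \le T$, which will be satisfied for the eventual choice provided $V_T$ is not too small (the edge cases $V_T$ tiny or $V_T \gtrsim T$ can be handled separately or absorbed into the $\Omega(\cdot)$ constant).

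Next I would re-run the argument behind Theorem~\ref{theorem:lower_bound_switch} but keeping $\epsilon$ as a free parameter rather than a universal constant. The switching lower bound is obtained by splitting $[T]$ into $S$ blocks of length $\sim T/S$, planting an independent uniformly random Condorcet winner $j^s \ne 1$ in each block, and arguing (via a standard information-theoretic / Pinsker-type argument, as in the Besbes--Gur--Zeevi and Auer et al.\ style constructions for stochastic bandits) that within a block of length $L=T/S$ no algorithm can identify $j^s$ fast enough: the per-block expected regret against the benchmark $j^T(\nu)$ is $\Omega(\min\{\epsilon L,\ \sqrt{KL}/\epsilon \cdot \epsilon\}) = \Omega(\min\{\epsilon L, \sqrt{KL}\})$ — more precisely, the information gathered about $j^s$ from $L$ duels is $O(\epsilon^2 L)$ KL-divergence, so the algorithm's probability of playing near the right arm is bounded away from $1$ until $L \gtrsim 1/\epsilon^2$, giving per-block regret $\Omega(\epsilon L)$ when $L \lesssim 1/\epsilon^2$. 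Choosing $L \asymp 1/\epsilon^2$ (equivalently $S \asymp \epsilon^2 T$) balances these and yields total regret $\Omega(S \cdot \epsilon L) = \Omega(\epsilon T)$ over the constant-regret regime; more carefully, for general $S$ one gets $\Omega(\epsilon T)$ as long as $S \gtrsim \epsilon^2 T$, and we will sit right at that threshold.

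Now I combine the two constraints. From the variation budget, $\epsilon \asymp V_T / S$; from the regret-maximization balance, $S \asymp \epsilon^2 T$. Substituting the first into the second gives $S \asymp (V_T/S)^2 T$, hence $S^3 \asymp V_T^2 T$, i.e. $S \asymp (V_T^2 T)^{1/3}$ and correspondingly $\epsilon \asymp V_T/S \asymp (V_T/T)^{1/3} \cdot V_T^{0}$... let me instead just record $\epsilon \asymp (V_T/(ST)^{?})$; cleanly, $\epsilon \asymp V_T^{1/3} T^{-1/3}$ up to the $K$ factors, and the resulting lower bound is $\Omega(\epsilon T) = \Omega(V_T^{1/3} T^{2/3})$. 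To get the $K^{1/3}$ I need to track the $K$-dependence in the per-block bound: the relevant regime is per-block regret $\Omega(\sqrt{KL})$, so the balance is $\epsilon L \asymp \sqrt{KL}$, giving $L \asymp K/\epsilon^2$, $S \asymp \epsilon^2 T / K$, and then $S^3 \asymp V_T^2 T / K^2$ after using $\epsilon \asymp V_T/S$, hence $\epsilon \asymp (K V_T / T)^{1/3}$ and total regret $\Omega(S\sqrt{KL}) = \Omega(\epsilon T) = \Omega((K V_T)^{1/3} T^{2/3})$, as claimed. The main obstacle — and the step deserving the most care — is making the per-block information-theoretic lower bound fully rigorous for the \emph{dueling} feedback model (the observed bit $o_t$ depends on the chosen \emph{pair}, so the KL-divergence bookkeeping is between distributions over pair-feedback sequences, not single-arm feedback) and ensuring the benchmark $j^T(\nu)$ genuinely achieves near-zero regret on instance $\nu$ so that the regret is entirely attributable to the learner's failure to track $j^s$; this is exactly the content imported from Theorem~\ref{theorem:lower_bound_switch}, so in the write-up I would state it as "instantiating the proof of Theorem~\ref{theorem:lower_bound_switch} with parameter $\epsilon$" and then only verify the two algebraic balances and the $V_T(\nu) \le 2\epsilon S$ bound in detail.
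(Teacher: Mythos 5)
Your proposal takes essentially the same route as the paper's proof: both reuse the switching-variation construction with $S=\lceil T/\Delta\rceil$ equal-length blocks, enforce the budget via $V_T \gtrsim 2\epsilon S$ (so $\epsilon \asymp V_T/S$), re-run the per-block information-theoretic argument of Theorem~\ref{theorem:lower_bound_switch} with $\epsilon$ left free, and balance against the per-block threshold $\epsilon \asymp \sqrt{K/L}$ to arrive at $\epsilon \asymp (KV_T/T)^{1/3}$ and total regret $\Omega(\epsilon T)=\Omega((KV_T)^{1/3}T^{2/3})$. (One small arithmetic slip: substituting $\epsilon\asymp V_T/S$ into $S\asymp \epsilon^2 T/K$ gives $S^3\asymp V_T^2T/K$, not $V_T^2T/K^2$, but your stated $\epsilon$ and final bound are consistent with the correct value.)
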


The proofs of Theorems \ref{theorem:lower_bound_switch} and \ref{theorem:lower_bound_continuous} are given in Appendix \ref{appendix:lower_bound}. The key idea is to establish a lower bound on the cumulative regret within each sub-interval $[T_s, T_{s + 1})$. This can be done by upper-bounding the number of times arms $1$ and $j^s$ (these arms are more likely to win against others) are pulled in expectation within each sub-interval $[T_s, T_{s + 1})$.

\begin{algorithm}[t]
  \caption{\algdexp{}}
  \label{alg:dexp3_whp}
  \begin{algorithmic}[1]
      \STATE \textbf{Input:} $\eta > 0$, $\alpha, \gamma, \beta \in (0, 1)$
      \STATE Set $W_{+1, 1}(k) = W_{-1, 1}(k) = 1$ for all $k \in [K]$
      \FOR {$t = 1, \dots, T$}
        \FOR {$k \in [K]$ and $i \in \{+1, -1\}$}
          \STATE $p_{i, t}(k) = (1 - \gamma) \frac{W_{i, t}(k)}{\sum_{j=1}^K W_{i, t}(j)} + \frac{\gamma}{K}$
        \ENDFOR
        \STATE For $i \in \{+1, -1\}$, sample $k_{i, t}$ from the distribution $(p_{i, t}(1), \dots, p_{i, t}(K))$
        \STATE Observe preference feedback $o_t(k_{+1, t}, k_{-1, t})$
        \STATE Compute $s'_{i, t}(k)$ for $i \in \{+1, -1\}$ and $k \in [K]$ as: \tiny{$$s'_{i, t}(k) = \frac{\1(k_{i,t} = k)}{K p_{i, t}(k)}\sum_{j \in [K]}\frac{\1(k_{-i,t} = j) o_t(k, j)}{p_{-i, t}(j)} + \frac{\beta}{p_{i, t}(k)}$$}\normalsize
        \FOR {$k \in [K]$ and $i \in \{+1, -1\}$}
          \STATE \small{$W_{i, t + 1}(k) = W_{i, t}(k) e^{\eta s' _{i, t}(k)} + e \alpha \sum_{j=1}^K W_{i, t}(j)$}
        \ENDFOR
      \ENDFOR
\end{algorithmic}
\end{algorithm}

\section{Dynamic Regret with Borda Scores}
\label{sec:ub_borda}

In this section, we briefly switch to a different regret measure that uses the so-called Borda scores. The \emph{Borda score} of an arm $i \in [K]$ with respect to the preference matrix $\P_t$ at time $t$ measures the average probability of arm $i$ winning against a randomly chosen opponent $j \in [K]\backslash\{i\}$ at time $t$:
\begin{align*}
    b_t(i) \coloneqq \frac{1}{K-1}\sum_{j \neq i} P_t(i,j).  
\end{align*}
While the ideas presented here apply equally well to the static Borda regret objective (regret with respect to the arm with highest cumulative Borda score) that was already addressed in \cite{ADB}, we only focus on the more challenging dynamic Borda regret objective:
\small
\begin{align}
  \label{eq:dbreg}
	\dbreg_T(j^T) \coloneqq \sum_{t=1}^T \frac{2 b_t(j_t) -  b_t(k_{+1, t}) - b_t(k_{-1, t})}{2},
\end{align}
\normalsize
where, as before, $j^T = (j_1, \dots, j_T)$ is an arbitrary sequence of arms, and $k_{+1, t}$ and $k_{-1, t}$, respectively, are the arms chosen by the row and column player at time $t$.

Algorithm \ref{alg:dexp3_whp} describes a variant of \algs{} that uses a different arm rewards estimate instead of $\hat{g}_{i, t}(k)$, but is otherwise same as \algs{}. We call this algorithm \algdexp{}. The reward estimate in \algdexp{} was also used in \citet{ADB} where the authors studied only the static Borda regret objective. The result below establishes an upper bound on the more challenging dynamic regret $\dbreg_T(j^T)$ of Alg.~\ref{alg:dexp3_whp} under the switching variation model of non-stationarity. The proof uses the same regret-decomposition idea as Theorem \ref{theorem:3s_bound}, but differs in technical details to accommodate the change in the reward estimate. See Appendix \ref{appendix:analysis_with_borda_scores} for details.

\begin{restatable}[\algdexp{} regret under \sv{}]{theorem}{algdexpanalysis}
  \label{theorem:3s_borda_bound}
  Let $j^T = (j_1, \dots, j_T)$ be any sequence of actions with $S$ switches. Then, with probability at least $1 - \delta$, $\dbreg_T(j^T)[\algdexp] = \tO(S^{1/6} K^{-1/3} T^{5/6} + S^{1/2} K^{1/3} T^{2/3})$, when $\eta = \left( \frac{S\ln K}{T \sqrt{2K}} \right)^{2/3}$, $\beta = \frac{S^{1/3} \sqrt{\ln (2K/\delta)}}{(2\eta)^{1/4} K^{3/4} \sqrt{T}}$, $\gamma = \sqrt{2 \eta K}$, and $\alpha = \frac{1}{KT}$.
\end{restatable}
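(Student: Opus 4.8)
The strategy is to follow the proof of Theorem~\ref{theorem:3s_bound}: decompose the regret across the two players, bound each player's contribution by an EXP3.S-style argument, and combine. The essential new feature is that \algdexp{} uses the doubly importance-weighted Borda estimator $s'_{i,t}$ rather than the singly importance-weighted $\hat g_{i,t}$ of \algs{}; this estimator has magnitude up to $O(K/\gamma^2)$ and correspondingly larger variance, which is why the exploration rate is raised to $\gamma=\sqrt{2\eta K}$ and why a new lower-order $T^{5/6}$ term appears.

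First I would decompose $\dbreg_T(j^T)=\tfrac12\big(\dbreg_T^{+1}(j^T)+\dbreg_T^{-1}(j^T)\big)$ with $\dbreg_T^{i}(j^T):=\sum_{t=1}^T\big(b_t(j_t)-b_t(k_{i,t})\big)$ for $i\in\{+1,-1\}$. By the symmetry of the algorithm it suffices to bound $\dbreg_T^{+1}(j^T)$. Unlike in Theorem~\ref{theorem:3s_bound}, each player's regret term involves only its own arm, so the row player is exactly running EXP3.S on the Borda-reward sequence $\{b_t(\cdot)\}_t$ with $s'_{+1,t}(\cdot)$ as the reward estimate -- the two players interact only through the variance of this estimate. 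The key estimator identities, obtained by taking the conditional expectation given the history $\mathcal{F}_{t-1}$ (and using that only the pair $(k_{+1,t},k_{-1,t})$ is observed), are $\E[s'_{i,t}(k)]=\tfrac{K-1}{K}b_t(k)+\tfrac{P_t(k,k)}{K}+\tfrac{\beta}{p_{i,t}(k)}$ and $\sum_k p_{i,t}(k)s'_{i,t}(k)=\tfrac1K\cdot\tfrac{o_t(k_{i,t},k_{-i,t})}{p_{-i,t}(k_{-i,t})}+K\beta$, whose conditional mean equals $\tfrac{K-1}{K}b_t(k_{i,t})+\tfrac{P_t(k_{i,t},k_{i,t})}{K}+K\beta$; the $\tfrac1K P_t(\cdot,\cdot)$ terms cancel between benchmark and played arm (under the convention $P_t(k,k)=\tfrac12$), the $\tfrac{K-1}{K}$ scaling is harmless, and the $\tfrac{\beta}{p_{i,t}(\cdot)}$ and $K\beta$ terms are the familiar EXP3.P-style bias.

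Next I would run the standard EXP3.S potential argument on the weights $W_{+1,t}$. Using $e^x\le 1+x+x^2$ on the multiplicative part of the update -- valid since $\gamma=\sqrt{2\eta K}$ gives $\eta\, s'_{+1,t}(k)\le \eta(K/\gamma^2+\beta K/\gamma)=\tfrac12+\eta\beta K/\gamma\le 1$ for the stated parameters -- together with the additive $e\alpha W_{+1,t}$ mixing term, this gives, for any $j^T$ with at most $S$ segments,
\[
  \sum_t s'_{+1,t}(j_t)-\sum_t\sum_k p_{+1,t}(k)s'_{+1,t}(k)\;\le\;\eta\sum_t\sum_k p_{+1,t}(k)s'_{+1,t}(k)^2+\frac{S}{\eta}\ln\frac{KT}{\delta}+O(\gamma T).
\]
A short computation bounds the second-order term in expectation, $\E\big[\sum_k p_{+1,t}(k)s'_{+1,t}(k)^2\mid\mathcal{F}_{t-1}\big]=O(K/\gamma+\beta^2K^2/\gamma^2)=O(K/\gamma)$ (the dominant piece is $\E\big[\tfrac{1}{K^2 p_{+1,t}(k_{+1,t})}\,p_{-1,t}(k_{-1,t})^{-2}\,o_t\big]\le\tfrac1{K^2}\sum_{k'}\sum_j\tfrac{P_t(k',j)}{p_{-1,t}(j)}\le K/\gamma$), so $\eta\sum_t\sum_k p_{+1,t}(k)s'_{+1,t}(k)^2=O(\eta KT/\gamma)=O(T\sqrt{\eta K})$. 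Finally, to pass from estimated to true quantities with high probability I would apply Freedman's inequality to the martingale-difference sequences $\{s'_{+1,t}(j_t)-\E_t s'_{+1,t}(j_t)\}_t$ and $\{\sum_k p_{+1,t}(k)s'_{+1,t}(k)-\E_t[\cdot]\}_t$ (and once more to the second-moment sum): the conditional variance of $s'_{+1,t}(j_t)$ is controlled through $\E[s'_{+1,t}(j_t)^2]=O\!\big(\tfrac{1}{p_{+1,t}(j_t)\gamma}+\beta^2K^2/\gamma^2\big)$ and its range is $O(K/\gamma^2)$, and this is exactly the heavy-tailed quantity that -- after substituting $\gamma=\sqrt{2\eta K}$, $\eta=(S\ln K/(T\sqrt{2K}))^{2/3}$, the stated $\beta$, and $\alpha=1/(KT)$ -- produces the $\tO(S^{1/6}K^{-1/3}T^{5/6})$ term. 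Assembling everything (the $S/\eta$ and $T\sqrt{\eta K}$ pieces together with the $\beta$-bias giving the $\tO(S^{1/2}K^{1/3}T^{2/3})$ term, plus the Freedman contribution), dividing by $2$, and repeating verbatim for the column player yields the claimed bound.

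The main obstacle is the variance and concentration control in that last step, together with the high-probability second-moment bound: the Borda estimator $s'_{i,t}$ is far heavier-tailed than the estimator of \algs{} because of the product of two importance weights, so both the second-order Taylor step in the EXP3.S potential and the martingale concentrations are delicate, and one must pick $\gamma$ large enough that $\eta\, s'_{i,t}(k)\le 1$ and the variance is tamed, yet small enough that the $O(\gamma T)$ exploration cost stays lower order. This three-way tension between $\eta$, $\beta$, and $\gamma$ is what forces the unusual parameter choices and is the source of the extra $T^{5/6}$ term, which is absent in Theorem~\ref{theorem:3s_bound}. Everything else parallels the proof of Theorem~\ref{theorem:3s_bound}, and the resulting $K^{1/3}T^{2/3}$-type rate is consistent with the $\Omega(K^{1/3}T^{2/3})$ static Borda lower bound of \citet{ADB}.
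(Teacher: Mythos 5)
Your architecture matches the paper's proof exactly: decompose $\dbreg_T$ into row and column contributions, run the EXP3.S potential argument segment-by-segment over the $S$ sub-intervals, concentrate the estimator, and tune $\eta,\beta,\gamma,\alpha$. The one substantive deviation is in the concentration step, and it is the part most likely to go wrong. For the benchmark side $\sum_t s'_{+1,t}(j^s)$ the paper does \emph{not} use a two-sided Freedman bound; it invokes Lemma~\ref{lemma:s_bound_borda_score} (Lemma~10 of \citet{ADB}), a one-sided MGF/Markov argument that exploits the $+\beta/p_{i,t}(k)$ shift built into $s'_{i,t}$ to give a deviation of only $\frac{1}{\gamma\beta}\ln\frac{2K}{\delta}$ per segment --- crucially with no $\sqrt{T}$ dependence, so that summing over segments yields the $\frac{S}{\gamma\beta}\ln\frac{2K}{\delta}=\tO(S^{1/2}K^{1/3}T^{2/3})$ term. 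If you instead apply Freedman to $s'_{+1,t}(j_t)-\E_t s'_{+1,t}(j_t)$, the conditional variance is $\Theta(K/\gamma^2)$ in the worst case, and the resulting $\sqrt{TK}/\gamma=\sqrt{T/(2\eta)}$ deviation is of order $S^{-1/3}K^{1/6}T^{5/6}$ (before the union bound over arms and segments), which is worse in $K$ than the target $K^{-1/3}$; the $\beta$-shift exists precisely to let you avoid paying this variance on the benchmark side, and your proposal does not use it that way. Relatedly, you attribute the $\tO(S^{1/6}K^{-1/3}T^{5/6})$ term to the benchmark arm's estimator, whereas in the paper it comes from the \emph{played-arm} side: Lemma~\ref{lemma:sum_psprime_high_prob_bound} applies Azuma to $\sum_k p_{+1,t}(k)s'_{+1,t}(k)$, whose increments have range $\frac{\gamma+1}{\gamma}$, giving $\frac{1+\gamma}{\gamma}\sqrt{2ST\ln(2/\delta)}$, which is exactly $\tO(S^{1/6}K^{-1/3}T^{5/6})$ after substituting $\gamma=\sqrt{2\eta K}$. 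Finally, the paper absorbs the second-order potential term deterministically via $p_{+1,t}(k)s'_{+1,t}(k)\le\frac{1+\gamma\beta}{\gamma}$ and the usual $\gamma\max_k\sum_t s'_{+1,t}(k)$ trick, rather than bounding $\sum_k p\,s'^2$ in expectation and re-concentrating; your in-expectation route gives the same order but adds an unnecessary concentration step. Your estimator identities, the condition $\eta s'_{i,t}(k)\le 1$ under $\gamma=\sqrt{2\eta K}$, and the final bookkeeping are all correct.
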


It is worth noting that, as in Sec. \ref{section:continuous_nonstationarity}, one can also extend Thm. \ref{theorem:3s_borda_bound} to analyze the continuous variation dynamic Borda regret of Alg. \ref{alg:dexp3_whp} using similar arguments as in the proof of Thm. \ref{theorem:3s_boundcv}. We skip this analysis to avoid redundancy.

\section{Experiments}
\label{section:experiments}

We now present numerical results to complement our theoretical findings and compare with baselines. Section \ref{section:experiments_static_regret} studies static regret for \algp{} against the best fixed action in hindsight. Section \ref{section:experiments_dynamic_regret} studies dynamic regret for \algs{} under \sv{} and \cv{} models of non-stationarity.

\paragraph{Baseline algorithms:} We compare our algorithms against two baselines: \rand{} and \rex{}. \rand{} independently samples arms $k_{+1, t}$ and $k_{-1, t}$ from a uniform distribution over $K$ arms at each time $t$. \rex{} \citep{Adv_DB} uses a time-varying distribution $\bfp_t \in \Delta_K$ and independently samples $k_{+1, t}, k_{-1, t} \sim \bfp_t$.

The values of parameters $\alpha$, $\beta$, $\eta$, and $\gamma$ for \algp{} and \algs{} were set in accordance with Theorems \ref{theorem:3p_bound} and \ref{theorem:3s_bound} (or \ref{theorem:3s_boundcv} as appropriate from the context), respectively. \rex{} uses a single parameter $\gamma$ and we set its value to $\gamma = \sqrt{\frac{2K \ln K}{e T}}$, as recommended in \citet{Adv_DB}.


\subsection{\algp{}: Static Regret}
\label{section:experiments_static_regret}

Recall that $P_t(i, j)$ is the probability with which arm $i$ beats arm $j$ at time $t$. We simulate an environment where these values follow a Gaussian random walk. That is, for every $t \in [T]$ and $i < j$,
\begin{align*}
  P_{t + 1}(i, j) = P_t(i, j) + \epsilon_t(i, j),
\end{align*}
where $\epsilon_t(i, j) \sim \calN(0, 0.002)$\footnote{$\calN(\mu, \sigma)$ denotes a Gaussian distribution with mean $\mu \in \bbR$ and standard deviation $\sigma > 0$.}. We ensure that $P_t(i, i) = 0.5$, $P_t(j, i) = 1 - P_t(i, j)$, and clip the entries of $\bfP_t$ in the range $[0, 1]$ so that they represent valid probabilities. The initial values $P_1(i, j) \sim \mathrm{Uniform}(0, 1)$.

Let $j^* = \argmax_{j \in [K]} \sum_{t = 1}^T (P_t(j, k_{+1, t}) + P_t(j, k_{-1, t}) - 1)/2$ be the best fixed action in hindsight. Figure \ref{fig:static_vs_T} sets $K = 10$ and plots the static regret $\sreg_T$ (with respect to fixed arm $j^*$) incurred by \algp{}, \rand{}, and \rex{} for different values of $T$ in the range $10^4$ and $10^6$. The dotted line in the plot represents $O(\sqrt{T})$ growth. It can be seen that $\sreg_T$ grows as $O(\sqrt{T})$ for \algp{} and \rex{}. Moreover, \algp{} outperforms \rex{} when $T$ is large. Note that $\sreg_T$ can indeed be negative as $\P_t$ changes over time and no fixed benchmark (arm) could be competitive against the learner's dueling sequence $\{(k_{+1,t},k_{-1,t})\}_{t \in [T]}$.

Figure \ref{fig:static_vs_K} similarly fixes $T = 10^6$ and plots $\sreg_T$ for different values of $K$. The dotted line in this plot represents $O(\sqrt{K})$ growth and shows that $\sreg_T$ grows as $O(\sqrt{K})$ for \algp{} and \rex{}. As $K$ increases, \rand{}'s regret slightly decreases as it becomes harder to have a fixed action that performs well across time. Moreover, as expected, the gap between the performance of \algp{} (and \rex{}) and \rand{} decreases as $K$ becomes large (recall that $T$ is fixed).


\subsection{\algs{}: Dynamic Regret}
\label{section:experiments_dynamic_regret}

Next, we turn to \algs{} and focus on dynamic regret under the two notions of non-stationarity (see Sec. \ref{sec:prob}).

\paragraph{\sv{} non-stationarity:} For these experiments, we use a similar Gaussian random walk model for the probabilities $P_t(i, j)$ as in Section \ref{section:experiments_static_regret}, but change the values only after every $\Delta = T / (S - 1)$ steps. That is,
\begin{align*}
  P_{t + 1}(i, j) = \begin{cases}
    P_t(i, j) + \epsilon_t(i, j) & \text{if } \exists c \in \bbN : t = \Delta c \\
    P_t(i, j) & \text{otherwise.}
  \end{cases}
\end{align*}
Here, $\bbN$ is the set of all natural numbers. We sample $\epsilon_t(i, j) \sim \calN(0, 0.05)$ where we use a higher standard deviation as compared to Section \ref{section:experiments_static_regret} as the changes in $P_t(i, j)$ happen less frequently. As before, we set $P_t(i, i) = 0.5$, $P_t(j, i) = 1 - P_t(i, j)$, and clip all values in the range $[0, 1]$. 

The dynamic regret is measured with respect to the sequence $j^T$ defined in Remark \ref{remark:optimal_jT}. This sequence considers the best fixed action in hindsight for each of the sub-intervals of length $\Delta$ in which the probability values remain unchanged. See Remark \ref{remark:optimal_jT} for details.

Figure \ref{fig:switch_vs_T} fixes $K = S = 10$ and plots $\dreg_T(j^T)$ as a function of $T$. It can be verified that the growth of $\dreg_T(j^T)$ is $O(\sqrt{T})$ for \algs{}, as established in Theorem \ref{theorem:3s_bound}. Moreover, \algs{} slightly outperforms both \algp{} and \rex{} for large $T$. Figure \ref{fig:switch_vs_K} fixes $S = 10$, $T = 10^6$, and plots $\dreg_T(j^T)$ as a function of $K$, confirming that $\dreg_T(j^T)$ indeed grows as $O(\sqrt{K})$. 

\begin{figure}[h]
  \subfloat[Regret vs $T$]{\includegraphics[width=0.5\linewidth]{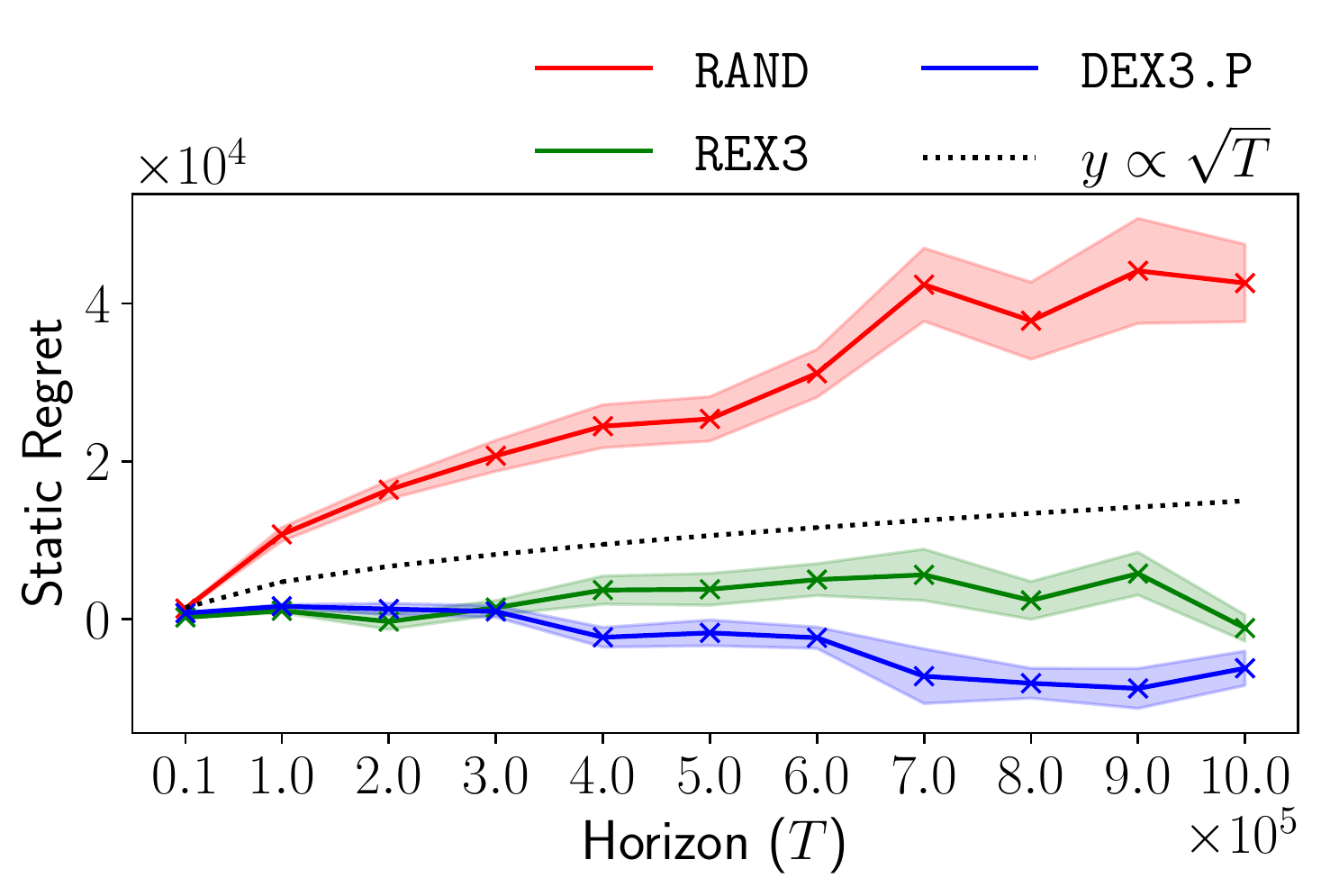}\label{fig:static_vs_T}}%
  \subfloat[Regret vs $K$]{\includegraphics[width=0.5\linewidth]{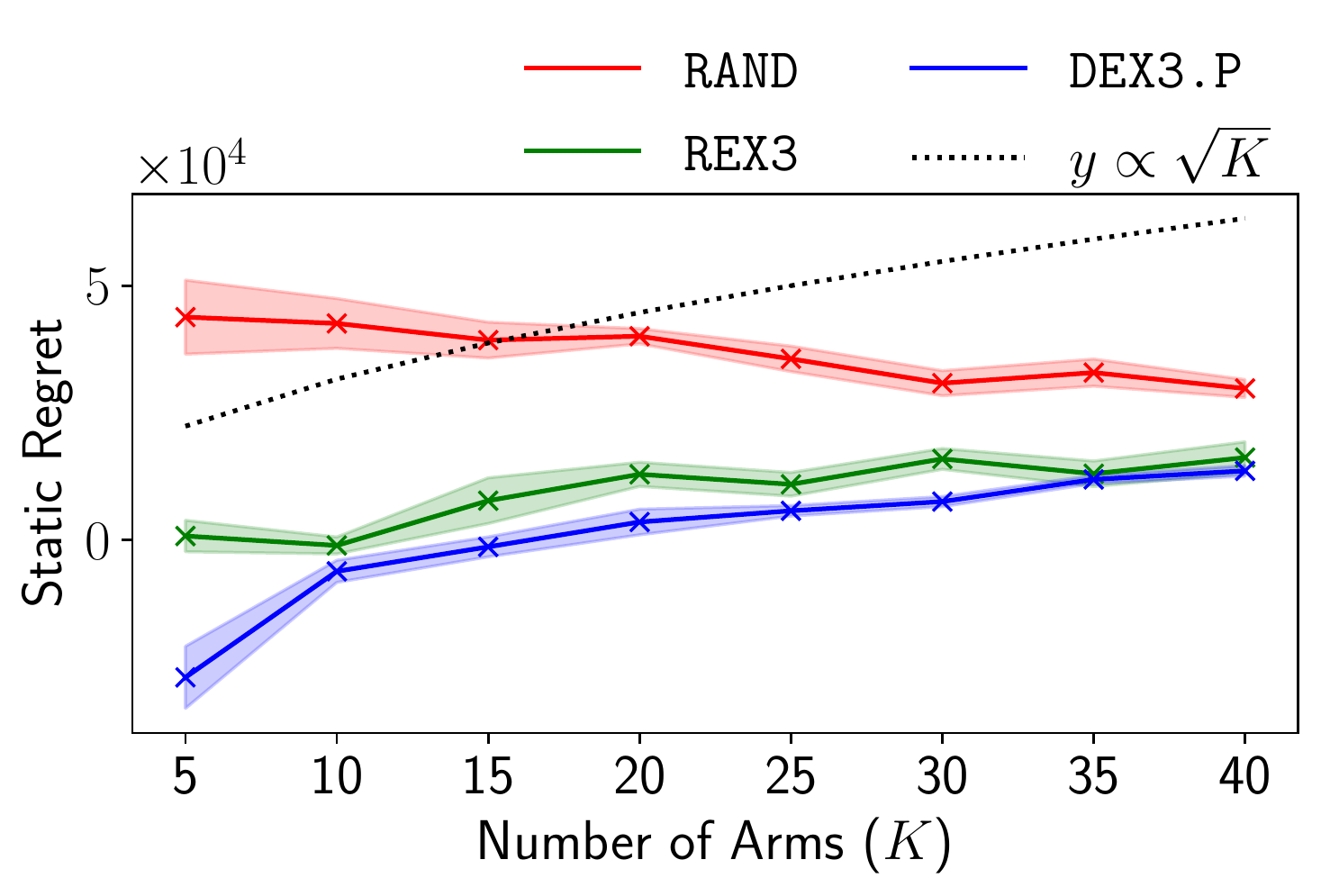}\label{fig:static_vs_K}}
  \caption{Comparison between \algp{}, \rand{}, and \rex{} in terms of static regret with respect to the best fixed action in hindsight. The shaded region represents standard deviation across $10$ independent runs.}
  \label{fig:static_regret}
\end{figure}
\paragraph{\cv{} non-stationarity:} Recall that $\Delta_T$ denotes a $T$-simplex and let $\bmmu \in \Delta_T$. Given \cv{} $V_T$, we construct a sequence of probability values $P_t(i, j)$ such that for all $t \in [T]$
\begin{equation*}
  \max_{i, j \in [K]} \abs{P_{t + 1}(i, j) - P_t(i, j)} = \mu_t V_T.
\end{equation*}
Here $\mu_t$ denotes the $t^{th}$ element of $\bmmu$. As before, we sample $P_1(i, j) \sim \mathrm{Uniform}(0, 1)$ and ensure that $P_t(i, i) = 0.5$, $P_t(j, i) = 1 - P_t(i, j)$, and $P_t(i, j) \in [0, 1]$ for all $t$. For the plots in Figure \ref{fig:continuous_dynamic_regret}, we sample $\bmmu \sim \mathrm{Dirichlet}(1, \dots, 1)$. The dynamic regret is measured with respect to the best action at each time step, i.e., $j^T = (j_1, \dots, j_T)$ is such that 
\begin{equation}
  \label{eq:optimal_action_sequence}
  j_t = \argmax_{j \in [K]} \frac{1}{2} (P_t(j, k_{+1, t}) + P_t(j, k_{-1, t}) - 1).
\end{equation}

\begin{figure}[H]
  \subfloat[Regret vs $T$]{\includegraphics[width=0.5\linewidth]{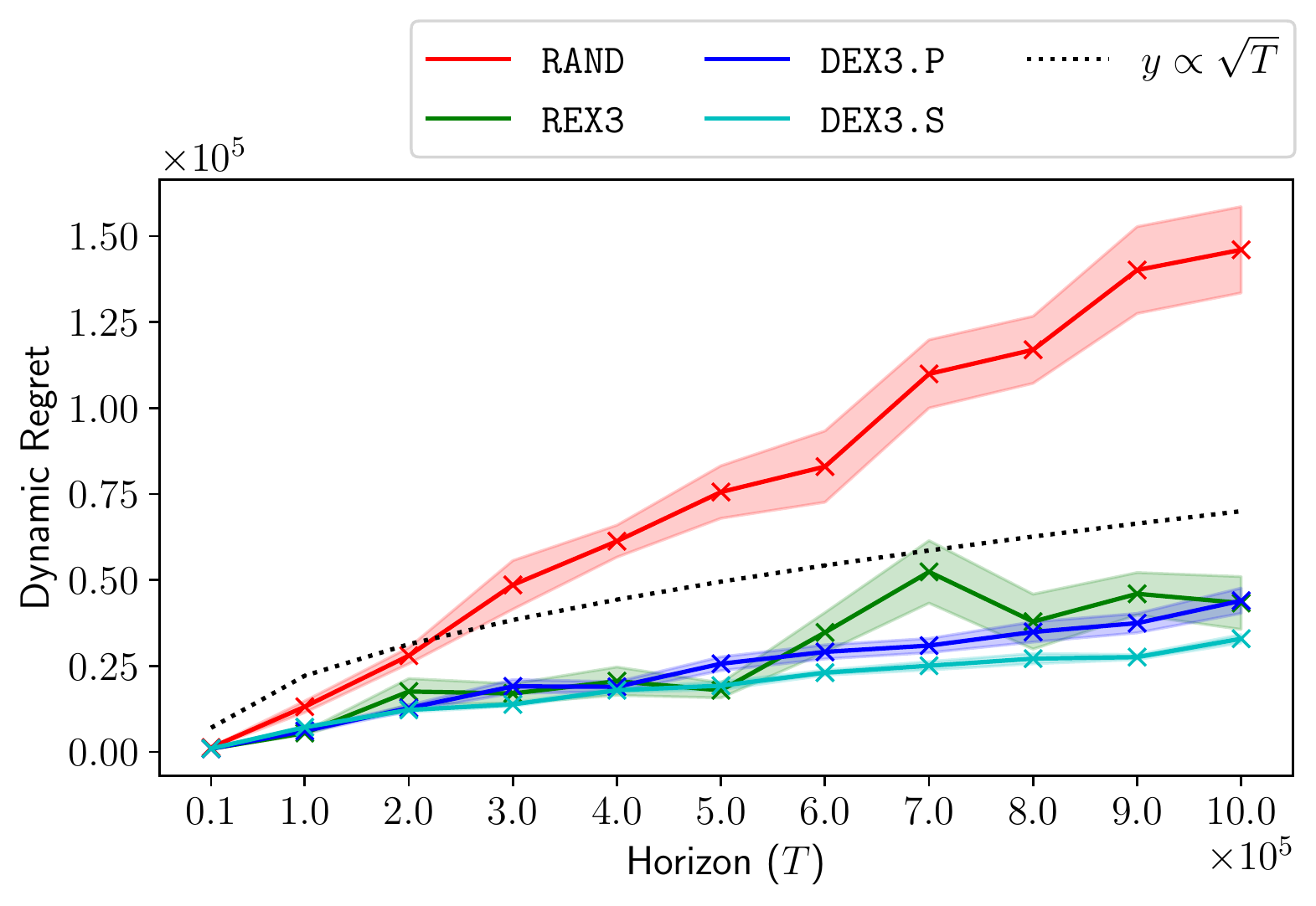}\label{fig:switch_vs_T}}%
  \subfloat[Regret vs $K$]{\includegraphics[width=0.5\linewidth]{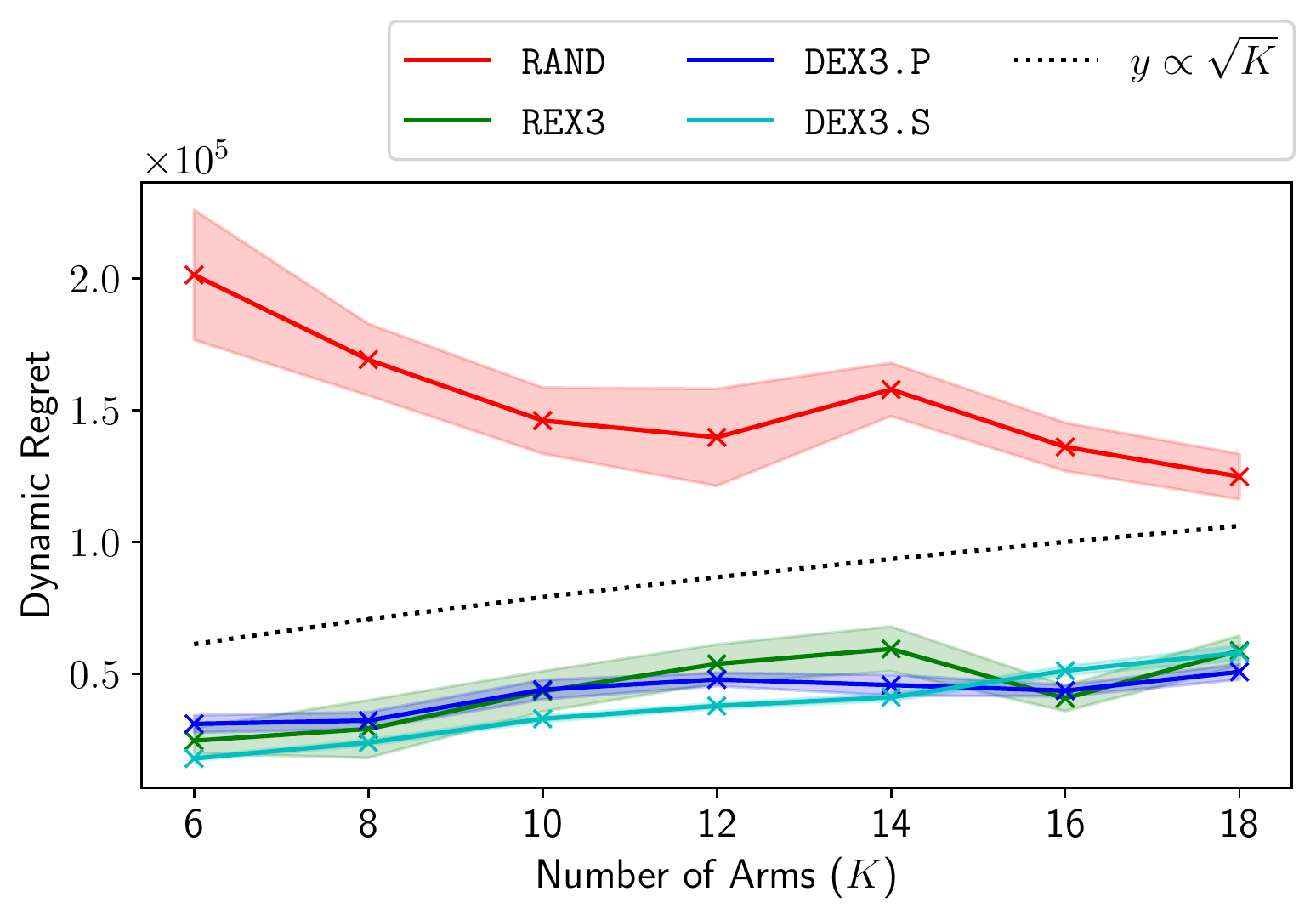}\label{fig:switch_vs_K}}
  \caption{Dynamic regret under \sv{} non-stationarity with respect to sequence $j^T$ defined in Remark \ref{remark:optimal_jT}. The shared region represents standard deviation across $10$ independent runs.}
  \label{fig:switch_dynamic_regret}
\end{figure}
\begin{figure}[H]
  \subfloat[Regret vs $T$]{\includegraphics[width=0.5\linewidth]{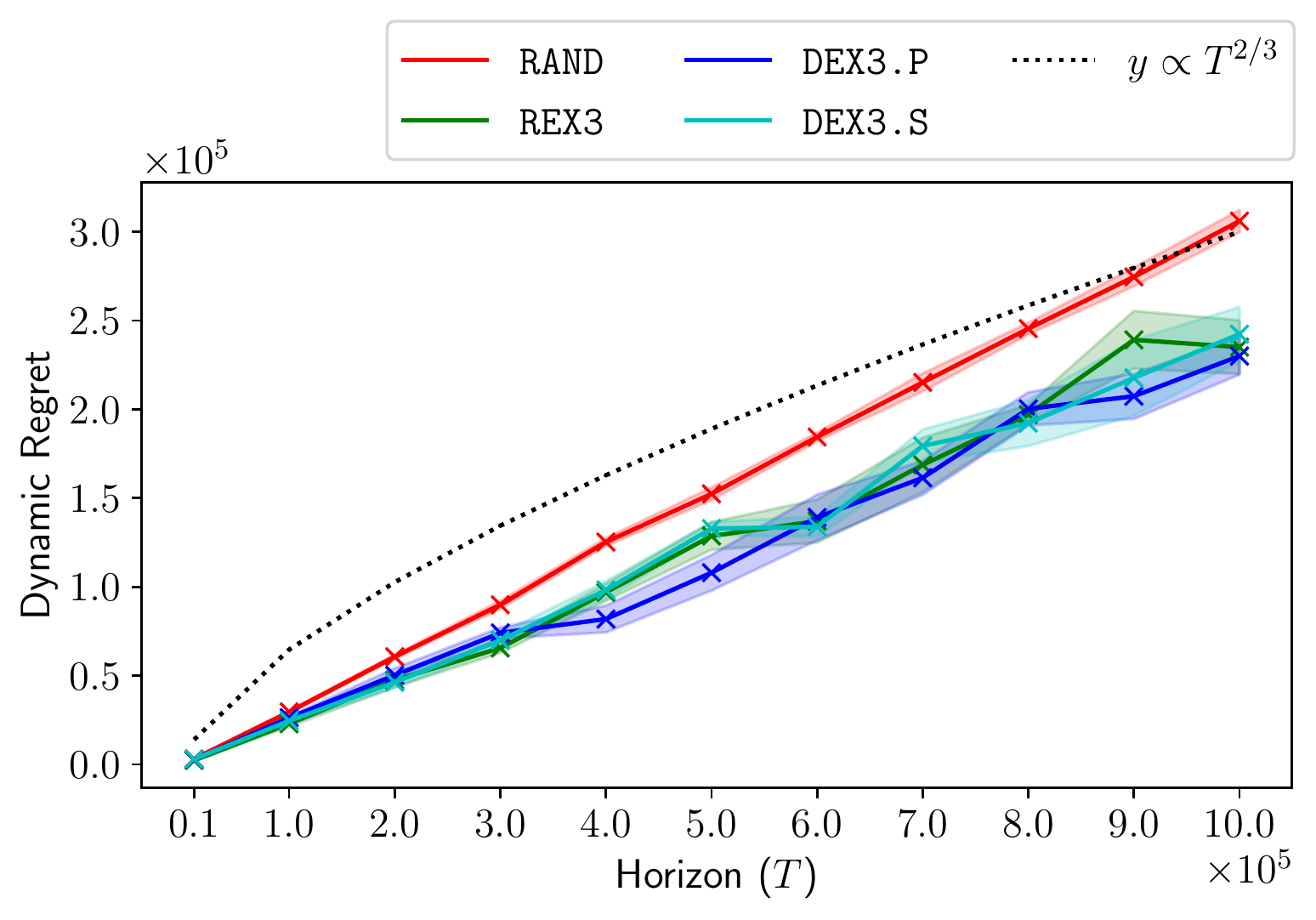}\label{fig:continuous_vs_T}}%
  \subfloat[Regret vs $K$]{\includegraphics[width=0.5\linewidth]{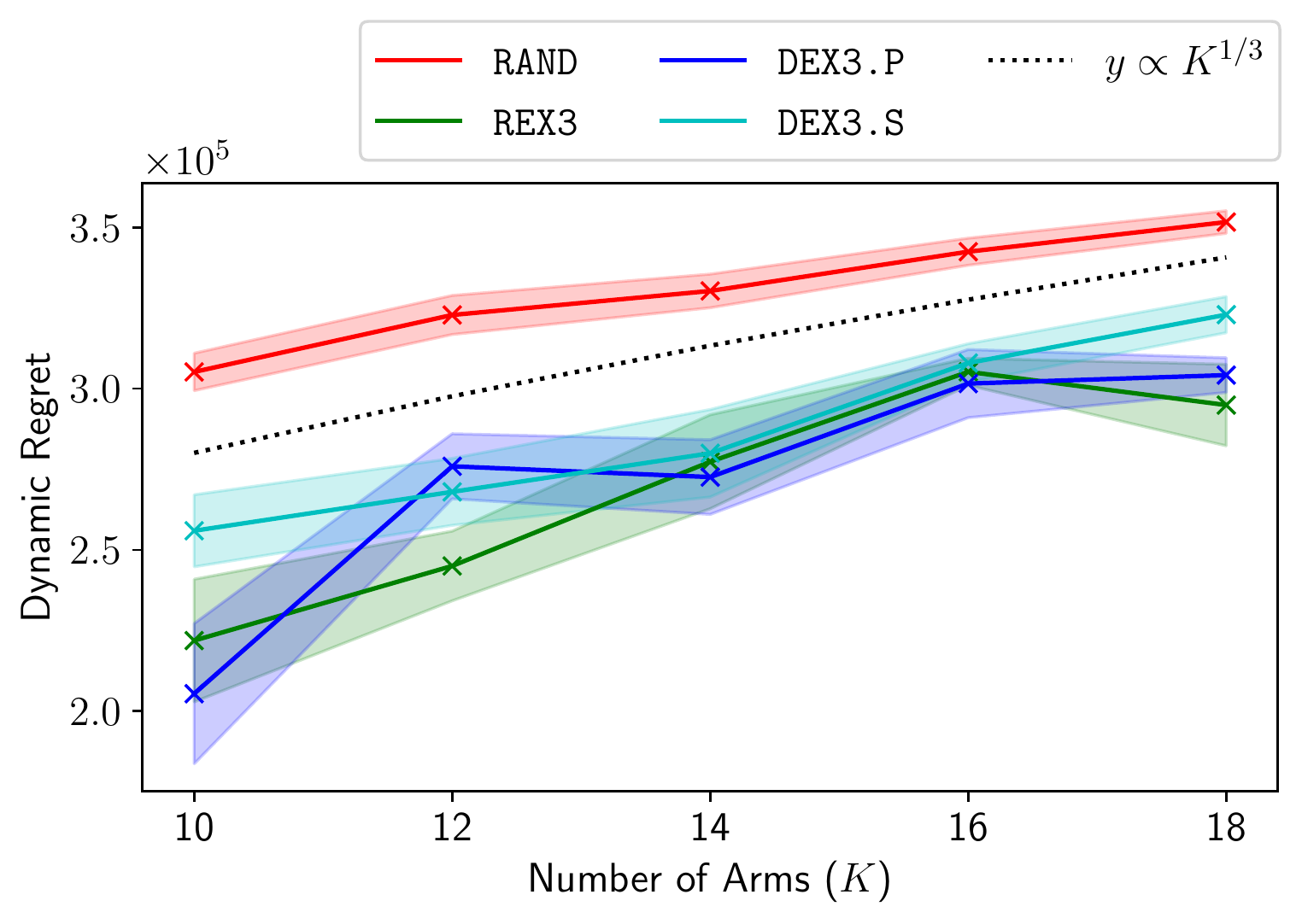}\label{fig:continuous_vs_K}}
  \caption{Dynamic regret under \cv{} non-stationarity w.r.t. sequence $j^T$ (Eqn. \eqref{eq:optimal_action_sequence}). The shared region represents standard deviation across $10$ independent runs.}
  \label{fig:continuous_dynamic_regret}
\end{figure}

Figures \ref{fig:continuous_vs_T} and \ref{fig:continuous_vs_K} show that $\dreg_T(j^T)$ grows as $O(T^{2/3})$ and $O(K^{1/3})$, respectively, as established in Theorem \ref{theorem:3s_boundcv}. Figure \ref{fig:continuous_vs_T} uses $K = V_T = 10$ and Figure \ref{fig:continuous_vs_K} uses $V_T = 10$ and $T = 10^6$. Finally, we would like to emphasize that while the dynamic regret for \rex{} is close to \algs{} in our experiments, theoretically \rex{} only comes with a static regret guarantee, whereas we have worst-case dynamic regret guarantees for \algs{}.


\section{Discussions}
\label{sec:concl}

We analyzed the complexity of {regret minimization} in non-stationary dueling bandits. We first proposed an $\tO(\sqrt{KT})$ adversarial dueling bandit algorithm which is shown to be provably competitive against any fixed benchmark (arm-distribution). This algorithm was further extended to yield $\tO(\sqrt{SKT})$ and $\tO({V_T^{1/3}K^{1/3}T^{2/3}})$ dynamic-regret algorithms for \sv\, $(S)$ and \cv\, $(V_T)$ non-stationarities. The optimality and effectiveness of our proposed techniques were justified through matching lower bound analysis and extensive empirical evaluations. It was also demonstrated that the ideas in the proposed algorithm apply to a different, Borda score based, regret objective. To the best of our knowledge, we are the first to analyze dynamic regret guarantees for dueling bandits in non-stationary environments.

\textbf{Future Works.} 
The potential future scope of this work is enormous, this being a nearly unexplored direction in preference bandits literature. An obvious next step is to formulate and study the \nstdb\, problem complexity for other meaningful definitions of non-stationarities.  Extending the results to contextual scenarios and subsetwise preference feedback setup would also be of much practical importance. Another useful generalization could be to make the algorithms delay tolerant where the  preference feedback are revealed only after a certain unknown delay \citep{delay1, delay2} or analyzing the dynamic regret with paid observations \citep{paid1}.

\newpage

\bibliographystyle{plainnat}
\bibliography{db_refs,nstDB}


\newpage
\onecolumn
\allowdisplaybreaks

\par\noindent\rule{\textwidth}{1pt}
\begin{center}
\Large\textbf{Supplementary Material: \\ \papertitle}
\end{center}
\par\noindent\rule{\textwidth}{0.4pt}

\appendix


\section{Missing details from the Static Regret Analysis of \algp{} (\Cref{section:3s})}
\label{appendix:3p_analysis}

The proof of Lemmas \ref{lemma:3p_bound_row_player} and \ref{lemma:3p_bound_column_player} use similar arguments. Therefore, we only present the proof of Lemma \ref{lemma:3p_bound_row_player} in Section \ref{appendix:3p_bound_row_player_proof}. Let us begin with a technical result.

\begin{lemma}
  \label{lemma:g_hat_upper_bounds_p_row_player}
  For any $\delta > 0$, $\beta \in (0, 1)$, with probability at least $1 - \delta/2$, $$\forall k \in [K], \;\;\;\; \sum_{t=1}^T \hat{g}_{+1, t}(k) \geq \sum_{t = 1}^T P_t(k, k_{-1, t}) - \frac{\ln(2K/\delta)}{\beta}.$$
\end{lemma}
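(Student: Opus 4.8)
The plan is to use a standard martingale concentration argument, following the high-probability analysis of EXP3.P \citep{auer+02N}. First I would fix an arm $k \in [K]$ and define the per-round difference $Z_t := \hat{g}_{+1, t}(k) - P_t(k, k_{-1, t})$. Conditioning on the history $\calF_{t-1}$ up to round $t$ (which determines $p_{+1,t}, p_{-1,t}$ and hence $k_{-1,t}$'s distribution), I would compute the conditional expectation of $\hat{g}_{+1,t}(k)$. Since $r_{+1, t} = o_t(k_{+1,t}, k_{-1,t})$ has conditional mean $P_t(k_{+1,t}, k_{-1,t})$, and since $k_{+1,t} = k$ with probability exactly $p_{+1,t}(k)$, the importance-weighting in \eqref{eq:hat_g} gives $\E[\hat{g}_{+1,t}(k)\mid \calF_{t-1}] = \E[P_t(k, k_{-1,t}) \mid \calF_{t-1}] + \beta/p_{+1,t}(k) \cdot p_{+1,t}(k) \cdot \frac{1}{\text{(something)}}$ — more precisely the estimator is designed so that $\E[\hat g_{+1,t}(k)\mid\calF_{t-1}] = P_t(k, k_{-1,t})\big|_{\text{in expectation over }k_{-1,t}} + \beta/p_{+1,t}(k)$. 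Wait — I need to be careful: $P_t(k, k_{-1,t})$ itself is random through $k_{-1,t}$. The cleanest route is to note that $\hat g_{+1,t}(k) + $ a correction is a conditionally unbiased estimate of $P_t(k, k_{-1,t})$ up to the additive $\beta/p_{+1,t}(k) \ge \beta$ bias term, so that $\E[\hat g_{+1,t}(k) \mid \calF_{t-1}] \ge P_t(k,k_{-1,t})$-in-conditional-expectation does not quite parse; instead I would work with $\sum_t (\hat g_{+1,t}(k) - P_t(k,k_{-1,t}))$ directly as a supermartingale-type sum after subtracting the $\beta$ terms.

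The key technical device is the exponential-moments trick specific to EXP3.P: because $\hat g_{+1,t}(k) \le (1+\beta)/p_{+1,t}(k)$ and $\beta \hat g_{+1,t}(k) \le 1+\beta \le 2$ is bounded, one can use the inequality $e^x \le 1 + x + x^2$ for $x \le 1$ (applied with $x = \beta \hat g_{+1,t}(k)$ after choosing $\beta$ small enough, which holds for our parameter regime) to show that $\E\big[\exp(\beta \hat g_{+1,t}(k) - \beta P_t(k,k_{-1,t}) - \beta^2 \hat g_{+1,t}(k)/p_{+1,t}(k)\cdot(\dots)) \mid \calF_{t-1}\big] \le 1$. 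The precise claim I would establish is that $M_0 = 1$ and $M_t = M_{t-1}\exp\!\big(\beta\hat g_{+1,t}(k) - \beta P_t(k, k_{-1,t}) - \beta^2(\,\cdot\,)\big)$ is a supermartingale, where the variance-correction term telescopes to something $O(\log(K/\delta))$ under our setting of $\beta$. Then by the maximal inequality / optional stopping, $\Pr[M_T \ge 1/(\delta/(2K))] \le \delta/(2K)$, which rearranges to the stated bound $\sum_t \hat g_{+1,t}(k) \ge \sum_t P_t(k,k_{-1,t}) - \ln(2K/\delta)/\beta$ for that fixed $k$. Finally I would take a union bound over all $k \in [K]$, absorbing the factor $K$ into the logarithm, to get the "$\forall k$" statement with probability $\ge 1-\delta/2$.

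The main obstacle I anticipate is controlling the variance/correction term in the exponential supermartingale so that it produces exactly $\ln(2K/\delta)/\beta$ and not something larger. The subtlety is that $\mathrm{Var}(\hat g_{+1,t}(k) \mid \calF_{t-1})$ can be as large as $1/p_{+1,t}(k)$, which is only bounded by $K/\gamma$; the $\beta$-shift in \eqref{eq:hat_g} is precisely what tames this — it ensures $\beta/p_{+1,t}(k) \le \hat g_{+1,t}(k)$ so the second-order term in $e^x \le 1+x+x^2$ is dominated by $\beta$ times the first-order term, letting the quadratic piece be reabsorbed rather than accumulated. Making this cancellation rigorous — i.e. verifying $\beta^2 \hat g_{+1,t}(k)^2 \le \beta \cdot (\text{const}) \cdot \beta \hat g_{+1,t}(k)/p_{+1,t}(k)$ and that the surviving term telescopes correctly — is the delicate bookkeeping step; everything else (unbiasedness up to the $\beta$ bias, the union bound, the rearrangement) is routine. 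I would also need to double-check that the chosen $\beta = \sqrt{\ln K / (KT)}$ indeed keeps $\beta \hat g_{+1,t}(k) \le 1$ with the relevant probability, which follows from $p_{+1,t}(k) \ge \gamma/K$ and the stated relation $\gamma = 2\eta K \ge (1+\beta)\eta K$.
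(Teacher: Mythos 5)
Your strategy is the same as the paper's: an exponential moment (supermartingale) bound on $\sum_t\bigl(P_t(k,k_{-1,t})-\hat g_{+1,t}(k)\bigr)$ using $e^x\le 1+x+x^2$, followed by Markov's inequality and a union bound over $k\in[K]$, with the $+\beta/p_{+1,t}(k)$ shift in \eqref{eq:hat_g} supplying exactly the $e^{-\beta^2/p_{+1,t}(k)}$ factor that cancels the conditional variance (which is at most $1/p_{+1,t}(k)$). The paper also resolves the issue you flagged about $P_t(k,k_{-1,t})$ being random: it conditions on a filtration $\calF^{+1}_{t-1}$ that \emph{includes} $k_{-1,t}$ (but not $k_{+1,t}$ or $o_t$), so that $\rmE{\hat g_{+1,t}(k)\mid\calF^{+1}_{t-1}}=P_t(k,k_{-1,t})+\beta/p_{+1,t}(k)$ holds exactly and the telescoping is clean.

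Two slips in your write-up would need fixing, though both are repairable within your own outline. First, the sign: your supermartingale increment $\exp\bigl(\beta\hat g_{+1,t}(k)-\beta P_t(k,k_{-1,t})-\cdots\bigr)$ controls the \emph{upper} tail of $\sum_t\hat g_{+1,t}(k)$, whereas the lemma is a \emph{lower} bound; you must exponentiate $\beta\bigl(P_t(k,k_{-1,t})-\hat g_{+1,t}(k)\bigr)$ so that Markov bounds $\Pr\bigl[\sum_t(P_t-\hat g_{+1,t})>\ln(2K/\delta)/\beta\bigr]$. Second, you propose applying $e^x\le 1+x+x^2$ with $x=\beta\hat g_{+1,t}(k)$ and hope that ``$\beta$ small enough'' gives $x\le 1$; this fails in the stated parameter regime, since $p_{+1,t}(k)\ge\gamma/K$ with $\gamma=\beta K$ gives only $\beta\hat g_{+1,t}(k)\le 1+\beta>1$, and no rescaling of $\beta$ helps because $\gamma$ scales with it. The paper instead applies the inequality to $x=\beta\bigl(P_t(k,k_{-1,t})-\hat g_{+1,t}(k)+\beta/p_{+1,t}(k)\bigr)$, which is at most $\beta P_t(k,k_{-1,t})\le\beta<1$ \emph{unconditionally} precisely because $\hat g_{+1,t}(k)\ge\beta/p_{+1,t}(k)$ --- this is the real role of the $\beta$-shift in making the exponent one-sidedly bounded, and it is why the lemma needs no assumption on $\gamma$ or $\eta$ at all. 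With those two corrections your argument matches the paper's proof.
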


\begin{proof}

Let $\calF^{+1}_{t - 1} = \sigma\left( \lbrace \bfp_{i, s}, k_{i, s}, o_s \rbrace_{s < t, i \in \lbrace +1, -1 \rbrace} \; \cup \; \lbrace \bfp_{+1, t}, \bfp_{-1, t}, k_{-1, t} \rbrace \right)$ be the $\sigma$-algebra generated by using all information till time $t$, except the choice of the row player's action $k_{+1, t}$ and the feedback from the environment $o_t(k_{+1, t}, k_{-1, t})$ at time $t$. Note that
\begin{equation}
    \label{eq:g_hat_expectation}
    \rmE{ \hat{g}_{+1, t}(k) \vert \calF^{+1}_{t - 1}} = P_t(k, k_{-1, t}) + \frac{\beta}{p_{+1, t}(k)}.
\end{equation}

By Markov's inequality,
\begin{align*}
    \rmP{e^{-\beta \sum_{t=1}^T \left(\hat{g}_{+1, t}(k) -  P_t(k, k_{-1, t})\right)} \leq \frac{2}{\delta}} &\geq 1 - \rmP{e^{-\beta \sum_{t=1}^T \left(\hat{g}_{+1, t}(k) -  P_t(k, k_{-1, t})\right)} \geq \frac{2}{\delta}} \\
    &\geq 1 - \frac{\delta}{2} \rmE{\exp\left( \beta \sum_{t=1}^T \left(P_t(k, k_{-1, t}) - \hat{g}_{+1, t}(k) \right) \right)}.
\end{align*}
We only need to show that $\rmE{\exp\left( \beta \sum_{t=1}^T \left(P_t(k, k_{-1, t}) - \hat{g}_{+1, t}(k) \right) \right)} \leq 1$ to finish the proof. Consider the quantity $\rmE{ \exp\left(\beta(P_t(k, k_{-1, t}) - \hat{g}_{+1, t}(k)) \right) \vert \calF^{+1}_{t-1}}$.
\begin{align*}
    \rmE{ e^{\beta(P_t(k, k_{-1, t}) - \hat{g}_{+1, t}(k))}\vert \calF^{+1}_{t-1}} &= \rmE{ e^{\beta(P_t(k, k_{-1, t}) - \hat{g}_{+1, t}(k) + \beta/p_{+1, t}(k))} \Bigg\vert \calF^{+1}_{t-1}} e^{-\frac{\beta^2}{p_{+1, t}(k)}} \\
    &\overset{(a)}{\leq} \mathrm{E}\Bigg[1 + \beta\left(P_t(k, k_{-1, t}) - \hat{g}_{+1, t}(k) + \frac{\beta}{p_{+1, t}(k)}\right) + \\
    &\hspace{1.1cm} \beta^2\left(P_t(k, k_{-1, t}) - \hat{g}_{+1, t}(k) + \frac{\beta}{p_{+1, t}(k)}\right)^2 \Bigg\vert \calF^{+1}_{t-1} \Bigg] e^{-\frac{\beta^2}{p_{+1, t}(k)}} \\
    &\overset{(b)}{=} \left[1 + \beta^2\rmE{\left(P_t(k, k_{-1, t}) - \hat{g}_{+1, t}(k) + \frac{\beta}{p_{+1, t}(k)}\right)^2 \Bigg\vert \calF^{+1}_{t-1}} \right] e^{-\frac{\beta^2}{p_{+1, t}(k)}} \\
    &\overset{(c)}{=} \left[1 + \beta^2 \mathrm{Var}\left( \hat{g}_{+1, t}(k) \vert \calF^{+1}_{t-1} \right) \right] e^{-\frac{\beta^2}{p_{+1, t}(k)}} \\
    &= \left[1 + \beta^2 \mathrm{Var}\left( \frac{o_t(k, k_{-1, t})\bfone{k_{+1, t} = k}}{p_{+1, t}(k)} \Bigg\vert \calF^{+1}_{t-1} \right) \right] e^{-\frac{\beta^2}{p_{+1, t}(k)}} \\
    &\leq \left[1 + \beta^2 \rmE{\left(\frac{o_t(k, k_{-1, t})\bfone{k_{+1, t} = k}}{p_{+1, t}(k)} \right)^2 \Bigg\vert \calF^{+1}_{t-1}} \right] e^{-\frac{\beta^2}{p_{+1, t}(k)}} \\
    &= \left[1 + \frac{\beta^2}{p_{+1, t}^2(k)} \rmE{o_t(k, k_{-1, t})\bfone{k_{+1, t} = k} \vert \calF^{+1}_{t-1}} \right] e^{-\frac{\beta^2}{p_{+1, t}(k)}} \\
    &\overset{(d)}{\leq} \left[1 + \frac{\beta^2}{p_{+1, t}(k)}\right] e^{-\frac{\beta^2}{p_{+1, t}(k)}} \\
    &\overset{(e)}{\leq} 1.
\end{align*}
Here, $(a)$ follows as $\beta(P_t(k, k_{-1, t}) - \hat{g}_{+1, t}(k) + \beta/p_{+1, t}(k)) \leq 1$ because $\hat{g}_{+1, t}(k) \geq \beta/p_{+1, t}(k)$ by \eqref{eq:hat_g}. Thus, we can use the identity $e^x \leq 1 + x + x^2$ for $x \leq 1$. $(b)$ and $(c)$ follow from \eqref{eq:g_hat_expectation}, $(d)$ uses the fact that $o_t(i, j) \leq 1$, and $(e)$ follows as $1 + x \leq e^x$. Now, by induction,
\begin{align*}
    \rmE{e^{\beta \sum_{t=1}^T \left(P_t(k, k_{-1, t}) - \hat{g}_{+1, t}(k) \right)}} &= \rmE{\rmE{e^{\beta \left(P_T(k, k_{-1, T}) - \hat{g}_{+1, T}(k) \right)} \Big\vert \calF^{+1}_{T-1}} e^{\beta \sum_{t=1}^{T-1} \left(P_t(k, k_{-1, t}) - \hat{g}_{+1, t}(k) \right)}} \\
    &\leq \rmE{e^{\beta \sum_{t=1}^{T-1} \left(P_t(k, k_{-1, t}) - \hat{g}_{+1, t}(k) \right)}} \leq \hdots \leq 1.
\end{align*}
Setting $\delta = \delta/K$ and taking a union bound over all $k \in [K]$ finishes the proof.

\end{proof}


\subsection{Proof of Lemma \ref{lemma:3p_bound_row_player}}
\label{appendix:3p_bound_row_player_proof}

\algprowplayer*

\begin{proof}

Recall from Algorithm \ref{alg:3p} that $W_{+1, t}(k) = \exp(\eta\sum_{s=1}^{t - 1} \hat{g}_{+1, s}(k))$ and define $W_{+1, t} = \sum_{k = 1}^K W_{+1, t}(k)$. Then,
\begin{align*}
    W_{+1, t + 1} &= \sum_{k = 1}^K W_{+1, t + 1}(k) \\
    &= \sum_{k = 1}^K W_{+1, t}(k) \exp(\eta \hat{g}_{+1, t}(k)) \\
    &= W_{+1, t} \sum_{k=1}^K \frac{W_{+1, t}(k)}{W_{+1, t}} \exp(\eta \hat{g}_{+1, t}(k)) \\
    &\overset{(a)}{=} W_{+1, t} \sum_{k=1}^K \frac{p_{+1, t}(k) - \gamma/K}{1 - \gamma} \exp(\eta \hat{g}_{+1, t}(k)) \\
    &\overset{(b)}{\leq} W_{+1, t} \sum_{k=1}^K \frac{p_{+1, t}(k) - \gamma/K}{1 - \gamma} (1 + \eta \hat{g}_{+1, t}(k) + \eta^2 \hat{g}^2_{+1, t}(k)) \\
    &\leq W_{+1, t} \left(1 + \frac{\eta}{1 - \gamma} \sum_{k=1}^K p_{+1, t}(k) \hat{g}_{+1, t}(k) + \frac{\eta^2}{1 - \gamma} \sum_{k=1}^K p_{+1, t}(k) \hat{g}^2_{+1, t}(k) \right) \\
    &\overset{(c)}{\leq} W_{+1, t} \exp\left(\frac{\eta}{1 - \gamma} \sum_{k=1}^K p_{+1, t}(k) \hat{g}_{+1, t}(k) + \frac{\eta^2}{1 - \gamma} \sum_{k=1}^K p_{+1, t}(k) \hat{g}^2_{+1, t}(k) \right)
\end{align*}
Here, $(a)$ follows from the definition of $p_{+1, t}(k)$ in Algorithm \ref{alg:3p}. Note that, by assumption on $\eta$, $\beta$, $\gamma$, and $K$
\begin{align*}
    \eta \hat{g}_{+1, t}(k) \leq \eta \frac{1 + \beta}{p_{+1, t}(k)} \leq \eta K\frac{1 + \beta}{\gamma} \leq 1.
\end{align*}
Thus, $(b)$ follows from the inequality $e^x \leq 1 + x + x^2$ if $x \leq 1$. $(c)$ follows from the inequality $1 + x \leq e^x$. We now have,
\begin{align*}
    \ln \frac{W_{+1, t + 1}}{W_{+1, t}} \leq \frac{\eta}{1 - \gamma} \sum_{k=1}^K p_{+1, t}(k) \hat{g}_{+1, t}(k) + \frac{\eta^2}{1 - \gamma} \sum_{k=1}^K p_{+1, t}(k) \hat{g}^2_{+1, t}(k).
\end{align*}
Summing over $t = 1, 2, \dots, T$, we get
\begin{equation}
    \label{eq:w_ration_upper_bound_row_player}
    \ln \frac{W_{+1, T + 1}}{W_{+1, 1}} \leq \frac{\eta}{1 - \gamma} \sum_{t=1}^{T} \sum_{k=1}^K p_{+1, t}(k) \hat{g}_{+1, t}(k) + \frac{\eta^2}{1 - \gamma} \sum_{t=1}^{T} \sum_{k=1}^K p_{+1, t}(k) \hat{g}^2_{+1, t}(k).
\end{equation}
Note that, $W_{+1, 1} = K$ (see Algorithm \ref{alg:3p}). For any action $j \in [K]$,
\begin{equation}
    \label{eq:w_ration_lower_bound_row_player}
    \ln \frac{W_{+1, T + 1}}{W_{+1, 1}} \geq \ln W_{+1, T + 1}(j) - \ln K = \eta\sum_{t=1}^T \hat{g}_{+1, t}(j) - \ln K
\end{equation}
Combining \eqref{eq:w_ration_upper_bound_row_player} and \eqref{eq:w_ration_lower_bound_row_player} gives
\begin{equation*}
    \sum_{t=1}^T \hat{g}_{+1, t}(j) \leq \frac{1}{1 - \gamma} \sum_{t=1}^T \sum_{k=1}^K p_{+1, t}(k) \hat{g}_{+1, t}(k) + \frac{\eta}{1 - \gamma} \sum_{t=1}^T \sum_{k=1}^K p_{+1, t}(k) \hat{g}^2_{+1, t}(k) + \frac{\ln K}{\eta}.
\end{equation*}
Now consider the sum $\sum_{k=1}^K p_{+1, t}(k) \hat{g}_{+1, t}(k)$.
\begin{equation}
    \label{eq:hat_g_expectation}
    \sum_{k=1}^K p_{+1, t}(k) \hat{g}_{+1, t}(k) = \sum_{k=1}^K p_{+1, t}(k) \frac{o_t(k, k_{-1, t})\bfone{k_{+1, t} = k} + \beta}{p_{+1, t}(k)} = o_t(k_{+1, t}, k_{-1, t}) + K\beta.
\end{equation}
As $p_{+1, t}(k) \hat{g}_{+1, t}(k) \leq 1 + \beta$, we get
\begin{align*}
    (1 - \gamma) \sum_{t=1}^T \hat{g}_{+1, t}(j) &\leq \beta KT + \sum_{t=1}^T o_t(k_{+1, t}, k_{-1, t}) + \eta (1 + \beta) \sum_{t = 1}^T \sum_{k = 1}^K \hat{g}_{+1, t}(k) + \frac{\ln K}{\eta} \\
    &\leq \beta KT + \sum_{t=1}^T o_t(k_{+1, t}, k_{-1, t}) + \eta (1 + \beta) K \max_{k \in [K]} \sum_{t = 1}^T \hat{g}_{+1, t}(k) + \frac{\ln K}{\eta} \\
    &\leq \beta KT + \sum_{t=1}^T o_t(k_{+1, t}, k_{-1, t}) + \gamma \max_{k \in [K]} \sum_{t = 1}^T \hat{g}_{+1, t}(k) + \frac{\ln K}{\eta}
\end{align*}
The last inequality uses the fact that $\gamma \geq \eta (1 + \beta) K$ by assumption. As the inequality holds for any arbitrary $j \in [K]$, we can take a $\max$ over $j \in [K]$ on the left hand side. Simplifying this further yields:
\begin{align*}
    (1 - 2\gamma) \max_{j \in [K]} \sum_{t=1}^T \hat{g}_{+1, t}(j) \leq \beta K T + \sum_{t=1}^T o_t(k_{+1, t}, k_{-1, t}) + \frac{\ln K}{\eta}.
\end{align*}
By Azuma-Hoeffding's inequality, with probability at least $1 - \delta/2$, 
\begin{equation*}
    \sum_{t = 1}^T o_t(k_{+1, t}, k_{-1, t}) \leq \sum_{t = 1}^T P_t(k_{+1, t}, k_{-1, t}) + 2\sqrt{2T \ln(2/\delta)}.
\end{equation*}
Combining this with Lemma \ref{lemma:g_hat_upper_bounds_p_row_player}, we get with probability at least $1 - \delta$,
\begin{equation*}
    (1 - 2\gamma) \max_{j \in [K]} \left[ \sum_{t=1}^T P_t(j, k_{-1, t}) - \frac{\ln(2K/\delta)}{\beta} \right] \leq \beta KT + \sum_{t = 1}^T P_t(k_{+1, t}, k_{-1, t}) + 2\sqrt{2T \ln(2/\delta)} + \frac{\ln K}{\eta}.
\end{equation*}
Let $\gamma = 2\eta K \geq \eta (1 + \beta) K$. Then,
\begin{equation*}
    \max_{j \in [K]} \sum_{t=1}^T P_t(j, k_{-1, t}) - \sum_{t = 1}^T P_t(k_{+1, t}, k_{-1, t}) \leq \beta KT + 2\sqrt{2T \ln(2/\delta)} + \frac{\ln K}{\eta} + \frac{\ln(2K/\delta)}{\beta} + 4\eta K T.
\end{equation*}
Setting $\eta = \frac{1}{2}\sqrt{\frac{\ln K}{KT}}$ and $\beta = \sqrt{\frac{\ln K}{KT}}$ finishes the proof.

\end{proof}


\section{Missing details from Dynamic Regret Analysis of 
algs{} (\Cref{section:3s})}
\label{appendix:3s_analysis}


\subsection{Missing Details from Dynamic Regret Analysis in terms of Switching Variation (Section \ref{section:switch_nonstationarity})}
\label{appendix:3s_analysis_switch}

Lemmas \ref{lemma:3s_row_player_bound} and \ref{lemma:3s_column_player_bound} bound $\dreg_T^{+1}(j^T)$ and $\dreg_T^{-1}(j^T)$, respectively.

\begin{restatable}{lemma}{algsrowplayer}
\label{lemma:3s_row_player_bound}
Let $j^T = (j_1, \dots, j_T)$ be an arbitrary sequence of actions with $S$ switches. The following bounds hold with probability at least $1 - \delta$ when $\alpha = \frac{1}{T}$ and $\gamma = 2 \eta K$:
  \begin{align*}
    \dreg_T^{+1}(j^T) &= O \left( S \sqrt{KT} \ln \frac{KT}{\delta} \right) \text{ if } \beta = \eta = \frac{1}{\sqrt{KT}} \\
    \dreg_T^{+1}(j^T) &= O \left( \sqrt{SKT} \ln \frac{KT}{\delta} \right) \text{ if } \beta = \eta = \sqrt{\frac{S}{KT}}.
  \end{align*}
\end{restatable}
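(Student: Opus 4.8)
\textbf{Proof plan for Lemma~\ref{lemma:3s_row_player_bound}.} The plan is to mimic the analysis of Lemma~\ref{lemma:3p_bound_row_player}, but with the modified weight update $W_{i,t+1}(k) = W_{i,t}(k) e^{\eta \hat g_{i,t}(k)} + e\alpha W_{i,t}$ playing the role of a fixed-share / restart mechanism that allows the comparator to change up to $S$ times. First I would set up notation: partition $[T]$ into the $S$ intervals $\mathcal I_1, \ldots, \mathcal I_S$ on which the comparator sequence $j^T$ is constant, say $j_t = j^s$ for $t \in \mathcal I_s = [\tau_s, \tau_{s+1})$. The key structural fact to extract from the new update is a lower bound on $W_{+1, t+1}$: because of the additive $e\alpha W_{+1,t}$ term, for any $k$ and any earlier time $\tau < t$ we have $W_{+1, t+1}(k) \ge e\alpha W_{+1, \tau} \cdot \exp\!\big(\eta \sum_{s=\tau}^{t} \hat g_{+1,s}(k)\big)$ (up to absorbing the $e$ factor against a single $e^{\eta \hat g}$ term, using $\eta \hat g \le 1$). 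This is the standard fixed-share trick: the algorithm never lets any arm's relative weight drop below $\sim \alpha$, so it can ``recover'' a new best arm at the start of each block with only an additive $\ln(1/\alpha)$ overhead per block.

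The key steps, in order, would be: (i) Re-derive the per-step upper bound on $\ln(W_{+1,t+1}/W_{+1,t})$ exactly as in Lemma~\ref{lemma:3p_bound_row_player}; the new additive term only \emph{increases} $W_{+1,t+1}$, and one checks $\ln\!\big(W_{+1,t} e^{(\cdots)} + e\alpha W_{+1,t}\big) \le \ln(W_{+1,t}) + (\cdots) + e\alpha$ via $\ln(1+x)\le x$, so summing over $t$ contributes an extra $e\alpha T$ term, which is $O(1)$ for $\alpha = 1/T$. (ii) For the lower bound, sum over the $S$ blocks: within block $\mathcal I_s$, bound $\ln W_{+1, \tau_{s+1}} - \ln W_{+1, \tau_s} \ge \eta \sum_{t \in \mathcal I_s} \hat g_{+1, t}(j^s) - \ln(1/\alpha) - O(1)$ using the recovery bound above. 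Telescoping across blocks, the $\ln(1/\alpha) = \ln T$ penalty is paid $S$ times, giving a lower bound $\eta \sum_{t=1}^T \hat g_{+1,t}(j_t) - S\ln(T) - \ln K$. (iii) Combine upper and lower bounds, divide by $\eta$, and handle the second-order term $\sum_{k} p_{+1,t}(k)\hat g^2_{+1,t}(k)$ and the $\sum_k \hat g_{+1,t}(k)$ term exactly as before, using $\gamma = 2\eta K \ge (1+\beta)\eta K$ to absorb it. (iv) Apply Lemma~\ref{lemma:g_hat_upper_bounds_p_row_player} (which is an interval-free statement and applies verbatim, since the $\hat g$ estimator and the filtration are unchanged) to pass from $\sum_t \hat g_{+1,t}(j_t)$ to $\sum_t P_t(j_t, k_{-1,t})$ — though here one must be slightly careful and apply the high-probability bound simultaneously for all arms via the union bound already built into that lemma, so the comparator switching costs nothing extra in the concentration step. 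Likewise apply Azuma--Hoeffding to relate $\sum_t o_t(k_{+1,t},k_{-1,t})$ to $\sum_t P_t(k_{+1,t}, k_{-1,t})$. (v) Collect terms: the regret bound is $O\!\big(\beta K T + \tfrac{\ln K + S\ln T}{\eta} + \tfrac{\ln(KT/\delta)}{\beta} + \eta K T\big)$, and then optimize. With $\beta = \eta = \sqrt{S/(KT)}$ the dominant terms $\beta KT$, $\eta KT$, $(S\ln T)/\eta$, and $\ln(KT/\delta)/\beta$ all balance to $O(\sqrt{SKT}\,\ln(KT/\delta))$; with $\beta=\eta = 1/\sqrt{KT}$ (the $S$-agnostic choice) one gets $O(S\sqrt{KT}\,\ln(KT/\delta))$ since the $(S\ln T)/\eta$ term becomes $S\sqrt{KT}\ln T$.

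The main obstacle I expect is step~(ii): making the ``recovery'' bound precise. One must show carefully that the additive share term guarantees $W_{+1,\tau_s}(k) \ge c\,\alpha\, W_{+1,\tau_s}$ for \emph{every} arm $k$ (in particular for the new block's optimal arm $j^s$), and then track how this seed weight grows through block $\mathcal I_s$ under the multiplicative updates — the subtlety is that the additive term keeps getting injected at every step inside the block too, so one should either (a) lower-bound $W_{+1,t+1}(k)$ by just the one chain $e\alpha W_{+1,\tau_s} e^{\eta\sum_{t=\tau_s}^{\cdot}\hat g}$ ignoring later injections, or (b) argue the later injections only help. Option (a) is cleanest. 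A second, more bookkeeping-level obstacle is correctly propagating the $e$ factors and the $\eta \hat g \le 1$ bound (which requires $\gamma \ge (1+\beta)\eta K$, hence $p_{i,t}(k) \ge \gamma/K \ge (1+\beta)\eta$) so that $e\,e^{\eta \hat g} \le e^{1 + \eta \hat g} \le e^{2}$ type estimates are valid where needed; these are routine but must be stated. Finally, one should double-check that Lemma~\ref{lemma:g_hat_upper_bounds_p_row_player} is indeed stated with $\delta$-dependence uniform over all $k \in [K]$ (it is, via the union bound at the end of its proof), so that evaluating it along the time-varying comparator $j_t$ incurs no additional union-bound cost — this is the one place where switching the comparator could have been expensive, and it turns out it is not.
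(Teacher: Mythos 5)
Your proposal matches the paper's proof essentially step for step: the same per-step potential upper bound with the extra $e\alpha$ term, the same block-wise lower bound $W_{+1,T_{s+1}}(j^s)\ge e\alpha\,W_{+1,T_s}\exp(\eta\sum_{t>T_s}\hat g_{+1,t}(j^s))$ (your option (a), absorbing the $e$ via $\eta\hat g\le 1$), the same $S\ln(1/\alpha)$ overhead, the same use of Lemma~\ref{lemma:g_hat_upper_bounds_p_row_player} and Azuma--Hoeffding, and the same parameter optimization. The only (immaterial) bookkeeping difference is that the paper applies the $\hat g$-concentration separately on each of the $S$ blocks and so carries an $\tfrac{S}{\beta}\ln\tfrac{2K}{\delta}$ term rather than your $\tfrac{1}{\beta}\ln\tfrac{KT}{\delta}$, which does not change the optimized rates.
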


\begin{proof}
Assuming that $\eta K \frac{1 + \beta}{\gamma} \leq 1$, a similar calculation as in the proof of Lemma \ref{lemma:3p_bound_row_player} gives
\begin{equation*}
    \ln \frac{W_{+1, t + 1}}{W_{+1, t}} \leq \frac{\eta}{1 - \gamma} \sum_{k=1}^K p_{+1, t}(k) \hat{g}_{+1, t}(k) + \frac{\eta^2}{1 - \gamma} \sum_{k=1}^K p_{+1, t}(k) \hat{g}_{+1, t}^2(k) + e\alpha.
\end{equation*}
Using \eqref{eq:hat_g_expectation} and noting that $p_{+1, t}(k) \hat{g}_{+1, t}(k) \leq 1 + \beta$, we get
\begin{equation*}
    \ln \frac{W_{+1, t + 1}}{W_{+1, t}} \leq \frac{\eta}{1 - \gamma} (o_t(k_{+1, t}, k_{-1, t}) + \beta K) + (1 + \beta) \frac{\eta^2}{1 - \gamma} \sum_{k=1}^K \hat{g}_{+1, t}(k) + e\alpha.
\end{equation*}
Recall that $S = 1 + \abs{\lbrace 1 \leq \ell < T: j_\ell \neq j_{\ell + 1} \rbrace}$. Divide the time horizon $T$ into sub-intervals $$[T_1, \dots, T_2), [T_2, \dots, T_3), \dots, [T_{S}, \dots, T_{S + 1}),$$ where $T_1 = 1$ and $T_{S + 1} = T + 1$ and $j_{T_s} = j_{T_s + 1} = \dots = j_{T_{s + 1} - 1}$ for all sub-intervals $s = 1, \dots, S$. Further, define $\Delta^s = T_{s + 1} - T_s$. Summing over $t \in [T_s, T_{s + 1})$ results in
\begin{equation*}
    \ln \frac{W_{+1, T_{s + 1}}}{W_{+1, T_s}} \leq \frac{\eta}{1 - \gamma} \beta K \Delta^s + \frac{\eta}{1 - \gamma} \sum_{t = T_s}^{T_{s + 1} - 1} o_t(k_{+1, t}, k_{-1, t}) + (1 + \beta) \frac{\eta^2}{1 - \gamma} \sum_{t = T_s}^{T_{s + 1} - 1} \sum_{k = 1}^K \hat{g}_{+1, t}(k) + e\alpha \Delta^s.
\end{equation*}
For any action $j^s \in [K]$,
\begin{align*}
    W_{+1, T_{s + 1}}(j^s) &\geq W_{+1, T_s + 1}(j^s) \exp \left( \eta \sum_{t = T_s + 1}^{T_{s + 1} - 1} \hat{g}_{+1, t}(j^s) \right) \\
    &\geq e \alpha W_{+1, T_s} \exp \left( \eta \sum_{t = T_s + 1}^{T_{s + 1} - 1} \hat{g}_{+1, t}(j^s) \right) \\
    &\geq \alpha W_{+1, T_s} \exp \left( \eta \sum_{t = T_s}^{T_{s + 1} - 1} \hat{g}_{+1, t}(j^s) \right),
\end{align*}
where the last step uses $\eta \hat{g}_{+1, t}(j^s) \leq \eta \frac{1 + \beta}{p_{+1, t}(j^s)} \leq \eta K \frac{1 + \beta}{\gamma} \leq 1$. We have
\begin{equation*}
    \ln \frac{W_{+1, T_{s + 1}}}{W_{+1, T_s}} \geq \ln \frac{W_{+1, T_{s+1}}(j^s)}{W_{+1, T_s}} \geq \ln \alpha + \eta \sum_{t = T_s}^{T_{s + 1} - 1} \hat{g}_{+1, t}(j^s).
\end{equation*}
Combining the upper-bound and lower-bound on $\ln \frac{W_{+1, T_{s + 1}}}{W_{+1, T_s}}$ results in
\begin{equation*}
    \sum_{t = T_s}^{T_{s + 1} - 1} o_t(k_{+1, t}, k_{-1, t}) \geq (1 - \gamma) \sum_{t = T_s}^{T_{s + 1} - 1} \hat{g}_{+1, t}(j^s) - \eta(1 + \beta) \sum_{t = T_s}^{T_{s + 1} - 1} \sum_{k=1}^K \hat{g}_{+1, t}(k) - \left(\frac{e \alpha }{\eta} - \beta K \right) \Delta^s - \frac{1}{\eta} \ln 1/\alpha. 
\end{equation*}
Using Azuma-Hoeffding inequality, with probability at least $1 - \delta/2$,
\begin{align*}
    \sum_{t = T_s}^{T_{s + 1} - 1} &P_t(k_{+1, t}, k_{-1, t}) + 2\sqrt{2 \Delta^s \ln(2/\delta)} \geq \sum_{t = T_s}^{T_{s + 1} - 1} o_t(k_{+1, t}, k_{-1, t}) \\
    &\geq (1 - \gamma) \sum_{t = T_s}^{T_{s + 1} - 1} \hat{g}_{+1, t}(j^s) - \eta(1 + \beta) \sum_{t = T_s}^{T_{s + 1} - 1} \sum_{k=1}^K \hat{g}_{+1, t}(k) - \left(\frac{e \alpha }{\eta} - \beta K \right) \Delta^s - \frac{1}{\eta} \ln 1/\alpha \\
    &\geq (1 - \gamma) \sum_{t = T_s}^{T_{s + 1} - 1} \hat{g}_{+1, t}(j^s) - \gamma \max_{k \in [K]} \sum_{t = T_s}^{T_{s + 1} - 1} \hat{g}_{+1, t}(k) - \left(\frac{e \alpha }{\eta} - \beta K \right) \Delta^s - \frac{1}{\eta} \ln 1/\alpha \\
    &\geq (1 - 2\gamma) \max_{k \in [K]} \sum_{t = T_s}^{T_{s + 1} - 1} \hat{g}_{+1, t}(k) - \left(\frac{e \alpha }{\eta} - \beta K \right) \Delta^s - \frac{1}{\eta} \ln 1/\alpha.
\end{align*}
The last inequality holds as the expression above it is valid for all choices of $j^s \in [K]$, and hence we can take a maximum over all such choices. A calculation similar to the one in the proof of Lemma \ref{lemma:g_hat_upper_bounds_p_row_player} implies that $\sum_{t = T_s}^{T_{s + 1} - 1} \hat{g}_{+1, t}(k) \geq \sum_{t=T_s}^{T_{s+1} - 1} P_t(k, k_{-1, t}) - \frac{\ln(2K/\delta)}{\beta}$ for all $k \in [K]$ with probability at least $1 - \delta/2$. Therefore, with probability at least $1 - \delta$,

\begin{equation}
\label{eq:exp3_t}
    \sum_{t = T_s}^{T_{s + 1} - 1} P_t(k_{+1, t}, k_{-1, t}) + 2\sqrt{2 \Delta^s \ln(2/\delta)} \geq (1 - 2\gamma) \max_{k \in [K]} \sum_{t = T_s}^{T_{s + 1} - 1} P_t(k, k_{-1, t}) - \left(\frac{e \alpha }{\eta} - \beta K \right) \Delta^s - \frac{1}{\eta} \ln \frac{1}{\alpha} - \frac{\ln(2K/\delta)}{\beta}.
\end{equation}

Setting $\gamma = 2 \eta K \geq \eta (1 + \beta) K$, rearranging, and summing over $s = 1, \dots, S$, we get
\begin{align*}
    \sum_{s = 1}^S \max_{k \in [K]} \sum_{t = T_s}^{T_{s + 1} - 1} P_t(k, k_{-1, t}) - \sum_{t=1}^T P_t(k_{+1, t}, k_{-1, t}) \leq &\sum_{s=1}^S 2\sqrt{2\Delta^s \ln(2/\delta)} + \left(\frac{e \alpha }{\eta} - \beta K \right) T + \frac{S}{\eta} \ln \frac{1}{\alpha} \\
    &+ \frac{S}{\beta} \ln \frac{2K}{\delta} + 4\eta KT.
\end{align*}
In particular, we can use the sequence $j^T$ on the left side above as this sequence uses a constant action within intervals $[T_s, T_{s+1})$.
\begin{align*}
    \sum_{t = 1}^{T} P_t(j_t, k_{-1, t}) - \sum_{t=1}^T P_t(k_{+1, t}, k_{-1, t}) &\leq \sum_{s=1}^S 2\sqrt{2\Delta^s \ln(2/\delta)} + \left(\frac{e \alpha }{\eta} - \beta K \right) T + \frac{S}{\eta} \ln \frac{1}{\alpha} + \frac{S}{\beta} \ln \frac{2K}{\delta} + 4\eta KT 
\end{align*}

Using $\alpha = \frac{1}{T}$ and $\eta = \beta = \sqrt{\frac{1}{KT}}$ results in:
\begin{align*}
    \sum_{t = 1}^{T} P_t(j_t, k_{-1, t}) - \sum_{t=1}^T P_t(k_{+1, t}, k_{-1, t}) &\leq 2\sqrt{2ST\ln(2/\delta)} + (e-1)\sqrt{KT} + S \sqrt{KT} \ln \frac{2KT}{\delta} + 4 \sqrt{KT} \\
    &= O\left(S\sqrt{KT} \ln \frac{KT}{\delta} \right).
\end{align*}
On the other hand, if $\alpha = \frac{1}{T}$ and $\beta = \eta = \sqrt{\frac{S}{KT}}$ (i.e., $S$ is used for setting $\beta$ and $\eta$), we get
\begin{align*}
    \sum_{t = 1}^{T} P_t(j_t, k_{-1, t}) - \sum_{t=1}^T P_t(k_{+1, t}, k_{-1, t}) &\leq 2\sqrt{2ST\ln(2/\delta)} + e \sqrt{\frac{KT}{S}} - \sqrt{SKT} + \sqrt{SKT} \ln \frac{2KT}{\delta} + 4 \sqrt{SKT} \\
    &= O\left(\sqrt{SKT} \ln \frac{KT}{\delta} \right),
\end{align*}
which concludes the claim.
\end{proof}

\begin{restatable}{lemma}{algscolumnplayer}
  \label{lemma:3s_column_player_bound}
  Assume the same setup as Lemma \ref{lemma:3s_row_player_bound}. With probability at least $1 - \delta$, when $\alpha = \frac{1}{T}$ and $\gamma = 2 \eta K$:
  \begin{align*}
    \dreg_T^{-1}(j^T) &= O \left( S \sqrt{KT} \ln \frac{KT}{\delta} \right) \text{ if } \beta = \eta = \frac{1}{\sqrt{KT}} \\
    \dreg_T^{-1}(j^T) &= O \left( \sqrt{SKT} \ln \frac{KT}{\delta} \right) \text{ if } \beta = \eta = \sqrt{\frac{S}{KT}}.
  \end{align*}
\end{restatable}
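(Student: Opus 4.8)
The proof will mirror that of Lemma~\ref{lemma:3s_row_player_bound} almost verbatim, exploiting the symmetry between the two players. From the column player's viewpoint it runs exactly the same weighted-majority update (Line~11 of Algorithm~\ref{alg:3s}) as the row player, only on the reward sequence $r_{-1,t} = 1 - o_t(k_{+1,t},k_{-1,t})$, while its opponent's action $k_{+1,t}$ together with the environment plays the role of an adaptive adversary. Concretely, I would first set up the filtration $\calF^{-1}_{t-1}$ generated by all randomness up to time $t$ \emph{except} the column player's draw $k_{-1,t}$ and the feedback $o_t$ — so that $\bfp_{+1,t},\bfp_{-1,t}$ and $k_{+1,t}$ are $\calF^{-1}_{t-1}$-measurable (legitimate since the two players sample independently given the current weights) — and check the analogue of \eqref{eq:g_hat_expectation}, namely $\rmE{\hat{g}_{-1,t}(k)\mid \calF^{-1}_{t-1}} = P_t(k,k_{+1,t}) + \beta/p_{-1,t}(k)$; this holds because $\rmE{\1\{k_{-1,t}=k\}\,r_{-1,t}\mid \calF^{-1}_{t-1}} = p_{-1,t}(k)\big(1-P_t(k_{+1,t},k)\big) = p_{-1,t}(k)P_t(k,k_{+1,t})$.

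With this in hand, I would reproduce the three ingredients of the row-player proof with $k_{-1,t}$ and $k_{+1,t}$ interchanged and $o_t$ replaced by $1-o_t$: \textbf{(i)} the lower-bound statement from Lemma~\ref{lemma:g_hat_upper_bounds_p_row_player}, giving $\sum_{t=T_s}^{T_{s+1}-1}\hat{g}_{-1,t}(k) \ge \sum_{t=T_s}^{T_{s+1}-1} P_t(k,k_{+1,t}) - \tfrac{\ln(2K/\delta)}{\beta}$ for all $k$ with probability $\ge 1-\delta/2$ — its Markov/martingale proof only uses $r_{-1,t}\in[0,1]$ and $o_t\le 1$, both still true; \textbf{(ii)} the weight-ratio telescoping inside each sub-interval $[T_s,T_{s+1})$, where the extra additive term $e\alpha\sum_j W_{-1,t}(j)$ in the update yields the $+e\alpha$ slack per step in the upper bound on $\ln\tfrac{W_{-1,t+1}}{W_{-1,t}}$ and, crucially, the lower bound $W_{-1,T_{s+1}}(j^s)\ge \alpha W_{-1,T_s}\exp\!\big(\eta\sum_{t=T_s}^{T_{s+1}-1}\hat{g}_{-1,t}(j^s)\big)$ that lets the comparator ``restart'' at each switch; and \textbf{(iii)} the identity $\sum_k p_{-1,t}(k)\hat{g}_{-1,t}(k) = r_{-1,t} + K\beta = \big(1-o_t(k_{+1,t},k_{-1,t})\big) + K\beta$, the exact analogue of \eqref{eq:hat_g_expectation}. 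Combining (i)--(iii) as before and applying Azuma--Hoeffding to bound $\sum_{t\in[T_s,T_{s+1})} \big(1-o_t(k_{+1,t},k_{-1,t})\big)$ by $\sum_{t\in[T_s,T_{s+1})} P_t(k_{-1,t},k_{+1,t}) + 2\sqrt{2\Delta^s\ln(2/\delta)}$ produces the per-interval inequality analogous to \eqref{eq:exp3_t}, with $P_t(k,k_{-1,t})$ replaced by $P_t(k,k_{+1,t})$ and $P_t(k_{+1,t},k_{-1,t})$ by $P_t(k_{-1,t},k_{+1,t})$.

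Summing over $s=1,\dots,S$, using that $j^T$ is constant on each $[T_s,T_{s+1})$ so that $\sum_s\max_{k}\sum_{t\in[T_s,T_{s+1})}P_t(k,k_{+1,t}) \ge \sum_t P_t(j_t,k_{+1,t})$, and $\sum_s\sqrt{\Delta^s}\le\sqrt{ST}$ by Cauchy--Schwarz, gives $\dreg_T^{-1}(j^T) \le 2\sqrt{2ST\ln(2/\delta)} + \big(\tfrac{e\alpha}{\eta}-\beta K\big)T + \tfrac{S}{\eta}\ln\tfrac1\alpha + \tfrac{S}{\beta}\ln\tfrac{2K}{\delta} + 4\eta KT$. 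Plugging in $\alpha=1/T$, $\gamma=2\eta K$ and then either $\beta=\eta=1/\sqrt{KT}$ or $\beta=\eta=\sqrt{S/KT}$ yields the two claimed rates exactly as in Lemma~\ref{lemma:3s_row_player_bound}. I do not expect a genuine obstacle here; the only points requiring care are the bookkeeping of the filtration $\calF^{-1}_{t-1}$ (one must reveal $k_{+1,t}$ before $k_{-1,t}$) and keeping straight that $r_{-1,t}=1-o_t\in[0,1]$, so that every inequality invoked from the row-player analysis ($\eta\hat{g}_{-1,t}(k)\le 1$, $e^x\le 1+x+x^2$ for $x\le 1$, $1+x\le e^x$, $o_t\le 1$) remains valid verbatim.
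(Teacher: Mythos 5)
Your proposal is correct and matches the paper's intent exactly: the paper omits this proof, stating only that it "uses similar arguments as Lemma~\ref{lemma:3s_row_player_bound}", and your sketch is precisely that symmetric argument (swapping the players' roles, using $r_{-1,t}=1-o_t$, and conditioning on a filtration that reveals $k_{+1,t}$ before $k_{-1,t}$), with the key identities $\rmE{\hat{g}_{-1,t}(k)\mid\calF^{-1}_{t-1}}=P_t(k,k_{+1,t})+\beta/p_{-1,t}(k)$ and $\sum_k p_{-1,t}(k)\hat{g}_{-1,t}(k)=r_{-1,t}+K\beta$ checked correctly.
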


\begin{proof}
  The proof uses similar arguments as Lemma \ref{lemma:3s_row_player_bound} and has been omitted.
\end{proof}


\subsection{Missing Details of Dynamic Regret Analysis in Terms of Continuous Variation (Section \ref{section:continuous_nonstationarity})}
\label{app:cont_nst}

\cvalgsanalysis*

\begin{proof}
Lem. \ref{lemma:3scv_row_player_bound} builds the key result which shows for any arbitrary sequence of preference matrices $\P_1,\ldots,\P_T$ with \cv\, $V_T$, the dynamic regret is upper bounded by $\tO(V_T^{1/3}K^{1/3}T^{2/3})$.
The proof now follows recalling that by construction  
$\dreg_T(j^T):= \dreg_T^{+1}(j^T) + \dreg_T^{-1}(j^T)$ and $\dreg_T(\algs) = \max_{j^T \in [K]^T} \dreg_T(j^T)$. 
\end{proof}

\begin{restatable}{lemma}{algsrowplayercv}
\label{lemma:3scv_row_player_bound}
Let $j^T = (j_1, \ldots, j_T)$ be an arbitrary sequence of $T$ actions and $\P_1,\ldots,\P_T$ be the underlying sequence of preference matrices with \cv\, $V_T$. Then setting $\alpha = \frac{1}{T}$, $\gamma = 2 \eta K$, $\beta = \eta = \frac{V_T^{1/3}}{4K^{2/3}T^{1/3}}$, with probability at least $1 - \delta$:
  \begin{align*}
    \dreg_T^{+1}(j^T) &= O\Big( \big(V_T^{1/3}K^{1/3}T^{2/3} + 4K^{2/3}T^{1/3}V_T^{-1/3} \big)\ln \frac{KT}{\delta}\Big)\\
    \dreg_T^{-1}(j^T) &= O\Big( \big(V_T^{1/3}K^{1/3}T^{2/3} + 4K^{2/3}T^{1/3}V_T^{-1/3} \big)\ln \frac{KT}{\delta}\Big).
  \end{align*}
\end{restatable}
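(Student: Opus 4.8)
The plan is to reduce the continuous-variation case to the switching-variation bound already established in \eqref{eq:exp3_t}. First I would partition the horizon $[T]$ into $S = \lceil T/\Delta \rceil$ consecutive blocks $\cI_1, \dots, \cI_S$ of (nearly) equal length $\Delta$, a parameter to be optimized at the end. Within each block $\cI_s$, I would apply the per-block inequality \eqref{eq:exp3_t} from the proof of Lemma \ref{lemma:3s_row_player_bound} — this inequality only uses that \algs{} runs its fixed-share EXP3 update and does not require the preferences to be piecewise constant — to get, with probability at least $1 - \delta/S$ per block (union-bounded over the $S$ blocks to total probability $1-\delta$),
\begin{equation*}
  \sum_{t \in \cI_s} P_t(k_{+1,t}, k_{-1,t}) + 2\sqrt{2\Delta\ln(2S/\delta)} \;\geq\; (1-2\gamma)\max_{k\in[K]}\sum_{t\in\cI_s} P_t(k, k_{-1,t}) - \Big(\tfrac{e\alpha}{\eta} - \beta K\Big)\Delta - \tfrac{1}{\eta}\ln\tfrac1\alpha - \tfrac{\ln(2K/\delta)}{\beta}.
\end{equation*}
Rearranging gives a per-block bound of $\max_{k}\sum_{t\in\cI_s} (P_t(k,k_{-1,t}) - P_t(k_{+1,t},k_{-1,t}))$ in terms of a fixed-in-hindsight comparator for that block.

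The second ingredient is the variation-to-comparator transfer: within a block $\cI_s$, for any time-varying comparator sequence $(j_t)_{t\in\cI_s}$ I would bound
\begin{equation*}
  \sum_{t\in\cI_s} P_t(j_t, k_{-1,t}) \;\leq\; \max_{k\in[K]}\sum_{t\in\cI_s} P_t(k, k_{-1,t}) + \sum_{t\in\cI_s}\sum_{\tau\in\cI_s} \max_{(a,b)}|P_\tau(a,b)-P_{\tau-1}(a,b)| \;\leq\; \max_{k}\sum_{t\in\cI_s} P_t(k,k_{-1,t}) + \Delta \cdot V_T^{(s)},
\end{equation*}
where $V_T^{(s)}$ is the continuous-variation budget inside block $\cI_s$; the key point is that any two preference matrices inside a window of length $\Delta$ differ entrywise by at most the variation accrued in that window, so the best fixed arm for the block is competitive with any moving comparator up to an additive $\Delta \cdot V_T^{(s)}$ per time step summed, i.e. $\Delta\,V_T^{(s)}$ in total when summed naively but actually $\sum_{t\in\cI_s}(\text{prefix variation}) \le \Delta V_T^{(s)}$. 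Summing over the $S$ blocks and using $\sum_s V_T^{(s)} \le V_T$ gives the transfer term $\sum_s \Delta V_T^{(s)} \le \Delta V_T$.

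Combining the two pieces, summing \eqref{eq:exp3_t} over all $S$ blocks, setting $\alpha = 1/T$ and $\gamma = 2\eta K$ as prescribed, and collecting terms, $\dreg_T^{+1}(j^T)$ is bounded (up to $\ln(KT/\delta)$ factors and using $\sum_s\sqrt\Delta \le \sqrt{ST} = \sqrt{T^2/\Delta}$) by something of the order
\begin{equation*}
  \sqrt{\tfrac{T^2}{\Delta}} + \beta KT + \tfrac{S}{\eta}\ln T + \tfrac{S}{\beta} + \eta KT + \Delta V_T,
\end{equation*}
with $S = \lceil T/\Delta\rceil$. I would then optimize: balancing $\Delta V_T$ against $\frac{T}{\Delta\eta}$ and $\frac{T}{\Delta\beta}$ suggests $\Delta \asymp (T/(V_T\eta))^{1/2}$-ish, and balancing $\beta KT = \eta KT$ against the $S/\beta = S/\eta$ term with $S = T/\Delta$ pins down $\eta = \beta = V_T^{1/3}/(4K^{2/3}T^{1/3})$ and $\Delta \asymp (KT/V_T)^{1/3}$, yielding the claimed $\tO(V_T^{1/3}K^{1/3}T^{2/3} + K^{2/3}T^{1/3}V_T^{-1/3})$ rate. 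The column-player bound $\dreg_T^{-1}(j^T)$ follows by the symmetric argument, exactly as in Lemma \ref{lemma:3s_column_player_bound}.

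The main obstacle I anticipate is making the variation-to-comparator transfer tight and bookkeeping-clean: one must be careful that the per-block slack is $\Delta$ times the \emph{block-local} variation (not the global $V_T$), so that summing over blocks contributes only $\Delta V_T$ rather than $S\Delta V_T = TV_T$; and one must verify the side condition $\eta K(1+\beta)/\gamma \le 1$ and $\eta\hat g_{+1,t}(k)\le 1$ still hold under the new parameter choices so that the $e^x\le 1+x+x^2$ linearization underlying \eqref{eq:exp3_t} remains valid. The block count $S=\lceil T/\Delta\rceil$ also need not be an integer dividing $T$, so a minor truncation argument for the last (shorter) block is needed, but this only affects constants.
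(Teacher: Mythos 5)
Your proposal is correct and follows essentially the same route as the paper: partition $[T]$ into $\lceil T/\Delta\rceil$ blocks, apply the per-block fixed-comparator bound \eqref{eq:exp3_t} from the switching analysis inside each block, transfer to the moving comparator at cost $\Delta$ times the block-local variation (summing to $O(\Delta V_T)$ over blocks), and then tune $\Delta$, $\eta$, $\beta$ to the stated values. Your extra care about the block-local (rather than global) variation budget and the side conditions $\eta K(1+\beta)/\gamma\le 1$ matches what the paper implicitly relies on.
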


\begin{proof}
Drawing ideas from the proof analyses of \cite{besbes+14}, let us first divide the time interval $[T]$ into sub-intervals $T_1,\ldots, T_{S}$, each of size at most $\Delta$ (hence $S:=\lceil T/\Delta \rceil$), such that 
\[T_j:=\{t \mid (j-1)\Delta +1 \le t \le \min\{j\Delta,T\}\}.\]

Recall at each time $t$ we may break the instantaneous regret w.r.t. any arm $j_t \in [K]$ as, 
$
    r_t(j_t):= [P_t(j_t, k_{-1, t}) + P_t(j_t, k_{+1, t})] = r_t^{+1}(j_t) + r_t^{-1}(j_t).
$

\begin{align*}
    r_t^{+1}(j_t) &= [P_t(j_t, k_{-1, t}) - P_t(k_{+1, t}, k_{-1, t})] \\
    r_t^{-1}(j_t) &= [P_t(j_t, k_{+1, t}) - P_t(k_{-1, t}, k_{+1, t})]
\end{align*}

For any phase $i \in [\frac{T}{\Delta}]$, note we can break the total regret of the left-arm at sub-interval $i$ w.r.t any arm-sequence $\{j_t\}_{t \in \cT_i}$ as:  
\begin{align*}
R_{\cT_i}^{+1}(\{j_t\}_{t \in \cT_i})&:= \sum_{t \in \cT_i}[P_t(j_t, k_{-1, t}) - P_t(k_{+1, t}, k_{-1, t})]\\
& = \sum_{t \in \cT_i}[P_t(j_t, k_{-1, t}) - P_t(i^*, k_{-1, t}) + P_t(i^*, k_{-1, t}) - P_t(k_{+1, t}, k_{-1, t})]\\
& = \sum_{t \in \cT_i}[P_t(j_t, k_{-1, t}) - P_t(i^*, k_{-1, t})] + \sum_{t \in \cT_i}[P_t(i^*, k_{-1, t}) - P_t(k_{+1, t}, k_{-1, t})]\\
& \le 2V_{\cT_i}\Delta + \sum_{t \in \cT_i}[P_t(i^*, k_{-1, t}) - P_t(k_{+1, t}, k_{-1, t})]\\
& \le 2V_{\cT_i}\Delta +  \bigg[2\sqrt{2|\cT_i| \ln(2/\delta)} + \left(\frac{e \alpha }{\eta} - \beta K \right) |\cT_i| + \frac{1}{\eta} \ln \frac{1}{\alpha} + \frac{1}{\beta} \ln \frac{2K}{\delta} + 4\eta K|\cT_i| \bigg],
\end{align*}
here $i^*$ is any representative arm of sub-interval $i$.
Then summing over all sub-intervals  $\cT_1, \ldots, \cT_S$, the total regret of the left-arm w.r.t any arm-sequence $j^T$ is: 

\begin{align*}
\dreg_T^{+1}(j^T)&:= \sum_{s = 1}^S R_{\cT_i}^{+1}(\{j_t\}_{t \in \cT_i}) \\
& \le \sum_{s = 1}^S \bigg[ 2V_{\cT_i}\Delta + \big[2\sqrt{2|\cT_i| \ln(2/\delta)} + \left(\frac{e \alpha }{\eta} - \beta K \right) |\cT_i| + \frac{1}{\eta} \ln \frac{1}{\alpha} + \frac{1}{\beta} \ln \frac{2K}{\delta} + 4\eta K|\cT_i|\big] \bigg]\\
& = 2V_T \Delta + 2\sqrt{2 T \ln(2/\delta)} + \left(\frac{e\alpha T}{\eta} - \beta K T \right) + \frac{T}{\eta \Delta} \ln \frac{1}{\alpha}  + \frac{1}{\eta} \ln \frac{1}{\alpha} + \frac{T}{\beta \Delta} \ln \frac{2K}{\delta} + \frac{1}{\beta } \ln \frac{2K}{\delta} + 8\eta K T\\
& \overset{(a)}{=} 2V_T \Delta + 2\sqrt{2 T \ln(2/\delta)} + \left(\frac{e}{\eta} - \sqrt{K T} \right) + \frac{T}{\eta \Delta} \ln T  + \frac{1}{\eta} \ln T + \frac{T}{\beta \Delta} \ln \frac{2K}{\delta} + \frac{1}{\beta} \ln \frac{2K}{\delta} + 8\eta K T \\
& \overset{(b)}{=} 6 V_T^{1/3}K^{1/3}T^{2/3} + 2\sqrt{2}V_T^{1/3}K^{1/3}T^{2/3}\big( \ln T + \ln (2K/\delta) \big) + 4K^{2/3}T^{1/3}V_T^{-1/3}\big( \ln T + \ln (2K/\delta) \big)\\ 
& = O\Big( \big(V_T^{1/3}K^{1/3}T^{2/3} + 4K^{2/3}T^{1/3}V_T^{-1/3} \big)\big( \ln T + \ln (2K/\delta) \big) \Big),
\end{align*}
where $(a)$ holds by setting $\alpha = 1/T$, $(b)$ holds by setting $\Delta = \frac{T^{2/3}K^{1/3}\sqrt 2}{V^{2/3}}$, and $\beta = \eta = \frac{V_T^{1/3}}{4K^{2/3}T^{1/3}}$.

Following a similar analysis also yields

\begin{align*}
\dreg_T^{-1}(j^T) &= O\Big( \big(V_T^{1/3}K^{1/3}T^{2/3} + 4K^{2/3}T^{1/3}V_T^{-1/3} \big)\big( \ln T + \ln (2K/\delta) \big) \Big),
\end{align*}
combining which with the earlier concludes the proof. 
\end{proof}


\section{Missing details from the Lower Bound (\Cref{section:lower_bound}) }
\label{appendix:lower_bound}


\subsection{Proof of Theorem \ref{theorem:lower_bound_switch}}
\label{appendix:lower_bound_switch}

In what follows, we consider only deterministic algorithms. An execution of a randomized algorithm corresponds to a particular choice from the set of all deterministic algorithms. As the changes in the environment are not affected by the algorithm's decisions, it is enough to show the lower bound for all deterministic algorithms \citep{auer+02N}.

Let $H_t = \lbrace (k_{+1, t'}, k_{-1, t'}, o_{t'}) \rbrace_{t{'} \leq t}$ be the history of observations upto time $t$, where $o_{t'} \coloneqq o_{t'}(k_{+1, t{'}}, k_{-1, t{'}})$ is the feedback returned by the environment at time $t{'}$. We use $\mathrm{P}^s_j$ to denote the probability distribution over histories induced by assuming that arm $j \in [K] \backslash \lbrace 1 \rbrace$ is the condorcet winner in the sub-interval $[T_s, T_{s + 1})$. Similarly, $\mathrm{P}^s_1$ is the probability distribution assuming that arm $1$ is the condorcet winner in the sub-interval $[T_s, T_{s + 1})$ with probability of winning $0.5 + \epsilon$ against all arms and $P_t(i, j) = 0.5$ for all $i, j \in [K] \backslash \lbrace 1 \rbrace$. The corresponding expectations are denoted by $\mathrm{E}_j^s \left[ \cdot \right]$ and $\mathrm{E}_1^s \left[ \cdot \right]$, respectively. 

The following technical lemma assumes a simpler case where $P_t(i, j)$ remains constant for all $t \in [T]$. One can think of this as the case when $S = 1$. We omit the superscript in $\mathrm{P}^s_j$, $\mathrm{P}^s_1$, $\mathrm{E}_j^s \left[ \cdot \right]$, $\mathrm{E}_1^s \left[ \cdot \right]$, and $j^s$ in the lemma below to simplify the notation.

\begin{lemma}
  \label{lemma:expected_values_under_different_distributions}
  Let $\calH_T$ be the set of all possible histories $H_T$ and let $f: \calH_T \rightarrow [0, M]$ be a measurable function that maps a history $H_T$ to a number in the interval $[0, M]$. Then,
  $$\mathrm{E}_j \left[ f(H_T) \right] \leq \mathrm{E}_1 \left[ f(H_T) \right] + M \sqrt{ \epsilon \ln \left( \frac{1 + 2\epsilon}{1 - 2\epsilon} \right) \mathrm{E}_1 \left[ N_{1j} + N_j \right]},$$ where, $N_{1j}$ is the number of times the algorithm chooses to duel between arms $1$ and $j$ and $N_j$ is the number of times arm $j$ is compared with an arm other than itself and arm $1$.
\end{lemma}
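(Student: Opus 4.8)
The plan is a standard change-of-measure (information-theoretic) argument in the spirit of \citet{auer+02N}. Since we have reduced to deterministic algorithms, the pair $(k_{+1,t},k_{-1,t})$ played at round $t$ is a fixed function of the history $H_{t-1}$; hence the law of $H_T$ under either environment is entirely determined by the sequence of binary feedbacks, and $f(H_T)$ is a $[0,M]$-valued function of that sequence. The first step is to pass to total variation: for any two laws $\mathrm P,\mathrm Q$ on $\calH_T$ and any $g:\calH_T\to[0,M]$ one has $\mathrm E_{\mathrm P}[g]-\mathrm E_{\mathrm Q}[g]\le M\,\|\mathrm P-\mathrm Q\|_{\mathrm{TV}}$ (apply the elementary bound to $g-M/2$). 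Using this with $\mathrm P=\mathrm P_j$, $\mathrm Q=\mathrm P_1$ and then Pinsker's inequality (together with symmetry of $\|\cdot\|_{\mathrm{TV}}$) gives
\[
\mathrm E_j[f(H_T)]-\mathrm E_1[f(H_T)]\ \le\ M\,\|\mathrm P_j-\mathrm P_1\|_{\mathrm{TV}}\ \le\ M\sqrt{\tfrac12\,\mathrm{KL}(\mathrm P_1\,\|\,\mathrm P_j)},
\]
so it suffices to prove $\mathrm{KL}(\mathrm P_1\,\|\,\mathrm P_j)\le 2\epsilon\ln\tfrac{1+2\epsilon}{1-2\epsilon}\,\mathrm E_1[N_{1j}+N_j]$.

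Next I would expand the KL by the chain rule. Because the queried pair at round $t$ is $H_{t-1}$-measurable, the contribution of round $t$ is the KL between the two Bernoulli outcome laws for that pair, so
\[
\mathrm{KL}(\mathrm P_1\,\|\,\mathrm P_j)=\sum_{t=1}^T\mathrm E_1\!\Big[\mathrm{kl}\big(\mathrm{Ber}(p^{(1)}_t)\,\|\,\mathrm{Ber}(p^{(j)}_t)\big)\Big],
\]
where $p^{(1)}_t,p^{(j)}_t$ are the probabilities that $k_{+1,t}$ beats $k_{-1,t}$ under the two environments. Comparing \eqref{eq:env_specification} (with $S=1$, $j^s=j$) against the definition of $\mathrm P_1$, the two environments assign the same win probability to every pair not involving arm $j$, so only two kinds of rounds contribute. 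A round that queries the pair $\{1,j\}$ has $\{p^{(1)}_t,p^{(j)}_t\}=\{\tfrac12+\epsilon,\tfrac12-\epsilon\}$, hence contributes $\mathrm{kl}(\mathrm{Ber}(\tfrac12+\epsilon)\,\|\,\mathrm{Ber}(\tfrac12-\epsilon))=2\epsilon\ln\tfrac{1+2\epsilon}{1-2\epsilon}$; a round that queries a pair $\{a,j\}$ with $a\notin\{1,j\}$ has one probability equal to $\tfrac12$ and the other equal to $\tfrac12\pm\epsilon$, hence contributes $-\tfrac12\ln(1-4\epsilon^2)$. Since the numbers of these two types of rounds are exactly $N_{1j}$ and $N_j$, summing over $t$ gives $\mathrm{KL}(\mathrm P_1\,\|\,\mathrm P_j)= 2\epsilon\ln\tfrac{1+2\epsilon}{1-2\epsilon}\,\mathrm E_1[N_{1j}]+\big(-\tfrac12\ln(1-4\epsilon^2)\big)\,\mathrm E_1[N_j]$.

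Finally I would bound $-\tfrac12\ln(1-4\epsilon^2)\le\epsilon\ln\tfrac{1+2\epsilon}{1-2\epsilon}$, valid on the relevant range $\epsilon\in(0,1/4)$: indeed $-\tfrac12\ln(1-4\epsilon^2)\le\tfrac{2\epsilon^2}{1-4\epsilon^2}<4\epsilon^2\le\epsilon\ln\tfrac{1+2\epsilon}{1-2\epsilon}$, using $\ln\tfrac1{1-x}\le\tfrac{x}{1-x}$, then $1-4\epsilon^2>\tfrac12$, then $\ln\tfrac{1+u}{1-u}\ge 2u$. Plugging this in yields $\mathrm{KL}(\mathrm P_1\,\|\,\mathrm P_j)\le 2\epsilon\ln\tfrac{1+2\epsilon}{1-2\epsilon}\,\mathrm E_1[N_{1j}+N_j]$, and combining with the first display completes the proof. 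The only delicate point is the bookkeeping in the second step: reading off from the piecewise formula \eqref{eq:env_specification} (with its $i<j$ convention) that $\{1,j\}$ and $\{a,j\}$ with $a\ne 1$ are precisely the informative pairs, and that they carry the stated Bernoulli parameters in each environment. The remaining ingredients---change of measure, Pinsker, the KL chain rule, and the one scalar inequality---are routine.
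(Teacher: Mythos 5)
Your proposal is correct and follows essentially the same route as the paper's proof: bound the difference of expectations by $M$ times the total variation distance, apply Pinsker's inequality, expand $\mathrm{KL}(\mathrm{P}_1\,\|\,\mathrm{P}_j)$ via the chain rule into per-round Bernoulli KL terms that are nonzero only for queries involving arm $j$, and absorb the $-\tfrac{1}{2}\ln(1-4\epsilon^2)$ coefficient into $2\epsilon\ln\tfrac{1+2\epsilon}{1-2\epsilon}$ with a scalar inequality. The only differences are cosmetic (natural versus base-2 logarithms, and a slightly different but equally valid elementary bound in the last step).
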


\begin{proof}

For any two probability distributions $p$ and $q$, let $\rmTV{p}{q}$ and $\rmKL{p}{q}$, respectively, denote the total variation distance and KL-divergence between $p$ and $q$. Note that,
\begin{align}
  \label{eq:EjE1_diff}
  \mathrm{E}_j \left[ f(H_T) \right] - \mathrm{E}_1 \left[ f(H_T) \right] &= \sum_{H_T \in \calH_T} f(H_T) \left( \mathrm{P}_j(H_T) - \mathrm{P}_1(H_T) \right) \notag \\
  &\leq \sum_{H_T: \mathrm{P}_j(H_T) \geq \mathrm{P}_1(H_T)} f(H_T) \left( \mathrm{P}_j(H_T) - \mathrm{P}_1(H_T) \right) \notag \\
  &\leq M \sum_{H_T: \mathrm{P}_j(H_T) \geq \mathrm{P}_1(H_T)} \left( \mathrm{P}_j(H_T) - \mathrm{P}_1(H_T) \right) \notag \\
  &\overset{(a)}{=} M \;\; \rmTV{\mathrm{P}_j}{\mathrm{P}_1} \notag \\ 
  &\overset{(b)}{\leq} M \sqrt{\frac{\ln 2}{2} \; \rmKL{\mathrm{P}_1}{\mathrm{P}_j}}.
\end{align}
Here, $(a)$ follows from the definition of the total-variation distance and $(b)$ follows from Pinsker's inequality. Let $h_t = (k_{+1, t}, k_{-1, t}, o_t)$ be the $t^{th}$ element of $H_T$. Using the chain-rule for KL-divergence, we get
\begin{align*}
  \rmKL{\mathrm{P}_1}{\mathrm{P}_j} &= \sum_{t = 1}^T \rmKL{\mathrm{P}_1\left( h_t \vert H_{t - 1} \right)}{\mathrm{P}_j\left( h_t \vert H_{t - 1} \right)} \\
  &= \sum_{t = 1}^T \mathrm{P}_1((k_{+1, t} = k_{-1, t}) \lor (k_{+1, t} \notin \lbrace 1, j \rbrace \land k_{-1, t} \notin \lbrace 1, j \rbrace)) \;\; \rmKL{\frac{1}{2}}{\frac{1}{2}} + \\
  &\hspace{1.2cm} \mathrm{P}_1( (k_{+1, t} = 1 \land k_{-1, t} = j) \lor (k_{+1, t} = j \land k_{-1, t} = 1) ) \;\; 2\epsilon \log_2 \left( \frac{1 + 2\epsilon}{1 - 2\epsilon} \right) - \\
  &\hspace{1.2cm} \mathrm{P}_1( (k_{+1, t} = j \land k_{-1, t} \notin \lbrace 1, j \rbrace) \lor (k_{+1, t} \notin \lbrace 1, j \rbrace \land k_{-1, t} = j) ) \;\; \frac{1}{2} \log_2 \left( 1 - 4\epsilon^2 \right) \\
  &= 2 \mathrm{E}_1[N_{1j}] \epsilon \log_2 \left( \frac{1 + 2\epsilon}{1 - 2\epsilon} \right) - \frac{1}{2} \mathrm{E}_1[N_j] \log_2 \left( 1 - 4\epsilon^2 \right) \\
  &\leq 2 \epsilon \log_2 \left( \frac{1 + 2\epsilon}{1 - 2\epsilon} \right) \mathrm{E}_1 \left[ N_{1j} + N_j \right].
\end{align*}
The last line follows from the fact that $-\frac{1}{2} \log_2 (1 - 4x^2) \leq 2 x\log_2 \frac{1 + 2x}{1 - 2x}$ for all $x \in (0, 0.5)$. We have abused the notation and used $\rmKL{\alpha}{\beta}$ to represent $\rmKL{p}{q}$ where $p = \mathrm{Bernoulli}(\alpha)$ and $q = \mathrm{Bernoulli}(\beta)$ for $\alpha, \beta \in [0, 1]$. Combining this inequality with \eqref{eq:EjE1_diff} yields the desired result.
\end{proof}

Theorem \ref{theorem:lower_bound_switch} has been reproduced verbatim from Section \ref{section:lower_bound} below. See Section \ref{section:lower_bound} for the definition of $\calV$ and $j^T(\nu)$.
\lowerboundswitch*

\begin{proof}

Let $t \in [T_s, T_{s + 1})$ and define the following events at time $t$:
\begin{enumerate}
    \item $\calE_a^t = \lbrace \left( k_{+1, t} = 1 \land k_{-1, t} = j^s \right) \lor \left(k_{+1, t} = j^s \land k_{-1, t} = 1 \right) \rbrace$
    \item $\calE_b^t = \lbrace \left( k_{+1, t} = j^s \land k_{-1, t} \notin \lbrace 1, j^s \rbrace \right) \lor \left(k_{+1, t} \notin \lbrace 1, j^s \rbrace \land k_{-1, t} = j^s \right) \rbrace$
    \item $\calE_c^t = \lbrace k_{+1, t} \neq j^s \land k_{-1, t} \neq j^s \rbrace$
    \item $\calE_d^t = \lbrace k_{+1, t} = j^s \land k_{-1, t} = j^s \rbrace$
\end{enumerate}
For $t \in [T_s, T_{s + 1})$, let $r_t = P_t(j^s, k_{+1, t}) + P_t(j^s, k_{-1, t}) - 1) / 2$. Now consider the expected regret at time $t \in [T_s, T_{s + 1})$ under the distribution $\mathrm{P}_{j^s}^s$. 
\begin{align*}
  \mathrm{E}_{j^s}^s \left[ r_t \right] &= \frac{\epsilon}{2} \; \mathrm{P}_{j^s}^s \left( \calE^t_a \right) + \frac{\epsilon}{2} \; \mathrm{P}_{j^s}^s \left( \calE_b^t \right) + \epsilon \; \mathrm{P}_{j^s}^s \left( \calE_c^t \right) \\
  &= \frac{\epsilon}{2} \; \mathrm{P}_{j^s}^s \left( \calE^t_a \right) + \frac{\epsilon}{2} \; \mathrm{P}_{j^s}^s \left( \calE_b^t \right) + \epsilon \; \left( 1 - \mathrm{P}_{j^s}^s \left(\calE_a^t \right) - \mathrm{P}_{j^s}^s \left(\calE_b^t \right) - \mathrm{P}_{j^s}^s \left(\calE_d^t \right) \right) \\
  &= \epsilon - \epsilon \mathrm{P}_{j^s}^s \left( \calE^t_d \right) - \frac{\epsilon}{2} \left( \mathrm{P}_{j^s}^s \left( \calE^t_a \right) +  \mathrm{P}_{j^s}^s \left( \calE^t_b \right) \right).
\end{align*}
Define $N_{1j^s}^s = \sum_{t = T_s}^{T_{s + 1} - 1} \bfone{\calE_a^t}$, $N_{j^s}^s = \sum_{t = T_s}^{T_{s + 1} - 1} \bfone{\calE_b^t}$, $\bar{N}_{j^s}^s = \sum_{t = T_s}^{T_{s + 1} - 1} \bfone{\calE_d^t}$, and $\Delta^s = T_{s + 1} - T_s$. We have,
\begin{align*}
    \mathrm{E}_{j^s}^s \left[ \sum_{t = T_s}^{T_{s + 1} - 1} r_t \right] &= \epsilon \Delta^s - \epsilon \mathrm{E}_{j^s}^s \left[ \bar{N}^s_{j^s} \right] - \frac{\epsilon}{2} \left( \mathrm{E}_{j^s}^s \left[ N_{1j^s}^s \right] + \mathrm{E}_{j^s}^s \left[ N_{j^s}^s \right] \right) \\
    &= \epsilon \Delta^s - \frac{\epsilon}{2} \mathrm{E}_{j^s}^s \left[ 2 \bar{N}^s_{j^s} + N_{1j^s}^s + N_{j^s}^s \right].
\end{align*}
We bound the expectation on the right hand side above using Lemma \ref{lemma:expected_values_under_different_distributions}. As the learner only uses the history $H_t$ to choose the arms at time $t$, $\bar{N}_{j^s}^s$, $N_{1j^s}^s $, $N_{j^s}^s$, and hence $2 \bar{N}^s_{j^s} + N_{1j^s}^s + N_{j^s}^s$ is a measurable function of the history. Applying Lemma \ref{lemma:expected_values_under_different_distributions} and noting that the maximum value of $2 \bar{N}^s_{j^s} + N_{1j^s}^s + N_{j^s}^s$ is $2 \Delta^s$ results in
\begin{align*}
    \mathrm{E}_{j^s}^s \left[ 2 \bar{N}^s_{j^s} + N_{1j^s}^s + N_{j^s}^s \right] \leq \mathrm{E}_1^s \left[ 2 \bar{N}^s_{j^s} + N_{1j^s}^s + N_{j^s}^s \right] + 2 \Delta^s \sqrt{ \epsilon \ln \left( \frac{1 + 2\epsilon}{1 - 2\epsilon} \right) \mathrm{E}_1 \left[ N_{1j}^s + N_j^s \right]}.
\end{align*}
Therefore, using Cauchy-Schwarz inequality,
\begin{align*}
    \sum_{j^s = 2}^K \mathrm{E}_{j^s}^s \left[ 2 \bar{N}^s_{j^s} + N_{1j^s}^s + N_{j^s}^s \right] &\leq \mathrm{E}_1^s \left[ \sum_{j^s = 2}^K 2 \bar{N}^s_{j^s} + N_{1j^s}^s + N_{j^s}^s \right] + 2 \Delta^s \sqrt{ \epsilon \ln \left( \frac{1 + 2\epsilon}{1 - 2\epsilon} \right) (K - 1) \sum_{j^s = 2}^K \mathrm{E}_1 \left[ N_{1j}^s + N_j^s \right]} \\
    &\leq 2 \Delta^s + 2 \Delta^s \sqrt{2\Delta^s (K - 1) \epsilon \ln \left( \frac{1 + 2\epsilon}{1 - 2\epsilon} \right)}.
\end{align*}
This results in the following bound:
\begin{align*}
    \sum_{j^s = 2}^K \mathrm{E}_{j^s}^s \left[ \sum_{t = T_s}^{T_{s + 1} - 1} r_t \right] &\geq \epsilon \Delta^s \left(K - 1 - 1 -  \sqrt{2\Delta^s (K - 1) \epsilon \ln \left( \frac{1 + 2\epsilon}{1 - 2\epsilon} \right)} \right) \\
    &\geq \epsilon \Delta^s \left(K - 1 - 1 - 8\epsilon \sqrt{\Delta^s (K - 1) \ln(4/3)} \right).
\end{align*}
The last line uses the inequality $2x \ln \frac{1 + 2x}{1 - 2x} \leq 32  x^2 \ln (4/3)$ for all $x \in [0, 1/4]$.

Recall that $\dreg_T(j^T(\nu)) = \sum_{s = 1}^{S} \sum_{t = T_s}^{T_{s + 1} - 1} r_t$. The expected value of $\dreg_T(j^T(\nu))$ under a randomly sampled environment $\nu$ from $\calV$ is given by:
\begin{align*}
    \rmE{\dreg_T(j^T(\nu))} &= \sum_{s = 1}^{S} \frac{1}{K - 1} \sum_{j^s = 2}^K \mathrm{E}_{j^s}^s \left[ \sum_{t = T_s}^{T_{s + 1} - 1} r_t \right] \\
    & \geq \sum_{s = 1}^{S} \left( \epsilon \Delta^s - \frac{\epsilon}{K - 1} \Delta^s - \frac{8\epsilon^2}{K - 1} \Delta^s \sqrt{\Delta^s (K - 1) \ln (4/3)} \right) \\
    & \geq \sum_{s = 1}^{S} \left( \epsilon \Delta^s - \frac{\epsilon}{K - 1} \Delta^s - \frac{8\epsilon^2}{K - 1} \Delta^s \sqrt{\frac{T(K - 1)}{S} \ln (4/3)} \right) \\
    &= \epsilon T - \epsilon \frac{T}{K - 1} - 8 \epsilon^2 \sqrt{\frac{T(K - 1)}{S} \ln (4/3)} \;\; \frac{T}{K - 1} \\
    &= \epsilon \frac{K - 2}{K - 1} T - 8 \epsilon^2 \sqrt{\frac{T(K - 1)}{S} \ln (4/3)} \;\; \frac{T}{K - 1} \\
    &\overset{(a)}{\geq} \epsilon \frac{T}{2} - 8 \epsilon^2 \sqrt{\frac{T(K - 1)}{S} \ln (4/3)} \;\; \frac{T}{K - 1}.
\end{align*}
Here, $(a)$ follows by assuming that $K \geq 3$. Choosing the value of $\epsilon = \frac{1}{2} \sqrt{\frac{S(K - 1)}{T \ln (4/3)}}$ to maximize the expression on the right hand side results in
\begin{align*}
  \rmE{\dreg_T(j^T(\nu))} &\geq \frac{1}{4\sqrt{\ln(4/3)}} \sqrt{S(K - 1)T} - \frac{2}{\sqrt{\ln(4/3)}} \sqrt{S(K - 1)T} \\
    &= \Omega\left( \sqrt{SKT} \right).
\end{align*}
As the expected value of $\dreg_T(j^T(\nu))$ under a random choice of environment $\nu \in \calV$ is $\Omega\left( \sqrt{SKT} \right)$, there is at least one environment in $\calV$ for which $\rmE{\dreg_T(j^T(\nu))} = \Omega\left( \sqrt{SKT} \right)$.
\end{proof}


\subsection{Proof of Theorem \ref{theorem:lower_bound_continuous}}
\label{appendix:lower_bound_continuous}

As before, the theorem below has been reproduced verbatim from Section \ref{section:lower_bound}. Please refer to Section \ref{section:lower_bound} for the definition of $\calV$ and $j^T(\nu)$.

\lowerboundcontinuous*

\begin{proof}

Let us define $\calV{'} \subseteq \calV$ such that for all $\nu \in \calV{'}$, $\Delta^1 = \Delta^2 = \dots = \Delta^{S - 1} = \Delta$ and $\Delta^S < \Delta$, where recall that $\Delta^s = T_{s + 1} - T_s$. As $\max_{i, j \in [K]} \abs{P_{t + 1}(i, j) - P_t(i, j)} \leq 2\epsilon$ for all $t > 0$, for any environment $\nu \in \calV{'}$,
\begin{equation*}
    \sum_{t = 1}^{T - 1} \max_{i, j \in [K]} \abs{P_{t + 1}(i, j) - P_t(i, j)} = \sum_{s = 2}^{S} \max_{i, j \in [K]} \abs{P_{T_s}(i, j) - P_{T_s - 1}(i, j)} \leq 2 \epsilon \left\lceil \frac{T}{\Delta} \right\rceil \leq V_T.
\end{equation*}
The last inequality above assumes that $\epsilon \leq \frac{V_T}{2\lceil T/\Delta \rceil}$.  

Proceeding as in the proof of Theorem \ref{theorem:lower_bound_switch}, and noting that $S = \left \lceil \frac{T}{\Delta} \right\rceil$ we get:
\begin{equation*}
    \rmE{\dreg_T(j^T(\nu))} \geq \epsilon T - \frac{\epsilon}{K - 1} T - \frac{8\epsilon^2}{K - 1} T \sqrt{\frac{T (K - 1)}{\lceil T/ \Delta \rceil} \ln (4/3)},
\end{equation*}
where the expectation this time includes a random choice of environment from $\calV{'}$. Further assuming that $\epsilon = \min \left\{ \frac{V_T}{2 \lceil T / \Delta \rceil}, \frac{1}{16\sqrt{\Delta K \ln(4/3)}} \right\}$ results in
\begin{align*}
    \rmE{\dreg_T(j^T(\nu))} &\geq \epsilon \frac{K - 2}{K - 1} T - \frac{8\epsilon^2}{K - 1} T \sqrt{\frac{T (K - 1)}{\lceil T/ \Delta \rceil} \ln (4/3)} \\
    &\geq \frac{\epsilon}{2} T - \frac{8\epsilon^2}{2} T \sqrt{\Delta (K - 1) \ln (4/3)} \\
    &\geq \epsilon T \left(\frac{1}{2} - 4 \epsilon \sqrt{\Delta K \ln (4/3)} \right) \geq \frac{T}{64 \sqrt{\Delta K \ln(4/3)}}.
\end{align*}
Setting $\Delta = \frac{1}{K^{5/3}} \left( \frac{T}{V_T} \right)^{2/3}$ yields the desired result.

\end{proof}


\section{Missing details from Dynamic Regret Analysis under Borda Scores (\Cref{sec:ub_borda})}
\label{appendix:analysis_with_borda_scores}

Algorithm \ref{alg:dexp3_whp} details the procedure for \algdexp{} and defines the reward estimate $s'_{i, t}(k)$ for $i \in \{+1, -1\}$ and $k \in [K]$ in L9. Before we prove the high-probability dynamic regret guarantee from Theorem \ref{theorem:3s_borda_bound}, we reproduce Lemma 10 from \citet{ADB}, which serves the same purpose as \cref{lemma:g_hat_upper_bounds_p_row_player} for the Borda score estimate $s'_{i, t}(k)$ defined in Algorithm \ref{alg:dexp3_whp}.

\begin{lemma}
    \label{lemma:s_bound_borda_score}
    For any $\delta > 0$, $\beta, \gamma \in (0, 1)$, $i \in \{+1, -1\}$, with probability at least $1 - \delta/2$,
    \begin{align*}
        \forall k \in [K], \;\;\;\; \sum_{t=1}^T s'_{i, t}(k) \geq \sum_{t=1}^T s_t(k) - \frac{1}{\gamma\beta} \ln\left( \frac{2K}{\delta}\right),
    \end{align*}
    where $s_t(k) = \frac{1}{K} \sum_{j \in [K]} P_t(k, j)$ is the shifted Borda score of arm $k$ at time $t$.
\end{lemma}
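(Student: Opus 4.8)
The plan is to mimic the proof of Lemma~\ref{lemma:g_hat_upper_bounds_p_row_player} (the analogous statement for the dueling-reward estimate $\hat g_{i,t}$), replacing the single importance weight there by the product of importance weights appearing in $s'_{i,t}(k)$. Fix $i\in\{+1,-1\}$ and $k\in[K]$. First I would introduce the $\sigma$-algebra $\calF_{t-1}$ generated by everything observed up to the end of round $t-1$ together with the two sampling distributions $\bfp_{+1,t},\bfp_{-1,t}$ used at round $t$, but \emph{not} the arms $k_{+1,t},k_{-1,t}$ actually sampled nor the feedback $o_t$. Since $k_{+1,t}$ and $k_{-1,t}$ are drawn independently from $\bfp_{+1,t}$ and $\bfp_{-1,t}$ and $\rmE{o_t(k,j)\mid k_{i,t}=k,k_{-i,t}=j}=P_t(k,j)$, a short calculation shows the importance weights cancel and $\rmE{s'_{i,t}(k)\mid\calF_{t-1}}=s_t(k)+\beta/p_{i,t}(k)$.

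Next I would set up the exponential-moment argument. Write $X_t := s'_{i,t}(k)-\rmE{s'_{i,t}(k)\mid\calF_{t-1}}$. Two facts do the work. First, the estimator $s'_{i,t}(k)$ equals $\beta/p_{i,t}(k)$ plus a nonnegative term, so $s'_{i,t}(k)\ge\beta/p_{i,t}(k)$, hence $-X_t = s_t(k)+\beta/p_{i,t}(k)-s'_{i,t}(k)\le s_t(k)\le 1$; thus for any $\lambda\le 1$ the bound $e^{-\lambda X_t}\le 1-\lambda X_t+\lambda^2 X_t^2$ is legitimate. Second, squaring $s'_{i,t}(k)-\beta/p_{i,t}(k)$ and using that the indicators $\{\bfone{k_{-i,t}=j}\}_j$ are mutually exclusive together with $o_t(\cdot,\cdot)^2=o_t(\cdot,\cdot)$, one gets
\[
\rmE{X_t^2\mid\calF_{t-1}} = \mathrm{Var}\!\left(s'_{i,t}(k)\mid\calF_{t-1}\right)\le \frac{1}{K^2 p_{i,t}(k)}\sum_{j\in[K]}\frac{P_t(k,j)}{p_{-i,t}(j)}\le \frac{1}{\gamma\, p_{i,t}(k)},
\]
using $p_{-i,t}(j)\ge\gamma/K$ and $\sum_j P_t(k,j)\le K$. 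Combining the two facts with $1+x\le e^x$,
\[
\rmE{e^{-\lambda(s'_{i,t}(k)-s_t(k))}\mid\calF_{t-1}} = e^{-\lambda\beta/p_{i,t}(k)}\,\rmE{e^{-\lambda X_t}\mid\calF_{t-1}}\le \exp\!\Big(\tfrac{\lambda}{p_{i,t}(k)}\big(\tfrac{\lambda}{\gamma}-\beta\big)\Big),
\]
which is at most $1$ as soon as we pick $\lambda=\gamma\beta$ (this $\lambda$ lies in $(0,1)$ since $\beta,\gamma\in(0,1)$, so the earlier requirement $\lambda\le 1$ holds).

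Finally, peeling off the rounds one at a time exactly as in Lemma~\ref{lemma:g_hat_upper_bounds_p_row_player} gives $\rmE{\exp(-\gamma\beta\sum_{t=1}^T(s'_{i,t}(k)-s_t(k)))}\le 1$, and Markov's inequality then yields $\rmP{\sum_{t=1}^T(s'_{i,t}(k)-s_t(k))\le -\tfrac{1}{\gamma\beta}\ln(2K/\delta)}\le \delta/(2K)$; a union bound over $k\in[K]$ finishes the proof. I expect the variance estimate in the second step to be the only genuine obstacle: unlike $\hat g_{i,t}$, the estimator $s'_{i,t}(k)$ carries randomness from both players, so one must check that the cross terms vanish (they do, by exclusivity of the $\{\bfone{k_{-i,t}=j}\}_j$) and that the second moment scales like $1/\gamma$ rather than $1/\gamma^2$ — it is precisely this extra factor of $\gamma$ that forces the smaller tilt $\lambda=\gamma\beta$ and produces the $\tfrac{1}{\gamma\beta}$ (rather than $\tfrac1\beta$) term in the statement.
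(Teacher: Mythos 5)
Your proof is correct. The paper itself does not prove this lemma --- it imports it as Lemma~10 of \citet{ADB} --- but your argument is exactly the natural adaptation of the paper's own proof of Lemma~\ref{lemma:g_hat_upper_bounds_p_row_player}: the conditional mean $s_t(k)+\beta/p_{i,t}(k)$, the second-moment bound $1/(\gamma\,p_{i,t}(k))$ (where the mutual exclusivity of $\{\bfone{k_{-i,t}=j}\}_j$ kills the cross terms and $\sum_j P_t(k,j)\le K$ absorbs one factor of $K$), and the tilt $\lambda=\gamma\beta$ making the exponent $\frac{\lambda}{p_{i,t}(k)}(\frac{\lambda}{\gamma}-\beta)$ vanish are all verified, and they correctly account for the extra $1/\gamma$ from the second player's importance weight that turns the deviation term into $\frac{1}{\gamma\beta}\ln(2K/\delta)$.
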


The basic regret decomposition idea that was used in \cref{appendix:3s_analysis} to bound the regret of \algs{} can be used for \algdexp{} as well. We define $\dbreg^+_T(j^T)$ and $\dbreg^-_T(j^T)$ as:
\begin{align*}
    \dbreg^+_T(j^T) &= \sum_{t = 1}^T b_t(j_t) - b_t(k_{+1, t}) = \frac{K}{K - 1} \sum_{t = 1}^T s_t(j_t) - s_t(k_{+1, t}) \\
    \dbreg^-_T(j^T) &= \sum_{t = 1}^T b_t(j_t) - b_t(k_{-1, t}) = \frac{K}{K - 1} \sum_{t = 1}^T s_t(j_t) - s_t(k_{-1, t}).
\end{align*}
As before, $\dbreg_T(j^T) = \frac{1}{2}\left(\dbreg^+_T(j^T) + \dbreg^-_T(j^T) \right)$. We only show the bound on $\dbreg^+_T(j^T)$ below. The bound on $\dbreg^-_T(j^T)$, and hence on $\dbreg_T(j^T)$, follows along the lines of \cref{appendix:3s_analysis}. We begin with a simple lemma.

\begin{lemma}
    \label{lemma:sum_psprime_high_prob_bound}
    With probability at least $1 - \delta/2$,
    \begin{align*}
        \sum_{t=1}^T \sum_{k=1}^K p_{+1, t}(k) s'_{+1, t}(k) \leq \sum_{t=1}^T \left(s_t(k_{+1, t}) + \beta K \right) + \frac{\gamma + 1}{\gamma} \sqrt{2T \ln (2/\delta)}.
    \end{align*}
\end{lemma}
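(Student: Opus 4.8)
The plan is to first compute the conditional expectation of the random variable $\sum_{k=1}^K p_{+1,t}(k)\, s'_{+1,t}(k)$ given the past, show that it collapses to something very close to the empirical quantity $s_t(k_{+1,t}) + \beta K$, and then control the deviation between the sum and its conditional expectations by a martingale concentration argument (Azuma--Hoeffding). Concretely, first I would expand, using the definition of $s'_{+1,t}(k)$ from Line~9 of Algorithm~\ref{alg:dexp3_whp},
\[
\sum_{k=1}^K p_{+1,t}(k)\, s'_{+1,t}(k)
= \sum_{k=1}^K p_{+1,t}(k)\left[ \frac{\mathbf 1(k_{+1,t}=k)}{K p_{+1,t}(k)} \sum_{j}\frac{\mathbf 1(k_{-1,t}=j)\, o_t(k,j)}{p_{-1,t}(j)} + \frac{\beta}{p_{+1,t}(k)}\right],
\]
which telescopes to $\frac{1}{K}\sum_{j}\frac{\mathbf 1(k_{-1,t}=j)\, o_t(k_{+1,t},j)}{p_{-1,t}(j)} + \beta K$. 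Taking the conditional expectation with respect to the $\sigma$-algebra generated by everything up to (and including) $\bfp_{+1,t},\bfp_{-1,t},k_{+1,t}$ but \emph{not} $k_{-1,t}$ or the feedback $o_t$, the importance weight on $j$ cancels and the expectation over $o_t$ turns $o_t(k_{+1,t},j)$ into $P_t(k_{+1,t},j)$, leaving $\frac{1}{K}\sum_{j} P_t(k_{+1,t},j) + \beta K = s_t(k_{+1,t}) + \beta K$. This identifies the mean of the summand exactly.

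Next I would set $X_t := \sum_{k=1}^K p_{+1,t}(k)\, s'_{+1,t}(k) - \bigl(s_t(k_{+1,t}) + \beta K\bigr)$, so that $\{X_t\}$ is a martingale difference sequence with respect to the filtration described above. The boundedness needed for Azuma--Hoeffding comes from the fact that $s'_{+1,t}(k)\ge 0$ always and $\sum_k p_{+1,t}(k) s'_{+1,t}(k)$ is, by the telescoping above, at most $\frac{1}{K}\cdot\frac{1}{\min_j p_{-1,t}(j)} + \beta K \le \frac{1}{K}\cdot\frac{K}{\gamma} + \beta K = \frac{1}{\gamma} + \beta K$, using the $\gamma/K$ uniform-exploration floor on $\bfp_{-1,t}$ from Line~5; since the subtracted term lies in $[0, 1+\beta K]$ (as $s_t\in[0,1]$), the range of $X_t$ is $O((1+\gamma)/\gamma)$. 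Applying Azuma--Hoeffding over $t=1,\dots,T$ then gives, with probability at least $1-\delta/2$, $\sum_{t=1}^T X_t = O\!\bigl(\frac{1+\gamma}{\gamma}\sqrt{T\ln(2/\delta)}\bigr)$, and rearranging yields exactly the claimed bound $\sum_{t}\sum_k p_{+1,t}(k) s'_{+1,t}(k) \le \sum_t (s_t(k_{+1,t})+\beta K) + \frac{\gamma+1}{\gamma}\sqrt{2T\ln(2/\delta)}$.

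The main obstacle I anticipate is getting the range bound on the martingale differences tight enough to land the constant $\frac{\gamma+1}{\gamma}$ rather than something looser: this requires carefully using that the double sum collapses (so we divide by only one importance weight, not two, and the $1/K$ prefactor exactly cancels the $K$ terms in the inner sum after bounding each $\mathbf 1(k_{-1,t}=j)/p_{-1,t}(j)$ appropriately), and that $\beta K$ is subtracted off on both sides so it does not inflate the deviation term. A secondary (routine) point is to make sure the filtration is chosen so that $k_{+1,t}$ is measurable but $k_{-1,t}$ and $o_t$ are not, which is exactly the asymmetric conditioning used for the row player in Lemma~\ref{lemma:g_hat_upper_bounds_p_row_player}; everything else is a direct Azuma--Hoeffding application.
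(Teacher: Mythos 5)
Your proposal is correct and follows essentially the same route as the paper: the paper likewise forms the martingale $X_n = \sum_{t\le n}\bigl[\sum_k p_{+1,t}(k)s'_{+1,t}(k) - (s_t(k_{+1,t})+\beta K)\bigr]$, bounds the increments in $[-1,\,1/\gamma]$ using the $\gamma/K$ exploration floor on $\bfp_{-1,t}$ (yielding the width $(\gamma+1)/\gamma$), and applies Azuma's inequality to obtain the stated deviation term. Your extra care about the filtration (conditioning on $k_{+1,t}$ but not $k_{-1,t}$ or $o_t$) and the collapse of the double sum matches what the paper leaves implicit.
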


\begin{proof}
    Let $X_n = \sum_{t=1}^n \left[ \sum_{k=1}^K p_{+1, t}(k) s'_{+1, t}(k) - \left(s_t(k_{+1, t}) + \beta K \right)  \right]$. One can easily show that the sequence $X_1, X_2, \dots$ is a martingale. Using Azuma's inequality,
    \begin{equation}
        \label{eq:azuma_martingale}
        \rmP{X_T \geq \epsilon} \leq \exp \left( -\frac{\epsilon^2}{2 \sum_{t=1}^T c_t^2} \right),
    \end{equation}
    where $c_t = b_t - a_t$ where $a_t \leq X_t - X_{t-1} \leq b_t$ almost surely. We next derive such a $c_t$.
    \begin{align*}
        X_t - X_{t-1} &= \sum_{k=1}^K p_{+1, t}(k) s'_{+1, t}(k) - \left(s_t(k_{+1, t}) + \beta K \right) \\
        &= \sum_{k=1}^K \left[ \frac{\bbI\{ k_{+1, t} = k \}}{K} \sum_{j=1}^K \frac{\bbI\{k_{-1, t} = j\} o_t(k, j)}{p_{-1, t}(j)} + \beta \right] - \frac{1}{K}\sum_{j=1}^K P_t(k_{+1, t}, j) - \beta K \\
        &\leq \frac{1}{\gamma} + \beta K - \beta K = \frac{1}{\gamma} = b_t.
    \end{align*}
    Similarly,
    \begin{align*}
        X_t - X_{t - 1} &= \sum_{k=1}^K p_{+1, t}(k) s'_{+1, t}(k) - \left(s_t(k_{+1, t}) + \beta K \right) \geq \beta K - 1 - \beta K = -1 = a_t.
    \end{align*}
    Therefore, $c_t = b_t - a_t = \frac{\gamma + 1}{\gamma}$. Using this value of $c_t$ in \eqref{eq:azuma_martingale} and setting $\exp \left( -\frac{\epsilon^2}{2 \sum_{t=1}^T c_t^2} \right) \leq \frac{\delta}{2}$ gives $\epsilon = \frac{\gamma + 1}{\gamma} \sqrt{2T \ln \frac{2}{\delta}}$ finishing the proof.
\end{proof}

We now show a high probability regret bound on $\dbreg^+_T(j^T)$.

\begin{lemma}
    \label{lemma:3s_borda_row_player_bound}
    Let $j^T = (j_1, \dots, j_T)$ be an arbitrary sequence of $T$ actions with $S$ switches. The following bound holds with probability at least $1 - \delta$ when $\eta = \left( \frac{S\ln K}{T \sqrt{2K}} \right)^{2/3}$, $\beta = \frac{S^{1/3} \sqrt{\ln (2K/\delta)}}{(2\eta)^{1/4} K^{3/4} \sqrt{T}}$, $\gamma = \sqrt{2 \eta K} \geq \sqrt{\eta K (1 + \beta)}$, and $\alpha = \frac{1}{KT}$:
    \begin{align*}
        \dbreg^+_T(j^T) = \tilde{O}(S^{1/6} K^{-1/3} T^{5/6} + S^{1/2} K^{1/3} T^{2/3}).
    \end{align*}
\end{lemma}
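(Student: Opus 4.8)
The plan is to run the weight-potential argument of \cref{lemma:3s_row_player_bound} for \algs, but with the reward estimate $\hat g_{+1,t}(k)$ replaced by the Borda estimate $s'_{+1,t}(k)$ of Algorithm~\ref{alg:dexp3_whp}, and with \cref{lemma:sum_psprime_high_prob_bound} and \cref{lemma:s_bound_borda_score} playing the roles that \eqref{eq:hat_g_expectation} and \cref{lemma:g_hat_upper_bounds_p_row_player} played there. Writing $W_{+1,t}:=\sum_k W_{+1,t}(k)$, I would first control $\ln(W_{+1,t+1}/W_{+1,t})$: the mixing term in the update together with $e^x\le 1+x+x^2$ for $x\le1$ (legitimate once $\eta s'_{+1,t}(k)\le1$, checked next) gives
\[
\ln\frac{W_{+1,t+1}}{W_{+1,t}} \;\le\; \frac{\eta}{1-\gamma}\sum_{k}p_{+1,t}(k)\,s'_{+1,t}(k) \;+\; \frac{\eta^2}{1-\gamma}\sum_{k}p_{+1,t}(k)\,\big(s'_{+1,t}(k)\big)^{2} \;+\; e\alpha .
\]
Since $s'_{+1,t}(k)=\beta/p_{+1,t}(k)$ for $k\ne k_{+1,t}$ and, for $k=k_{+1,t}$, $s'_{+1,t}(k)\le\big(1/(Kp_{-1,t}(k_{-1,t}))+\beta\big)/p_{+1,t}(k)$, the bound $p_{i,t}(\cdot)\ge\gamma/K$ yields $p_{+1,t}(k)\,s'_{+1,t}(k)\le 1/\gamma+\beta$ and $\eta s'_{+1,t}(k)\le \eta K(1/\gamma+\beta)/\gamma$; with $\gamma=\sqrt{2\eta K}$ this equals $\tfrac12(1+\beta\gamma)\le1$, which is precisely why the side conditions $\eta s'_{+1,t}(k)\le1$ and $\gamma\ge\sqrt{\eta K(1+\beta)}$ hold. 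The quadratic term is then at most $(1/\gamma+\beta)\sum_k s'_{+1,t}(k)$.

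Next I would do the per-interval telescoping. Split $[T]$ into the $S$ blocks $[T_s,T_{s+1})$ on which $j^T$ is constant (equal to $j^s$), put $\Delta^s=T_{s+1}-T_s$, and sum the bound above over $t\in[T_s,T_{s+1})$. For the matching lower bound, the mixing term forces $W_{+1,T_s+1}(j^s)\ge e\alpha W_{+1,T_s}$, so $\ln(W_{+1,T_{s+1}}/W_{+1,T_s})\ge \ln\alpha+\eta\sum_{t\in[T_s,T_{s+1})}s'_{+1,t}(j^s)$. Combining the two bounds, absorbing the quadratic term through $\sum_t\sum_k s'_{+1,t}(k)\le K\max_k\sum_t s'_{+1,t}(k)$ together with $\eta K(1/\gamma+\beta)\le\gamma$, and then invoking the block versions of \cref{lemma:sum_psprime_high_prob_bound} (to turn $\sum_k p_{+1,t}(k)s'_{+1,t}(k)$ into $s_t(k_{+1,t})$) and \cref{lemma:s_bound_borda_score} (to turn $\sum_t s'_{+1,t}(j^s)$ into $\sum_t s_t(j^s)$) should give, with probability $\ge1-\delta$ and for each $s$,
\[
\max_{k\in[K]}\!\!\sum_{t\in[T_s,T_{s+1})}\!\!\! s_t(k) \;-\!\!\!\sum_{t\in[T_s,T_{s+1})}\!\!\! s_t(k_{+1,t}) \;\le\; \beta K\Delta^s + 2\gamma\Delta^s + \tfrac{\gamma+1}{\gamma}\sqrt{2\Delta^s\ln\tfrac2\delta} + \tfrac{1}{\gamma\beta}\ln\tfrac{2K}{\delta} + \tfrac{1}{\eta}\ln\tfrac1\alpha + e\alpha\Delta^s,
\]
where $2\gamma\Delta^s$ comes from $\max_k\sum_{t\in[T_s,T_{s+1})}s_t(k)\le\Delta^s$. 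Summing over $s\in[S]$ with $\sum_s\sqrt{\Delta^s}\le\sqrt{ST}$ (Cauchy--Schwarz) and $\sum_s\Delta^s=T$, using $\sum_s\max_k\sum_{t\in[T_s,T_{s+1})}s_t(k)\ge\sum_t s_t(j_t)$ (as $j_t$ is block-constant), and multiplying through by $K/(K-1)$ bounds $\dbreg^+_T(j^T)$. Finally I would substitute $\eta=(S\ln K/(T\sqrt{2K}))^{2/3}$, $\beta$, $\gamma=\sqrt{2\eta K}$, $\alpha=1/(KT)$ and verify that the dominant contributions are $(1/\gamma)\sqrt{ST\ln(2/\delta)}=\tO(S^{1/6}K^{-1/3}T^{5/6})$ and $(S/(\gamma\beta))\ln(2K/\delta)=\tO(S^{1/2}K^{1/3}T^{2/3})$, every other term ($\beta KT$, $\gamma T$, $(S/\eta)\ln(KT)$, $\alpha T$) being $\tO(S^{1/2}K^{1/3}T^{2/3})$ or smaller.

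The hardest part will be the variance/range control of $s'_{i,t}(k)$: unlike $\hat g_{i,t}(k)$, the Borda estimate is built from a product of two independent importance weights (the row player's own arm and the column player's arm), so its magnitude scales like $1/\gamma^2$ rather than $1/\gamma$. This is exactly what forces $\gamma=\Theta(\sqrt{\eta K})$ here instead of $\Theta(\eta K)$ as in \algs, and hence the more delicate three-way balancing of $\eta,\beta,\gamma$ that produces the two-term bound with the worse $T^{5/6}$ exponent; checking that the stated parameters satisfy $\eta s'_{+1,t}(k)\le1$ and $\gamma\ge\sqrt{\eta K(1+\beta)}$ and that the exponent bookkeeping closes is the one genuinely fiddly point. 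Everything else is a faithful transcription of the \algs{} argument, and the bound on $\dbreg^-_T(j^T)$ — hence on $\dbreg_T(j^T)=\tfrac12(\dbreg^+_T(j^T)+\dbreg^-_T(j^T))$ — follows verbatim with the row and column players interchanged.
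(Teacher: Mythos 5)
Your proposal is correct and follows essentially the same route as the paper's proof: the same weight-potential/telescoping argument over the $S$ blocks, with Lemma \ref{lemma:sum_psprime_high_prob_bound} and Lemma \ref{lemma:s_bound_borda_score} standing in for \eqref{eq:hat_g_expectation} and Lemma \ref{lemma:g_hat_upper_bounds_p_row_player}, the quadratic term absorbed via $\eta K(1/\gamma+\beta)\le\gamma$, and the same final bookkeeping identifying $(1/\gamma)\sqrt{ST}$ and $S/(\gamma\beta)$ as the two dominant terms. You also correctly pinpoint the one genuinely new ingredient relative to \algs{} — the $1/\gamma^2$ range of the doubly importance-weighted estimate $s'_{i,t}(k)$, which is exactly the paper's side condition $\tfrac{\eta K}{\gamma^2}(1+\gamma\beta)\le 1$ forcing $\gamma=\sqrt{2\eta K}$.
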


\begin{proof}
Divide the time horizon into sub-intervals $[T_1, T_2), [T_2, T_3), \dots, [T_S, T_{S + 1})$ as in \cref{appendix:3s_analysis}. Using \cref{lemma:sum_psprime_high_prob_bound} and the condition $\frac{\eta K}{\gamma^2} (1 + \gamma \beta) \leq 1$, and performing a similar calculation as in \cref{appendix:3s_analysis} yields for any action $j^s$, with probability at least $1 - \delta/2$,
\begin{align*}
    -\ln \frac{1}{\alpha} + \eta \sum_{t = T_s}^{T_{s + 1} - 1} s'_{+1, t}(j^s) \leq \ln \frac{W_{+1, T_{s + 1}}}{W_{+1, T_s}} \leq &\frac{\eta}{1 - \gamma} \beta K \Delta^s + \frac{\eta}{1 - \gamma} \sum_{t = T_s}^{T_{s + 1} - 1} s_t(k_{+1, t}) + \frac{\eta (1 + \gamma)}{\gamma (1 - \gamma)} \sqrt{2 \Delta^s \ln\frac{2}{\delta}} \\
    &\frac{\eta^2 (1 + \gamma\beta)}{\gamma(1 - \gamma)} \sum_{t = T_s}^{T_{s + 1} - 1} \sum_{k=1}^K s'_{+1, t}(k) + e\alpha K \Delta^s,
\end{align*}
where $\Delta^s = T_{s + 1} - T_s$ is the length of the $s^{th}$ sub-interval. Simplifying as in \cref{appendix:3s_analysis}, we get with probability at least $1 - \delta/2$,
\begin{align*}
    \sum_{t = T_s}^{T_{s + 1} - 1} s_t(k_{+1, t}) + \frac{1 + \gamma}{\gamma} \sqrt{2\Delta^s \ln (2/\delta)} \geq (1 - 2\gamma) \max_{k \in [K]} \sum_{t=T_s}^{T_{s+1}-1} s'_{+1, t}(k) - \left( \frac{e\alpha}{\eta} + \beta\right)K \Delta^s - \frac{1}{\eta} \ln \frac{1}{\alpha}.
\end{align*}
Using \cref{lemma:s_bound_borda_score}, we get with probability at least $1 - \delta$
\begin{align*}
    \sum_{t = T_s}^{T_{s + 1} - 1} s_t(k_{+1, t}) + \frac{1 + \gamma}{\gamma} \sqrt{2\Delta^s \ln (2/\delta)} &\geq (1 - 2\gamma) \max_{k \in [K]} \sum_{t=T_s}^{T_{s+1}-1} s_t(k) - \left( \frac{e\alpha}{\eta} + \beta\right)K \Delta^s - \frac{1}{\eta} \ln \frac{1}{\alpha} - \frac{1}{\gamma\beta} \ln \frac{2K}{\delta} \\
    &\geq (1 - 2\gamma) \sum_{t=T_s}^{T_{s+1}-1} s_t(j^s) - \left( \frac{e\alpha}{\eta} + \beta\right)K \Delta^s - \frac{1}{\eta} \ln \frac{1}{\alpha} - \frac{1}{\gamma\beta} \ln \frac{2K}{\delta}.
\end{align*}
Here, recall that $j^s$ is the constant arm in the sequence $j^T = (j_1, j_2, \dots, j_T)$ in the sub-interval $[T_s, T_{s + 1})$. Noting that $s_t(k) \leq 1$ for any arm $k$, summing over $s = 1, \dots, S$, and rearranging, we get,
\begin{align*}
    \sum_{t=1}^T \left( s_t(j_t) - s_t(k_{+1, t}) \right) \leq 2 \gamma T + \frac{1 + \gamma}{\gamma} \sqrt{2 S T \ln (2/\delta)} + \left( \frac{e\alpha}{\eta} + \beta\right)K T + \frac{S}{\eta} \ln \frac{1}{\alpha} + \frac{S}{\gamma\beta} \ln \frac{2K}{\delta}.
\end{align*}
Setting $\eta = \left( \frac{S\ln K}{T \sqrt{2K}} \right)^{2/3}$, $\beta = \frac{S^{1/3} \sqrt{\ln (2K/\delta)}}{(2\eta)^{1/4} K^{3/4} \sqrt{T}}$, $\gamma = \sqrt{2 \eta K} \geq \sqrt{\eta K (1 + \beta)}$, and $\alpha = \frac{1}{KT}$ finishes the proof.

\end{proof}

\end{document}